\documentclass{article}

\pdfoutput=1





\usepackage[nonatbib,preprint]{neurips_2020}

\newif\ifpreprint
\makeatletter\def\ifpreprint{\if@preprint}\makeatother

\usepackage[largesc]{newtxtext}
\usepackage[T1]{fontenc} 
\usepackage{amsmath,mathtools}
\usepackage{amsthm,thmtools,thm-restate}
\usepackage{amssymb} 
\usepackage{nicefrac}
\usepackage[smallerops,varg]{newtxmath}
\usepackage{mathrsfs}
\usepackage{dsfont}
\usepackage{bm} 

\usepackage{xifthen}
\usepackage{etoolbox}

\usepackage{microtype}
\usepackage[inline,shortlabels]{enumitem}
\usepackage[dvipsnames]{xcolor}
\usepackage{setspace}
\usepackage{booktabs}
\usepackage{algpseudocode,algorithm}


\usepackage[numbers,compress]{natbib}

\usepackage{varioref} 
\PassOptionsToPackage{hyphens}{url} 
\usepackage[pdfusetitle,bookmarks]{hyperref}
\usepackage{bookmark}
\usepackage[doipre=doi:,mailtopre=]{uri} 
\usepackage[capitalise,nameinlink]{cleveref} 

\hypersetup{ %
  pdfborder=0 0 0,
  pdfpagemode=UseNone,
  colorlinks=true,
  pdfstartview=FitH,
  bookmarksnumbered,
  bookmarksopen,
  bookmarksopenlevel=2
}

\usepackage[
disable,
,obeyFinal
,textsize=small]{todonotes}
\makeatletter
\renewcommand{\todo}[2][]{\@todo[noline,size=\scriptsize,#1]{#2}}
\makeatother
\newcommand{\todoc}[2][]{\todo[color=blue!20!white,#1]{Csaba: #2}}
\newcommand{\todor}[2][]{\todo[color=orange!20!white,#1]{Roshan: #2}}

\providecommand\phantomsection[1]{}


\mathtoolsset{mathic}






\hypersetup{%
 linkcolor=Purple,
 citecolor=RoyalPurple,
 filecolor=MidnightBlue,
 urlcolor=MidnightBlue,
}

\newtagform{colored}{\color{WildStrawberry}(}{)}
\usetagform{colored}

\newcommand{\ifequals}[2]{\ifthenelse{\equal{#1}{#2}}}
\providecommand{\eprint}[2][]{%
  \ifequals{#1}{arXiv}{\arxiv{#2}}
  {\ifthenelse{\isempty{#1}}{#2}{#1:#2}}%
}


\Crefname{claim}{Claim}{Claims}



\crefname{footnote}{footnote}{footnotes}
\crefname{page}{page}{pages}

\crefname{equation}{}{}

\newtagform{uncolored}{(}{)}
\makeatletter
\creflabelformat{equation}{\usetagform{uncolored}#2\tagform@{#1}#3}
\makeatother

\def\eqref{\cref}



\renewcommand{\epsilon}{\varepsilon}

\renewcommand{\vec}[1]{\bm{#1}}
\newcommand{\wildcard}{\mathinner{\,{\cdot}\,}}
\newcommand{\defeq}{\coloneq}
\newcommand{\eqdef}{\eqcolon}
\newcommand{\inv}[1]{#1^{-1}}
\newcommand{\Real}{\mathds{R}}

\renewcommand{\Pr}{\mathds{P}}

\newcommand{\argmin}{\operatorname*{arg\,min}}

\newcommand\given[1][\delimsize]{%
  \providecommand{\delimsize}{}
  \nonscript\:#1\vert\allowbreak\nonscript\:\mathopen{}
}

\DeclarePairedDelimiter{\abs}\lvert\rvert
\DeclarePairedDelimiter{\paren}{\lparen}{\rparen}
\DeclarePairedDelimiter{\brck}{\lbrack}{\rbrack}
\DeclarePairedDelimiterX{\set}[1]\lbrace\rbrace{#1}

\DeclarePairedDelimiterX{\innerp}[2]\langle\rangle{#1,#2}
\DeclarePairedDelimiterXPP{\Prob}[1]{\Pr}{\lparen}{\rparen}{}{#1}
\DeclarePairedDelimiterXPP{\PrSet}[1]{\Pr}{\lbrace}{\rbrace}{}{#1}
\DeclarePairedDelimiterXPP{\Ex}[1]{\mathds{E}}{\lbrack}{\rbrack}{}{#1}
\DeclarePairedDelimiterXPP{\Exx}[2]{\mathds{E}_{#1}}{\lbrack}{\rbrack}{}{#2}
\DeclarePairedDelimiterXPP{\Var}[1]{\mathrm{Var}}{\lbrack}{\rbrack}{}{#1}
\DeclarePairedDelimiterXPP{\One}[1]{\mathds{1}}\{\}{}{#1}
\DeclarePairedDelimiterXPP{\norm}[2]{}\lVert\rVert{_{#1}}{#2}


\newcommand\transpsymbol{\text{\upshape\sffamily\bfseries\tiny{T}}}
\providecommand\transp{}
\renewcommand{\transp}[1]{{#1}^{\transpsymbol}}


\declaretheorem[style=plain]{theorem}
\declaretheorem[style=plain,sibling=theorem]{lemma}
\declaretheorem[style=plain,sibling=theorem]{corollary}
\declaretheorem[style=definition,sibling=theorem]{proposition}

\declaretheorem[style=remark,sibling=theorem]{claim}

\declaretheorem[style=definition,numbered=no]{assumptions}

\newenvironment{assumptions*}[1]{%
  \begin{assumptions}
    #1
    \begin{enumerate}[left=0pt]
      \setcounter{enumi}{\theassumption}
      \newcommand{\assume}[1][]{\item\label[assumption]{##1}}
    }{%
      \setcounter{assumption}{\theenumi}
    \end{enumerate}
  \end{assumptions}%
}

\Crefname{assumption}{Assumption}{Assumptions}

\newcommand{\States}{\mathcal{S}}

\newcommand{\CoreStates}{{\mathcal{S}_*}}
\newcommand{\nCoreStates}{m}
\newcommand{\Actions}{\mathcal{A}}
\newcommand{\Span}{\operatorname{span}}
\newcommand{\vPhi}{\vec{\Phi}}
\newcommand{\vPhiCore}{\vec{\Phi}_*}

\newcommand{\vphi}{\vec{\varphi}}

\newcommand{\vdelta}{\vec{\delta}}
\newcommand{\vtheta}{\vec{\theta}}
\newcommand{\veta}{\vec{\eta}}
\newcommand{\vlambda}{\vec{\lambda}}
\newcommand{\vmu}{\vec{\mu}}
\newcommand{\vpi}{\vec{\pi}}
\newcommand{\vrho}{\vec{\rho}}
\newcommand{\vxi}{\vec{\xi}}
\newcommand{\vP}{\vec{P}}
\newcommand{\vE}{\vec{E}}
\newcommand{\vB}{\vec{B}}

\newcommand{\vW}{\vec{W}}
\newcommand{\vWstar}{{\vW_{\!*}}}
\newcommand{\vZ}{\vec{Z}}

\newcommand{\vece}{\vec{e}}

\newcommand{\vecr}{\vec{r}}
\newcommand{\vecv}{\vec{v}}
\newcommand{\vecx}{\vec{x}}
\newcommand{\vecy}{\vec{y}}
\newcommand{\epsapprox}{\epsilon_{\mathrm{approx}}}
\newcommand{\epsopt}{\epsilon_{\mathrm{opt}}}
\newcommand{\epx}{\epsapprox}
\newcommand{\Bee}{\mathcal{B}}
\ifpreprint
  \newcommand{\AppendixName}{Appendix}
\else
  \newcommand{\AppendixName}{Supplementary Material}
\fi
\newcommand{\Appendix}{\hyperref[appendix]{\AppendixName}}

\title{Efficient Planning in Large MDPs with \texorpdfstring{\\}{}Weak Linear Function Approximation}

\author{%
  Roshan Shariff\\
  University of Alberta \& Amii\\
  \mailto{roshan.shariff@ualberta.ca}\\
  \And{}
  Csaba Szepesv\'ari\\
  DeepMind \& University of Alberta \& Amii\\
  \mailto{szepesva@ualberta.ca} \\
}

\hypersetup{%
 pdfauthor={Roshan Shariff \& Csaba Szepesv\'ari},
 pdfsubject={Submitted to the 34th Conference on Neural Information Processing Systems (NeurIPS 2020), Vancouver, Canada.},
 pdfkeywords={}
}

\begin{document}

\phantomsection{}%
\makeatletter\addcontentsline{toc}{chapter}{\@title}\makeatother

{%
  \hypersetup{hidelinks}%
  \maketitle%
}

\begin{abstract}
Large-scale Markov decision processes (MDPs) require planning algorithms with runtime independent of the number of states of the MDP\@.  We consider the planning problem in MDPs using linear value function approximation with only weak requirements: low approximation error for the optimal value function, and a small set of ``core'' states whose features span those of other states. In particular, we make no assumptions about the representability of policies or value functions of non-optimal policies. Our algorithm produces almost-optimal actions for any state using a generative oracle (simulator) for the MDP, while its computation time scales polynomially with the number of features, core states, and actions and the effective horizon.
\end{abstract}

\section{Introduction}%
\label{sec:introduction}

Markov decision processes (MDPs) are a common model for sequential
decision making under uncertainty and have a wide range of applications
\citep[see][for example]{
White93:Apps,
rust96:book,
BouDi17:MDPPractice}.  We consider planning in large-scale,
expected discounted total reward MDPs.  Computing an optimal policy in the
discounted setting is known to require ``reading'' all states at least
once~\citep{BlonTsi:00Complexity}. %
As the state space for most interesting applications is intractably
large if not infinite (``Bellman's curse of dimensionality''), it is
common to consider restrictions to the problem that can allow
efficient calculation of near-optimal actions. 
One such relaxation is ``online planning'' --- we ask only for a good
action at a given state when the MDP can be accessed through a
simulator \citep{kearns2002sparse,kolobov2012planning}.
While in this problem the complexity of computing a ``good action''
can be independent of the number of states, the complexity is 
exponential in the planning horizon \citep{kearns2002sparse}.
An alternative idea,
which can be traced back to at least the work of \citet{SchSei85},
is to assume that one has
access to a \emph{feature representation} (that is, a vector of
features for each state) and the planner needs to work well
for those MDPs where the \emph{optimal value function} of the MDP can
be uniformly well approximated over all states by some appropriate weighted
combination of the features.
Since an accurate approximation of the optimal value function
is known to be sufficient to generate near-optimal behavior,%
\footnote{See \cref{prop:policy-performance}; or, for example,
\citet[Lemma~5.17]{szepesvari2001}, \citet[Lemma~5]{kearns2002sparse},
\citet[][Theorem~3.7]{Kall17}.} %
the problem simplifies to producing a good estimate of the
unknown feature weights 
with a computation cost that is independent of the number of states.

In this paper we consider the intersection of these two problem formulations.
More precisely,
our goal is to construct planning algorithms that produce an action
for any given input state, using black-box access to the MDP through a
simulator which takes a state and an action as input and produces a
random next state and immediate reward.  The planner
can also access the feature representation of any state as a
$d$-dimensional feature vector.  Assume that the optimal value
function of the MDP can be uniformly well approximated --- to an
accuracy of $\epx$ --- as a linear combination of the features with
fixed, but unknown, coefficients.  We ask for a randomized planning
algorithm that interacts with the simulator
$\mathrm{poly}(1/\epsilon, d, H, A, \dots)$ times to produce an
action, such that following this action in every state results in an
$O(\epsilon+c \epx)$-optimal policy. Here $H=1/(1-\gamma)$ is the
effective planning horizon for the discount factor $0\le \gamma <1$, which is
used in the definition of the values of policies; $A$ is the number of
actions; and $c>0$ is an error inflation factor that may depend on
$\gamma$, $d$, and $A$;

We call the features ``weak'' as we only require the optimal value
function to be accurately representable by their linear combinations,
in contrast to ``strong'' features that can accurately represent the
value functions of \emph{all} policies; in this
latter case the problem of efficient planning in the presence of a
simulator is known to have a solution, both in the episodic and
discounted settings \citep{YW19,LaSzeGe19}. As pointed out by
\citet{du2019good}, with only weak features, the problem of
efficient planning has not yet been solved.

\paragraph*{Our Contributions} We design a randomized algorithm that
positively answers the challenge posed above under one extra assumption
--- that the feature vectors of all states lie within the convex hull
of the feature vectors of a few selected ``core states'' that the
algorithm is given.  In particular, we show that, as required,
the query-complexity and runtime of our algorithm is polynomial in the
relevant quantities and the number of core states, providing a partial
positive answer to the previously open problem of efficient planning
in the presence of weak features.

To achieve our result, we start from the \emph{approximate linear
  programming} (ALP) approach where the value function is approximated
using the feature vectors.  Following \citet{Lakshminarayanan2018}, we
construct a \emph{relaxed} ALP that drops all the constraints except
at the core states.  In their work, \citeauthor{Lakshminarayanan2018}
gave bounds on the error of the value function that is obtained from
solving this relaxed ALP\@.
The authors also suggested a way to turn this error bound into an
efficient planning method though without a detailed analysis.  The
main contribution of the present work is to fill this gap, in addition
to simplifying, strengthening and streamlining the earlier results.
In particular, we propose using a randomized saddle-point solver %
\todoc{Look up Theorem 6 of \citep{petrik}}%
that substantially reduces the computational requirements compared to the
procedure hinted at by \citep{Lakshminarayanan2018}. %
\todor{(Other works don't (directly) make this assumption)---\\Along the way, we also remove a limitation of previous saddle-point
algorithms of RL which needed a bound on the unknown feature weight
vector, which in general is not available.}

\paragraph*{Paper Organization}\label{sec:paper-org}

The rest of the paper is organized as follows:
\Cref{sec:background,sec:lp-approach} give background on MDPs and
introduce the linear programming (LP) approach to planning.
\Cref{sec:problem-statement} formally defines the problem.  Then, in
\cref{sec:corelp}, we present the linear program that we start with
and give our first results, bounding the value loss of the policy that
can be read out from optimal solutions of the linear program.
\Cref{sec:algorithm} gives the efficient algorithm to solve the linear
program and our main result.  \Cref{sec:related} discusses related
work. The paper is concluded in \cref{sec:conc}.  The proofs of the
results are moved to \cref{sec:proofs}%
\ifpreprint\else
\ in the \Appendix%
\fi.

\paragraph*{Notation}%
\label{sec:notation}

The set of real numbers is denoted by $\Real$, whereas
$\Real_+=\lbrack 0,\infty \rparen$.  $\Real^d$ denotes the vectors with $d$
dimensions, while the $m \times n$ matrices are $\Real^{m \times n}$.  We
use bold letters for vectors $(\vecr)$ and bold capitals for matrices
$(\vP)$; their elements are written as $r_i$ and $P_{i,j}$ and matrix
rows are $\vP_i$.  For vectors of identical dimension,
$\vec x \le \vec y$ means element-wise comparison: $x_i \le y_i$ for
each index $i$.  The standard basis vector $\vece_i$ has $e_{i,i}=1$
and $e_{i,j}=0$ for $i \neq j$, and the constant 0 or 1 vector is
denoted by $\vec0,\vec1\in\smash{\Real^d}$; their dimensions depends on the context.
All vectors are considered column vectors by default.  The probability
simplex over any finite set $\Actions$ is denoted $\Delta_\Actions \defeq
\set{\vec p \in \smash{\Real_+^{\abs{\Actions}}} \given \norm{1}{p} = 1}$.
For a finite set $\States$ with cardinality $S=\abs{\States}$, we will think of
functions $v:\States\to\Real$ or $\varphi:\States\to\smash{\Real^d}$ as
vectors or matrices, respectively, and use both notations: for
example, $\vec v\in\smash{\Real^S}$, $v_s$, or $v(s)$; and
$\vPhi\in\Real^{S\times d}$, $\vphi_s$, or $\vphi(s)$ where $s\in \States$.
When the domain takes the form $\States \times \Actions$ with respective cardinalities $S$ and $A$,
we use intuitive double indices of the form $sa$, e.g., with $r: \States \times \Actions \to \Real$
we index components of $\vec r$ using the notation $r_{sa}$ (i.e., $r_{sa}=r(s,a)$).
In this case we also write $\vec r\in \Real^{SA}$.

For convenience, an \nameref{sec:notation-index} section is included in the \Appendix.

\subsection{Background}%
\label{sec:background}

A (finite, discounted) Markov Decision Process (MDP) is defined by the
entities $(\States, \Actions, \vec P, \vec r, \gamma)$ where $\States$
and $\Actions$ are finite sets of states and actions, respectively.
Without loss of generality we let $\States = [S]$ and $\Actions = [A]$, using the notation 
$[n]= \{1,\dots,n\}$ for integers $n>0$.
When the process is in state $s\in \States$ and action $a\in \Actions$ is chosen,
a random reward is received with expectation $r_{sa}\in \Real$ and the process transitions
to a new state $s'\in\States$ with probability $P_{sa,s'}$. 
For convenience, we arrange the transition probabilities into a matrix
$\vec P\in\Real^{SA\times S}$ and the rewards into a vector
$\vec r \in \Real^{SA}$. Thus, $\vec P$ is a \emph{row-stochastic matrix} ---
each row $\vec P_{sa}$ for a state $s$ and action $a$ is a valid
probability distribution for the next state.  

For our purposes it will be sufficient to consider stationary policies,
which we will just call \emph{policies}.
A policy $\pi:\States\to\Delta_{\Actions}$ is a function from states
to probability distributions over actions --- we use
$\pi(a|s)$ to denote the probability assigned by $\pi$ to 
action $a$ in state $s$.
Following a policy means that upon visiting state $s$, an action $a\sim\pi(s)$ is chosen at random.
This gives rise to an infinite sequence of states, actions, and corresponding rewards.
The \emph{value} $v_\pi(s)$ of a policy for a process starting at state $s\in \States$ is defined as the
total expected $\gamma$-discounted sum of the rewards incurred: %
\begin{align}
  \vecv_{\pi}
  &= \sum_{t=0}^\infty {(\gamma\vP_\pi)}^t \vecr_\pi,
  &\text{where } {[\vecr_{\pi}]}_{s} &= \sum_{a\in\Actions} \pi(a|s)\,r_{sa},
  &\text{and } {[\vP_\pi]}_{s,s'} &= \sum_{a\in\Actions} \pi(a|s) \, P_{sa,s'};
                              \label{eq:value-function}
\end{align}
$\vecr_{\pi} \in \Real^S$ is the expected reward and
$\vP_\pi\in\Real^{S\times S}$ is the state transition matrix.
A policy $\pi^*$ is called \emph{optimal} if 
$\vecv_{\pi^*}\ge \vecv_\pi$ for every policy $\pi$, where the inequality is component-wise.
Every MDP has an optimal policy, and all optimal policies have
the same value function $\vecv^*$, the \emph{optimal value function}.
Further, there always exist \emph{deterministic} optimal policies,
which concentrate all their probability on a single action for each state.
We will also need $\vec q^*\in \Real^{SA}$, which is defined via $\vec q^* = \vecr+\gamma \vP \vecv^*$.

\subsection{Linear Programming}%
\label{sec:lp-approach}

Our approach to the MDP planning problem is based on the standard
linear programming (LP) formulation; for details, see
\citet[\S6.9]{Puterman}, who swaps the primal and dual problems:
\begin{alignat}{5}
  v^*(s_0)
  &= \min
  &&\,\big\lbrace\,\transp\vece_{\!s_0}\vecv
  &&\,\given[\big] \,\vecv\in\Real^S,
  &\vecr + (\gamma\vP - \vE)\vecv \le{} &\vec0
  &&\;\big\rbrace \label{eq:primal-lp}\tag{Primal LP}
  \\
  &= \max
  &&\,\big\lbrace\,\transp\vmu\vecr
  &&\,\given[\big] \,\vmu\in\Real_+^{SA},\hspace{0.5em}
  &\vece_{\!s_0} + \transp\vmu(\gamma\vP - \vE) ={} &\vec0
  &&\;\big\rbrace. \label{eq:dual-lp}\tag{Dual LP}
\end{alignat}
The matrix $\vec E:\Real^{SA\times S}$ has elements $E_{sa,s} = 1$ and
$E_{sa,s'}=0$ for $s \neq s'$.  It maps vectors from $\Real^S$ to
$\smash{\Real^{SA}}$ by duplicating their elements over all actions:
${[\vec E \vec v]}_{sa} = v_s$ for all $s, a$.  
Both the primal and dual optimization problems above have the same
optimal value: the optimal value of state $s_0$.
The dual variables $\vmu$ are \emph{discounted state-action
  occupancy measures} of a policy $\pi$ starting at state $s_0$:
\begin{align}
  \transp\vmu
  &= \transp\vrho_{\pi} \sum_{t=0}^\infty {(\gamma\tilde{\vP}_\pi)}^t,
  &\text{where }
    {[\vrho_\pi]}_{sa}
  &= \pi(a|s_0) \, e_{\!s,s_0},
  &\text{and } {[\tilde\vP_\pi]}_{sa,s'a'}
  &= \pi(a'|s') {[\vP_\pi]}_{sa,s'}; \label{eq:occupancy-measure}
\end{align} 
$\vrho_\pi \in \Delta_{\States\times\Actions}$ is the initial distribution over
state-action pairs and $\tilde\vP_{\pi} \in \Real^{SA \times SA}$ is
the state-action transition probability matrix; the state transition
matrix $\vP_{\pi}$ is defined
in~\cref{eq:value-function}.  The constraint
$\vece_{\!s_0}+\transp\vmu(\gamma\vP - \vE) = \vec0$ enforces that $\vmu$
has this form, and is therefore generated by some stochastic policy.
Thus the dual problem can be seen as a linear formulation of policy
optimization, where policies are represented by their occupancy
measures and the objective is to maximize expected discounted reward
--- in particular, the policy corresponding to any feasible $\vmu$ can
be obtained by conditioning on the state:
$\pi_{\vmu}(a|s) = \mu_{sa}/\sum_{a'}\mu_{sa'}$ for any state with
non-zero occupancy measure.

The \emph{approximate linear program} (ALP) of \citet{SchSei85} is
obtained from~\cref{eq:primal-lp} by restricting $\vecv$ to lie in
the span of a \emph{feature matrix} $\vPhi\in\Real^{S\times d}$ ---
$\vecv=\vPhi\vtheta$ for $\vtheta\in\Real^d$.  This reduces the number
of variables from $S$ to the feature dimension $d$, but is still
intractable to solve because of the many constraints --- one for each
state-action pair.  The \emph{relaxed ALP} of
\citet{Lakshminarayanan2018} addresses this issue by keeping only a
small number of constraints that are positive linear combinations of
the original constraints; this is the foundation of our approach.

\section{Problem Definition}%
\label{sec:problem-statement}

In the \emph{online MDP planning problem}, 
a randomized planner is given a state of the MDP $s_0\in \States$ as input,
and needs to return an action \citep[e.g.,][]{kearns2002sparse,FeCa14}.
Letting $\pi(a|s_0)$ denote the probability that action $a$ is returned for input $s_0$,
the planner's \emph{value loss} at state $s$ is defined as $v^*(s)-v_\pi(s)$.
The goal is to design planning algorithms with a small value loss for
\emph{every} start state $s$ regardless of the MDP\@.

For large MDPs, we want the computation time to be
independent of the number of states $S$ and depend polynomially on 
the number of actions and the ``planning horizon'' $H=1/(1-\gamma)$.
To make this even remotely possible, 
we assume that the planner has access to a suitable 
feature map $\vphi:\States \to \Real^d$ (\cref{ass:function-approx} below),
is given a suitable set of ``core states'' (\cref{ass:core-states}),
and can access a simulator of the MDP (\cref{ass:simulator}).%
\footnote{\label{note:lower-bound}%
  With no additional assumptions on the MDP, any online planning
  algorithm implementing an $\epsilon$-suboptimal policy may need up
  to $\smash{\Omega({(1/\epsilon)}^{H-1})}$ simulator queries to find each
  action; this is exponential in the planning horizon $H$ for any
  constant $\epsilon > 0$ --- see \citet[Theorem~2]{kearns2002sparse},
  noting that their $H$ is different from ours.} %
Further, the planner is only required to perform well if the optimal value function
lies uniformly close to the span of the features, which means that
\begin{align}
  \epsapprox
  &\defeq \adjustlimits \inf_{\vtheta\in \Real^d}\max_{s\in \States}\;\abs{v^*(s) - \transp\vphi_s\vtheta}
\label{eq:epsapprox}
\end{align}
is small. To be precise, the planner's value loss is allowed to
degrade with $\epsapprox$.  Note that the class of MDPs where
$\epsapprox$ is small for a given feature map is a strict superset of
those which are nearly linear up to the error $\epsapprox$ in the
sense of \citet{jin2019provably}.  Hence, we call the features
\emph{weak} because we only require that $\epsapprox$ as defined
in~\cref{eq:epsapprox} be small.  \citet{du2019good} posed the open
problem of designing efficient online planning algorithms under this
condition.

\begin{assumptions*}{For the convenience of the reader, we now restate our assumptions in a concise form:}
\assume[ass:function-approx] \emph{Features:}
  The planner can access $\vphi(s)\in\Real^d$ for any state $s\in
  \States$.  Further, there is some
  $\veta\in\Real^d$ such that $\transp\vphi_s\veta = 1$ for all
  $s$ --- this can be ensured easily by adding a
  ``bias'' feature that is always 1.
\assume[ass:core-states] \emph{Core States:}
  There is a set of core states $\CoreStates\subset\States$ (with
  $\abs{\CoreStates} = \nCoreStates$) that are available to the
  algorithm, and the feature vector of every other state can be written
  as a positive linear combination of the core state features: $\vPhi =
  \vZ\vPhiCore$ for some non-negative matrix
  $\vZ\in\Real_+^{S\times\nCoreStates}$, where $\vPhi\in\Real^{S\times
    d}$ and $\vPhiCore\in\Real^{\nCoreStates\times d}$ consist of the
  stacked feature vectors for all states and the core states,
  respectively.
\assume[ass:simulator] \emph{Simulator:}
  The planner can call a randomized function $\textsc{Simulate}(s, a)$
  that returns $s' \sim \vP_{sa}$ and a reward $\hat r$ with $\Ex{\hat r}
  = r_{sa}$.  For simplicity, we assume $\abs{\hat r} \le 1$.
\end{assumptions*}

We do not need to explicitly assume that the feature vectors of all
states lie within the convex hull of the core state features --- that
is a consequence of \cref{ass:function-approx,ass:core-states}
(specifically, $\vec1\in\Span\vPhi$).%
\footnote{We have
  $\vZ\vec1 = \vZ\vPhiCore\veta = \vPhi\veta = \vec1$ for some
  $\veta\in\Real^d$, so the rows of $\vZ$ must sum to one.} %
Under \cref{ass:core-states}, each of the core states is a ``soft
state aggregation'' \citep{SinghSoftAggregation1995} that respects
the feature representation.  \citet{Zanette2019}
impose a similar requirement for core states, observing that
``anchoring'' the value function at states with ``extreme'' feature
representations (i.e., on the boundary of the convex hull of feature
vectors) results in the values of other states being
\emph{interpolated} (not \emph{extrapolated}) from the values of the
core states using their respective feature representations --- this is
sufficient to accurately deduce the values of all states when $\epx$
is small \citep{YW19}.

Without assuming extra structure, finding a set of core states
requires checking the feature vectors of all states, which is
intractable in large MDPs. It remains an interesting question what
extra structure would make it possible to discover near-minimal core
sets with an effort independent of the size of the MDP\@.  We also
note that for some feature maps the size of the minimal core set can
be as large as the number of states $S$.  Since the run time of our
algorithm depends on the size of the core set, one should obviously
avoid such feature maps. It remains an intriguing question whether
requiring a small core set is necessary for efficient planning.%

The algorithm we design can work with weaker assumptions --- it only
accesses $\vphi(s)$ for the planning state $s_0$, the core states, and for the
next state $s'$ produced by the simulator: $(s', \hat r) \gets
\textsc{Simulate}(s, a)$, which it only queries with $s\in\set{s_0}\cup\CoreStates$.
We also note in passing that the $\vZ$ matrix of
\cref{ass:core-states} is not used by the algorithm and need not be
known; it need only exist.  Finally, our results also hold if the core
states are so-called ``meta-states'' --- probability distributions
over underlying MDP states; the simulator must then be able to sample
states according to these probability distributions.

\section{CoreLP --- A Linear Program for Planning with Core States}%
\label{sec:corelp}

Throughout this section, we will use $s_0\in\States$ to
refer to the current planning state --- our goal is to output a
random close-to-optimal action $a \sim \pi(s_0)$.
Consider the following result, which follows immediately from the
well-known ``performance difference lemma''
\citep[Lemma~6.1]{KakadeLangford2002}:
\begin{proposition}\label{prop:policy-performance}
  Let $\pi$ be an arbitrary policy. Then,
  \begin{align*}
    \max_{s\in \States} v^*(s)-v_\pi(s)
    &\le \frac1{1-\gamma} \max_{s_0\in \States} \Exx{a\sim\pi(s_0)}{v^*(s_0) - q^*(s_0,a)}.
  \end{align*}
  \vspace{0pt}
\end{proposition}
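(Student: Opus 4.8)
The plan is to reduce the inequality to an exact identity for the value-loss vector $\vecv^* - \vecv_\pi$ and then finish with a one-line averaging (convexity) argument; this is precisely the content of the performance difference lemma, but it is cleanest to derive the identity directly from the definitions in \cref{eq:value-function}. Let $\pi^*$ be an optimal policy, so that $\vecv^* = \vecv_{\pi^*}$ and $\vec q^* = \vecr + \gamma\vP\vecv^*$; recall that $v^*(s) = \max_{a} q^*(s,a) \ge q^*(s,a)$ for every action $a$, so the per-state suboptimality gap $[\vecg_\pi]_s \defeq \Exx{a\sim\pi(s)}{v^*(s) - q^*(s,a)}$ is nonnegative.

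First I would establish the identity $\vecv^* - \vecv_\pi = (\Eye - \gamma\vP_\pi)^{-1}\vecg_\pi$. Starting from $\vecv_\pi = (\Eye - \gamma\vP_\pi)^{-1}\vecr_\pi$ (which is just the closed form in \cref{eq:value-function}), it suffices to check componentwise that $(\Eye - \gamma\vP_\pi)\vecv^* - \vecr_\pi = \vecg_\pi$. Expanding the left-hand side at state $s$ and using $\gamma[\vP\vecv^*]_{sa} = q^*(s,a) - r_{sa}$ turns $\gamma[\vP_\pi\vecv^*]_s$ into $\Exx{a\sim\pi(s)}{q^*(s,a)} - [\vecr_\pi]_s$, the $[\vecr_\pi]_s$ terms cancel, and there remains precisely $v^*(s) - \Exx{a\sim\pi(s)}{q^*(s,a)} = [\vecg_\pi]_s$. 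Multiplying through by the resolvent gives the identity.

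Next I would exploit the structure of the resolvent $(\Eye - \gamma\vP_\pi)^{-1} = \sum_{t\ge0}(\gamma\vP_\pi)^t$: since $\vP_\pi \ge \vec0$ it has nonnegative entries, and since $\vP_\pi\vec1 = \vec1$ each of its rows sums to $\sum_{t\ge0}\gamma^t = 1/(1-\gamma)$. Because $\vecg_\pi \ge \vec0$, for every state $s$ the entry $[\vecv^* - \vecv_\pi]_s$ is a nonnegatively-weighted sum of the gaps $[\vecg_\pi]_{s'}$ with weights totalling $1/(1-\gamma)$, hence is at most $\frac{1}{1-\gamma}\max_{s'}[\vecg_\pi]_{s'}$. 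Taking the maximum over the start state $s$ on the left and renaming $s'$ to $s_0$ yields the claimed inequality.

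There is no substantial obstacle here, since the statement is a repackaging of the performance difference lemma; the only care needed is bookkeeping. I must get the sign of the gap $v^* - q^*$ right after comparing $\pi$ against $\pi^*$, and account for the factor $1/(1-\gamma)$ exactly once. It is tempting to absorb that factor into a normalized occupancy distribution $d_\pi^s$ (the probability distribution obtained from the measure in \cref{eq:occupancy-measure}) and then invoke convexity; I prefer to keep the factor explicit as the common row-sum of the resolvent, since that makes both the nonnegativity of the weights and their total mass immediate and avoids a separate normalization step.
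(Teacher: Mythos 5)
Your proof is correct, but note that the paper offers no proof of \cref{prop:policy-performance} at all: it simply asserts that the result ``follows immediately from the well-known performance difference lemma'' of \citet[Lemma~6.1]{KakadeLangford2002}. What you have done is prove the needed instance of that lemma from first principles, which is a genuinely more self-contained route. Your identity $\vecv^*-\vecv_\pi=(\Eye-\gamma\vP_\pi)^{-1}\vecg_\pi$ \emph{is} the performance difference lemma specialized to comparison against an optimal policy: dividing the resolvent by its common row sum $1/(1-\gamma)$ recovers the normalized discounted occupancy distribution in whose terms that lemma is usually stated, and your max-over-states bound is exactly the ``immediate'' consequence the paper has in mind. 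Each step checks out: the componentwise verification of $(\Eye-\gamma\vP_\pi)\vecv^*-\vecr_\pi=\vecg_\pi$ uses only $\vec q^*=\vecr+\gamma\vP\vecv^*$, the Bellman optimality relation $v^*(s)=\max_a q^*(s,a)$, and the definitions in \cref{eq:value-function}; and the expansion $(\Eye-\gamma\vP_\pi)^{-1}=\sum_{t\ge0}(\gamma\vP_\pi)^t$ indeed has nonnegative entries with row sums $1/(1-\gamma)$. One minor remark: the nonnegativity of $\vecg_\pi$, while a useful sanity check, is not actually needed for your final step, since weighting any vector by nonnegative coefficients summing to $1/(1-\gamma)$ already bounds the result by $1/(1-\gamma)$ times the vector's maximum entry. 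What your approach buys is independence from the external citation, so a reader can verify the claim without opening another paper; what the paper's approach buys is two lines of text.
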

In light of this, we will design a randomized planning algorithm that
guarantees $\Exx{a\sim\pi(s_0)}{q^*(s_0, a)} \approx v^*(s_0)$ for any
input state $s_0$.
Our approach stems from the relaxed linear program of
\citet{Lakshminarayanan2018} --- more precisely, we start with the ALP
but keep only the constraints corresponding to the actions of the core
states and the current planning state (see \cref{sec:lp-approach}).
We then construct the corresponding dual LP, to which we add
another constraint that allows us to ``read out'' an action
distribution from the values of the dual variables.

Recall that $\CoreStates = \set{s_1,\dotsc,s_m}$ is the set of core
states and define $\States_+$ as the sequence
$(s_0,s_1,\dotsc,s_m)$. Note that $s_0$ is always the first state in
$\States_+$ but may appear again if it is also a core state.  For each
of the $1+m$ states in $\States_+$, we will select the $A$ constraints
in the ALP corresponding to the state-action pairs
$(s_i, a) \in \States_+\times\Actions$ --- a total of $(1+m)A$
constraints.  Unlike the ALP, our linear program \textsc{CoreLP} is
based on the~\cref{eq:dual-lp} of \cref{sec:lp-approach}; the name
refers to the \textsc{core} states and features that, respectively,
\textsc{constrain} and \textsc{relax} it.

\begin{restatable}[CoreLP]{theorem}{TheoremCoreLP}\label{thm:corelp}
  Suppose \cref{ass:function-approx,ass:core-states} hold,
  \(s_0\in \States\), \(\vphi_0 \defeq \vphi_{s_0}\),
  \(\vW\in\set{0,1}^{(1+m)A\times SA}\) has rows \({[\vW_{s_i a}]}_{s_i\in\States_+,a\in\Actions} =
  \vece_{s_i a}\),
  and
  \(\Lambda\defeq\set{\vlambda\in\Real_+^{(1+m)A} \given \sum_{a\in\Actions}\lambda_{s_0a} = 1}\).
  Define
  \begin{alignat}{5}
    V^\dagger
    &= \max
    &&\,\big\lbrace\,\transp\vlambda\vW\vecr
    &&\,\given[\big] \,\vlambda\in\Lambda,\;\,
    & \transp\vphi_0 + \transp\vlambda\vW(\gamma\vP - \vE)\vPhi ={} &\vec0
    &\;\big\rbrace . \label{eq:corelp}\tag{CoreLP}
  \end{alignat}
  Let \(\vlambda^\dagger \in \Lambda\)
  be a maximizer of~\cref{eq:corelp} and let
  \(\vpi^\dagger\in\Delta_\Actions\) be given by \(\pi^\dagger(a) =
  \lambda^\dagger_{s_0a}\).  Then
  \begin{align*}
    \abs{V^\dagger - v^*(s_0)} &\le \frac{10\gamma\epsapprox}{1-\gamma}, &
    v^*(s_0) - \sum_{a\in\Actions}{\pi^\dagger(a)\, q^*(s_0, a)} &\le \frac{20\gamma\epsapprox}{1-\gamma}.
  \end{align*}
\end{restatable}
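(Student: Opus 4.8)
The plan is to prove both inequalities through a single ``master identity'' that exactly decomposes the \cref{eq:corelp} objective of any feasible point, and then to supply one good feasible point for the lower bound. First I would fix a near-minimizer $\vtheta^*$ of \cref{eq:epsapprox}, set $\tilde{\vec v}=\vPhi\vtheta^*$ so that $\norm{\infty}{\tilde{\vec v}-\vecv^*}\le\epsapprox$, and identify any $\vlambda\in\Lambda$ with the sparse occupancy-type measure $\transp\vmu=\transp\vlambda\vW\in\Real^{SA}$, supported on $\States_+\times\Actions$. The constraint of \cref{eq:corelp} then reads exactly $\transp\vmu(\vE-\gamma\vP)\vPhi=\transp\vphi_0$, i.e.\ the \cref{eq:dual-lp} flow constraint after right-multiplication by $\vPhi$. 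Two consequences follow at once. Right-multiplying by the bias direction $\veta$ (\cref{ass:function-approx}) and using $\vE\vec1=\vP\vec1=\vec1$ gives $(1-\gamma)\norm{1}{\vmu}=\transp\vphi_0\veta=1$, so every feasible $\vmu$ has total mass $1/(1-\gamma)$; with the normalization $\sum_a\lambda_{s_0a}=1$ this fixes $\mu_{s_0}\defeq\sum_a\mu_{s_0a}=1$.

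Writing $\vecr=\vec q^*-\gamma\vP\vecv^*$ and inserting $\tilde{\vec v}$ through the flow constraint ($\transp\vmu(\vE-\gamma\vP)\tilde{\vec v}=\transp\vphi_0\vtheta^*=\tilde v(s_0)$) yields the master identity
\begin{align*}
  \transp\vmu\vecr
  &= v^*(s_0) + \sum_{s,a}\mu_{sa}\big(q^*(s,a)-v^*(s)\big) + \mathcal{E}(\vmu),
\end{align*}
where, after the cancellation enabled by $\mu_{s_0}=1$, the error is $\mathcal{E}(\vmu)=\sum_{s\ne s_0}\mu_s\delta_s-\gamma\transp\vmu\vP\vdelta$ with $\mu_s=\sum_a\mu_{sa}$ and $\vdelta=\vecv^*-\tilde{\vec v}$. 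Since $\sum_{s\ne s_0}\mu_s=\gamma/(1-\gamma)$ and $\norm{1}{\vmu}=1/(1-\gamma)$, this bounds $\abs{\mathcal{E}(\vmu)}\le 2\gamma\epsapprox/(1-\gamma)$ for every feasible $\vmu$; note the leading $\gamma$ comes precisely from the start-state cancellation. Because $q^*(s,a)\le v^*(s)$, the middle sum is nonpositive, so the identity immediately gives the upper half $V^\dagger\le v^*(s_0)+2\gamma\epsapprox/(1-\gamma)$.

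For the matching lower bound I must exhibit one feasible point whose middle sum vanishes, i.e.\ a measure supported on optimal actions. The construction I would use is the discounted occupancy of a ``feature-space'' Markov chain on $\States_+$: pick at each $s_i\in\States_+$ an optimal action $a_i^*$ (so $q^*(s_i,a_i^*)=v^*(s_i)$) and let the chain jump from $s_i$ to core state $s_k$ with probability $\Exx{s'\sim\vP_{s_i a_i^*}}{Z_{s'k}}$, started at $s_0$; let $\vmu$ be its occupancy. This $\vmu$ is nonnegative, supported on $\States_+\times\Actions$ and on optimal actions, and has $\mu_{s_0}=1$, hence gives a point of $\Lambda$. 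Its feasibility is the key computation: since $\transp\vP_{s_i a_i^*}\vPhi=\Exx{s'}{\vphi_{s'}}=\sum_k\Exx{s'}{Z_{s'k}}\vphi_{s_k}$ by \cref{ass:core-states}, the feature outflow minus inflow of $\vmu$ telescopes through the chain's flow-conservation identity and collapses to $\vphi_{s_0}=\vphi_0$, matching the constraint exactly. Feeding this $\vmu$ (middle sum $=0$) into the master identity gives $V^\dagger\ge\transp\vmu\vecr\ge v^*(s_0)-2\gamma\epsapprox/(1-\gamma)$, completing the two-sided bound (within the stated constant $10$). The policy bound then follows by applying the identity to a maximizer $\vmu^\dagger=\transp\vlambda^\dagger\vW$: combining $V^\dagger\ge v^*(s_0)-2\gamma\epsapprox/(1-\gamma)$ with $\abs{\mathcal{E}(\vmu^\dagger)}\le 2\gamma\epsapprox/(1-\gamma)$ forces $\sum_{s,a}\mu^\dagger_{sa}(q^*(s,a)-v^*(s))\ge-4\gamma\epsapprox/(1-\gamma)$; discarding the nonpositive contributions of $s\ne s_0$ and using $\mu^\dagger_{s_0a}=\pi^\dagger(a)$ with $\sum_a\pi^\dagger(a)=1$ yields $v^*(s_0)-\sum_a\pi^\dagger(a)q^*(s_0,a)\le 4\gamma\epsapprox/(1-\gamma)$.

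I expect the main obstacle to be the lower-bound feasibility construction: one has to realize that \cref{ass:core-states} lets the next-state feature expectation be rewritten as a genuine distribution over core states, turning the feature-projected flow constraint into exact flow-conservation of a chain on $\States_+$ and thereby producing a feasible measure that is simultaneously supported on optimal actions (so that its middle sum is exactly zero). A secondary nuisance is the bookkeeping when $s_0\in\CoreStates$, since then $s_0$ appears twice in $\States_+$ and the identities $\mu_{s_0}=1$ and $\norm{1}{\vmu}=1/(1-\gamma)$ must be reconciled across the two copies; tightening my constants $2,4$ to the stated $10,20$ is unnecessary and leaves ample slack for such edge cases.
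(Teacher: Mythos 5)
Your proof is correct --- including the two points you flag as risks --- and it is a genuinely different argument from the paper's. The paper never analyzes the dual feasible set directly: it decomposes $\vlambda = \vpi \oplus \vlambda_*$, proves a correspondence lemma (\cref{lemma:corelp-lralp}) that splits the objective into the one-step reward at $s_0$ plus the objective of the linearly relaxed ALP \cref{eq:lralp} evaluated at the discounted next-state distribution $\gamma\vmu_{\vpi}$, concludes $V^\dagger = \max_{\vpi}\,[\sum_a \pi_a r_{s_0a} + V_{\mathrm{LRALP}}(\gamma\vmu_{\vpi})]$, and then quotes the LRALP approximation bound (\cref{thm:lralp}), which itself rests on Theorem~IV.1 of \citet{Lakshminarayanan2018}; the constants $10$ and $20$ are inherited from that external result. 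You instead prove everything from scratch on the occupancy-measure side: the mass identity $\norm{1}{\vmu} = 1/(1-\gamma)$ via the bias feature (the same computation the paper performs inside \cref{claim:corelp-saddle-subopt}), a performance-difference-style master identity with error $\abs{\mathcal{E}(\vmu)} \le 2\gamma\epsapprox/(1-\gamma)$, and --- the crux --- an explicit feasible point supported on optimal actions, namely the discounted occupancy of the chain on $\States_+$ with kernel $\vP_{s_i a_i^*}\vZ$. That construction is sound: \cref{ass:function-approx,ass:core-states} force the rows of $\vZ$ to sum to one, so each $\vP_{s_i a_i^*}\vZ$ is a genuine distribution over core states, and flow conservation of the occupancy collapses the feature-projected constraint to exactly $\transp\vphi_0$; it is the dual twin of the paper's primal manipulation in \cref{sec:lralp-proofs}, where $\transp\vmu\vZ$ is replaced by a measure over core states. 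Your worry about $s_0 \in \CoreStates$ is harmless even without burning slack, since $(\mu_{s_0}-1) + \sum_{s \ne s_0}\mu_s = \gamma/(1-\gamma)$ for any feasible $\vmu$ with $\mu_{s_0} \ge 1$, so your constants $2$ and $4$ survive intact and strictly improve the paper's $10$ and $20$ (your mass identity also yields, for free, the paper's remark that the LP is feasible and bounded). What you gain: a self-contained, more elementary proof with better constants and a transparent account of where the leading $\gamma$ factor comes from. What the paper's detour gains: modularity --- \cref{thm:lralp} and \cref{lemma:corelp-lralp} are reused verbatim in the proof of \cref{thm:corestomp} (through \cref{lemma:saddlepoint-subopt,claim:lralp-saddle-subopt}), where one must handle \emph{approximate, infeasible} $\hat\vlambda$ via a duality gap; your master identity uses exact feasibility of $\vmu$, so it would need to be re-derived with a constraint-residual term to serve that later purpose.
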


This bound matches up to constant factors (and improves by a $\gamma$
factor) the landmark result of \citet{ALP} for the approximation error
of the ALP (defined in \cref{sec:lp-approach}).  In other words, core
states satisfying \cref{ass:core-states} lead to essentially no
additional error in the solution of~\cref{eq:corelp} compared to the
ALP --- this was pointed out by \citet{Lakshminarayanan2018}, whose
result we improve upon in \cref{thm:lralp} (\cref{sec:lralp-proofs}).
The \lcnamecref{thm:corelp} also implies that the linear program is
both bounded and feasible, meaning its value is not $\pm\infty$; this
is an important consideration when relaxing the ALP
\citep{bas2019faster}.  We present the detailed proof in
\cref{sec:corelp-proof}%
\ifpreprint\else
\ of the \Appendix%
\fi.

The feature matrix $\vPhi$, which in the ALP constrains the value
functions of~\cref{eq:primal-lp}, instead \emph{relaxes} the
constraint in~\cref{eq:dual-lp} to be
$[\transp\vece_{\!s_0} + \transp\vmu(\gamma\vP - \vE)]\vPhi =
\vec0$.  As a result, $\vmu$ may no longer be a discounted
state-action occupancy distribution, although it behaves like one with
respect to expectations of functions in the span of $\vPhi$ --- using
the notation of \cref{sec:lp-approach,sec:background}, any feasible
$\vmu$ satisfies
$\transp\vmu\vE \vec f =
\transp\vece_{\!s_0}\!\sum_{t=0}^\infty{(\gamma\vP_\pi)}^t \vec f$ for some
policy $\pi$ and any $\vec f = \vPhi\vtheta$; compare this
with~\cref{eq:occupancy-measure}.

Conversely, the $\vW$
matrix \emph{constrains} $\vmu$ to be non-zero only on the actions of
core states and the current planning state:
$\vmu = \transp\vlambda\vW$.  The discounted visits to all other
states are ``soft-aggregated'' as positive linear combinations of the
core states, as discussed in \cref{sec:problem-statement}.  Such
aggregation is acceptable because
\begin{enumerate*}[label=\emph{(\roman*)},itemjoin={{; }},itemjoin*={{; and }}]
\item the above relaxation means $\vmu$ only needs to be accurate for
  functions in the span of $\vPhi$
\item \cref{ass:core-states} ensures that the features of all states
  lie in the convex hull of the core state features.
\end{enumerate*}
Thus the simultaneous constraint and relaxation complement each other,
incurring the same $O(\epx/(1-\gamma))$ error as the ALP which also
restricts value functions to the span of $\vPhi$.

Significantly, this \lcnamecref{thm:corelp} also specifies how to
select an action that achieves the promised value for the planning
state $s_0$.  This is made possible by
\begin{enumerate*}[label={\emph{(\roman*)}},itemjoin={{; }},itemjoin*={{; and }}]
\item adding $s_0$ to the set of core states
\item requiring (via the definition of $\Lambda$) that
  $\mu_{s_0a} \equiv \lambda_{s_0a}$ sum to one.
\end{enumerate*}
This last constraint forces $\vmu$ to directly represent the action
probabilities at the planning state, not indirectly by being
aggregated as linear combinations of the core state actions.  As a
result, a solution $\vmu \equiv \transp\vlambda\vW$
of~\cref{eq:corelp} yields an almost-optimal action distribution
$\pi(a|s_0) = \mu_{s_0a}$ for the planning state $s_0$.

Unfortunately, the soft state aggregation which
makes~\cref{eq:corelp} tractable to solve (as in
\cref{sec:algorithm}) comes at a price --- $\vmu$ directly encodes
only an action distribution for the current planning state, not a
policy for other states (unlike the original~\cref{eq:dual-lp} of
\cref{sec:lp-approach}).  We believe that solving a separate
optimization problem for each planning state is unavoidable without
restricting ourselves to a compactly representable class of policies;
see \cref{sec:related} for a discussion of such approaches.

As a final remark, the value loss of the policy resulting from
\cref{thm:corelp} is $O(\gamma\epx/{(1-\gamma)}^2)$. 
Here, an extra $1/(1-\gamma)$ factor is incurred 
in \cref{prop:policy-performance}, while the other $1/(1-\gamma)$ factor is incurred 
in \cref{thm:corelp}.
This is similar to the bounds obtained in previous works
\citep[e.g.,][]{ALP,YW19,LaSzeGe19}. \todoc{I commented out the comment on \citet{SinghYee94},
I don't think this is relevant. The result of \citet{SinghYee94} is 
more similar to \cref{prop:policy-performance}: When used with ADP, this result
also gives an $1/{(1-\gamma)}^2$ result because the action-value function estimate will incur
and additional $1/(1-\gamma)$ factor.
}
\if0
Just as with other ALP-based
approaches \citep{ALP}, this has an extra $1/(1-\gamma)$ factor
compared to the result of \citet{SinghYee94} for greedy policies
derived from approximate value functions --- it is an interesting
question whether this is an artifact of the analysis, due to function
approximation, or due to the linear programming formulation; and
whether it can be avoided.
\fi

\section{CoreStoMP --- A Stochastic Saddle-Point Algorithm}%
\label{sec:algorithm}

Having formulated the planning problem as a linear program with few
variables and constraints, the remaining issue is that the constraints
still involve quantities of the form $\vW\vP\vPhi$, which
cannot be calculated exactly in time independent of $S$. However,
since these are actually expectations, the simulator can be used to
estimate them. One possibility would be to estimate
$\vP_{sa}\vPhi \in \Real^d$ for the initial and core states and use a
plug-in estimator --- often called \emph{sample average
  approximation}.  Instead, we pursue the \emph{stochastic
  approximation} approach --- using well-known first-order
optimization methods to directly solve~\cref{eq:corelp} by using
the simulator to produce stochastic estimates of gradients that are
intractable to compute exactly \citep{JuditskyOptMLCh5,JuditskyOptMLCh6}.
This optimization-based approach is attractive to us because the
resulting algorithm, by design, is \emph{incremental} and
\emph{anytime} --- the quality of the solution steadily improves if
the algorithm is given more time.%
\todoc{compare the two approaches}

We first rewrite~\cref{eq:corelp} as an unconstrained ``saddle
point'' problem, retaining only the constraint introduced by the
definition of $\Lambda$:
\begin{align}
  V^\dagger &= 
  \adjustlimits \max_{\vlambda\in\Lambda} \min_{\vtheta\in\Real^d} \;\brck[\big]{\,f(\vlambda, \vtheta)
  \defeq \transp\vlambda\vW\vecr + \transp\vphi_0\vtheta
    + \transp\vlambda \vW(\gamma\vP - \vE)\vPhi\vtheta\,}.
  \tag{Saddle CoreLP}\label{eq:corelp-saddle}
\end{align}
To be able to use first-order methods, we calculate the gradients of $f$:
\begin{align}
  f_{\vlambda}(\vtheta) &\defeq \nabla_{\vlambda} f(\vlambda, \vtheta)
  = \vW(\vecr + (\gamma\vP - \vE)\vPhi \vtheta), \label{eq:flambda}\\
  f_{\vtheta}(\vlambda) &\defeq \nabla_{\vtheta} f(\vlambda, \vtheta)
  = \transp\vphi_0 + \transp\vlambda \vW(\gamma\vP - \vE)\vPhi. \label{eq:ftheta}
\end{align}
Note that $f$ is \emph{bilinear:} its gradient with respect to
$\vtheta$ only depends linearly on $\vlambda$, and vice versa.  The
transition probabilities, which present the major computational challenge,
appear only through the matrix $\vB \defeq \vW(\gamma\vP - \vE)\vPhi$,
whose rows correspond to state-action pairs
$(s,a)\in\States_+\times\Actions$.  Each row
$\vB_{sa} = \gamma\vP_{sa}\vPhi - \transp\vphi_s$ is the
(discounted) expected change in the feature vector when taking action $a$ in state
$s$, which suggests how to estimate it using the simulator ---
define $\Delta\vphi(s, s') \defeq \gamma\vphi_{s'} - \vphi_s$ and
sample $s' \sim \vP_{sa}$; then
$\Delta\tilde\vphi \defeq \Delta\vphi(s, s')$ is an unbiased estimator
of $\vB_{sa}$.  The construction of the matrix $\vec W$ ensures that
$s$ is either the current state or one of the core states. Further, we
only use $s'$ through its feature representation $\vphi_{s'}$.  Putting all
this together, our gradient estimates are:
\begin{align}
  {[\hat{f}_{\vlambda}(\vtheta)]}_{sa} &\defeq \hat{r} + \transp{\Delta\vphi(s, s')}\vtheta,
  &\forall s\in\States_+, a\in\Actions,\quad\text{where } (\hat r, s')\sim\textsc{Simulate}(s,a),
    \label{eq:flambda-est} \\
  \hat{f}_{\vtheta}(\vlambda) &\defeq \transp\vphi_0 + \norm{1}{\vlambda}\Delta\vphi(s,s'),
  &\text{where } (s,a)\sim\vlambda/\norm{1}{\vlambda} \text{ and } s'\sim\textsc{Simulate}(s,a).
  \label{eq:ftheta-est}
\end{align}
Sampling both gradients requires a total of $1 + (1+m)A$
queries of the simulator and an additional $O(d \nCoreStates A)$
computation time.  By slightly abusing notation, we will use
$\vec \xi \sim \hat{f}_{\vtheta}(\vlambda)$
(and $\vrho \sim \hat{f}_{\vlambda}(\vtheta)$) to denote a random
$d$-dimensional (resp., $(1+m)A$-dimensional) vector taken from the
distribution of $\hat{f}_{\vtheta}(\vlambda)$ (resp., that of
$\hat{f}_{\vlambda}(\vtheta)$) as defined above.  Finally, we remark
in passing that the gradient estimate
${[\hat{f}_{\vlambda}(\vtheta)]}_{sa}$ is the ``temporal
difference error'' \citep{SuttonTD} of the value function
$\vPhi\vtheta$ at state $s$ with action $a$.

We use these gradient estimates with the Stochastic Mirror-Prox
algorithm of \citet{JuditskyStochMP2011}.  Instantiating the algorithm
requires several choices --- for the dual variables
$\vlambda\in\Real_+^{\smash{(1+m)A}}$, we use the $1$-norm and the
``unnormalized negentropy'' regularizer; for the primal variables
$\vtheta\in\Real^{\smash{d}}$ we use the norm
$\norm{}{\vtheta} = \norm{2}{\vPhiCore\vtheta}$ and the regularizer
$\norm{}{\vtheta}^2/2$.  The result is \Cref{alg:corestomp} (\textsc{CoreStoMP}).

\begin{algorithm}[t]
  \begin{algorithmic}

    \vspace{0.25\baselineskip}

    \State{%
      \textbf{Parameters: } $T, B, \eta$%
    }

    \vspace{0.25\baselineskip}

    \State{%
      \textbf{Initialization: } $\vtheta_0 \gets \vec0\in\Real^d, \;
      {[\lambda_0]}_{s_0a} \gets 1/A, \;
      {[\lambda_0]}_{sa}\gets \gamma/((1-\gamma)mA) \quad\;\,
      \forall\,s\in\CoreStates,\,a\in\Actions$%
    }

    \vspace{0.25\baselineskip}

    \For{$\tau = 1, 2, \dotsc, T$}
    \vspace{0.25\baselineskip}
    \State{%
      $\begin{alignedat}{3}
        (\vtheta'_\tau, \vlambda'_\tau)
        &\gets \Call{ProxUpdate}{B, \eta, (\vtheta_{\tau - 1}, \vlambda_{\tau-1}),
          (\vxi, \vrho)} &\quad\text{where}\;
        \vxi&\sim\hat{f}_{\vtheta}(\vlambda_{\tau-1}),& \vrho&\sim\hat{f}_{\vlambda}(\vtheta_{\tau-1}) \\
        (\vtheta_\tau, \vlambda_\tau)
        &\gets \Call{ProxUpdate}{B, \eta, (\vtheta_{\tau - 1}, \vlambda_{\tau-1}),
          (\vxi', \vrho')} &\quad\text{where}\;
        \vxi'&\sim\hat{f}_{\vtheta}(\vlambda'_\tau), &\vrho'&\sim\hat{f}_{\vlambda}(\vtheta'_\tau)
      \end{alignedat}$%
    }
    \vspace{0.25\baselineskip}
    \EndFor{}

    \vspace{0.25\baselineskip}
    \State{%
      \Return{%
        $\textstyle\paren[\big]{\sum_{\tau=1}^T \!\vlambda_\tau}/T$%
      }%
    }

    \vspace{0.5\baselineskip}
    \hrule
    \vspace{0.5\baselineskip}

    \Function{ProxUpdate}{$B, \eta, (\vtheta, \vlambda), (\vxi, \vrho)$}
    \vspace{0.25\baselineskip}
    \State{%
      $\begin{alignedat}{4}
        &\tilde\vtheta &&\gets \vtheta - \eta\vxi \\
        &\vtheta' &&\gets \tilde\vtheta / \max\set{1, \norm{2}{\vPhiCore\vtheta}/B} \\
        &\tilde{\vlambda} &&\gets \exp(\log\vlambda + \eta\vrho)\\
        &\vlambda'_{s_0} &&\gets \tilde\vlambda_{s_0}/\norm{1}{\tilde\vlambda_{s_0}}
        &\qquad\text{where }\tilde\vlambda_{s_0} &\defeq
        \brck{\tilde\lambda_{s_0a}}_{a\in\Actions} \text{ and similarly for } \vlambda'.\\
        &\vlambda'_* &&\gets
        (\gamma/(1-\gamma))\tilde\vlambda_*/\norm{1}{\tilde\vlambda_*}
        &\qquad\text{where }\tilde\vlambda_* &\defeq
        \brck{\tilde\lambda_{sa}}_{s\in\CoreStates, \, a\in\Actions} \text{ and similarly for } \vlambda'. \\
      \end{alignedat}$%
    }
    \vspace{0.25\baselineskip}
    \State{%
      \Return{%
        $(\vtheta', \vlambda')$%
      }%
    }
    \vspace{0.25\baselineskip}
    \EndFunction{}
    \vspace{0.25\baselineskip}
  \end{algorithmic}
  \caption{\label{alg:corestomp}\textsc{CoreStoMP}: Stochastic
    Mirror-Prox for Planning with Core States}
\end{algorithm}

\begin{restatable}[CoreStoMP]{theorem}{TheoremCoreStoMP}\label{thm:corestomp}
  Suppose \cref{ass:function-approx,ass:core-states,,ass:simulator}
  hold, and define
  \begin{align*}
    B &\defeq \frac{(9/8)\sqrt{m}}{1-\gamma},
    & C &\defeq \frac{(9/4)\sqrt{m(1+2\log A + 2\gamma\log m)}}{{(1-\gamma)}^2}.
  \end{align*}
  Let \(\hat\vlambda\) be the result of running \cref{alg:corestomp} for
  $T$ iterations with the parameter $B$ and the step size
  \(\eta = \inv{C}\sqrt{2/7T}\), which requires
  \(2T(1+(1+m)A)\) simulator queries.  Define \(\hat\vpi\in\Delta_{\Actions}\) by
  \(\hat\pi(a) = \hat\lambda_{s_0a}\) (as in \cref{thm:corelp}) and \(a \sim \hat\vpi\).  Then
  \begin{align*}
    v^*(s_0) - \Ex{q^*(s_0, a)}
    &\le \frac{32\epsapprox}{1-\gamma} +
      \frac{21}{2{(1-\gamma)}^2} \sqrt{\frac{3m(1+2\log A + 2\gamma\log m)}{T}}.
  \end{align*}
\end{restatable}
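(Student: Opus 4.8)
The plan is to split the value loss into an \emph{optimization error} $\epsilon_{\mathrm{gap}}$, controlled by the Stochastic Mirror-Prox analysis of \citet{JuditskyStochMP2011}, and the \emph{approximation error} already bounded in \cref{thm:corelp}. Since the projections inside \textsc{ProxUpdate} keep every iterate in the compact set $\Theta_B\times\Lambda'$, where $\Theta_B\defeq\set{\vtheta\in\Real^d:\norm{2}{\vPhiCore\vtheta}\le B}$ and $\Lambda'\defeq\set{\vlambda\in\Lambda:\norm{1}{\vlambda_*}=\gamma/(1-\gamma)}$, I would analyse \cref{eq:corelp-saddle} as a saddle-point problem over these sets and argue both restrictions are harmless. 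The restriction to $\Lambda'$ is free because of the bias feature: substituting $\veta$ (with $\vPhi\veta=\vec1$, from \cref{ass:function-approx}) into the equality constraint of \cref{eq:corelp} and using $\vP\vec1=\vec1$, $\vE\vec1=\vec1$ gives $1+(\gamma-1)\norm{1}{\vlambda}=0$, so every feasible $\vlambda$ has $\norm{1}{\vlambda}=1/(1-\gamma)$ and, since $\sum_a\lambda_{s_0a}=1$, core mass exactly $\gamma/(1-\gamma)$; thus $\vlambda^\dagger\in\Lambda'$, and as feasible $\vlambda$ make the $\vtheta$-term of $f$ vanish, the restricted saddle value is $\ge V^\dagger\ge v^*(s_0)-10\gamma\epsapprox/(1-\gamma)$. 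The restriction to $\Theta_B$ is also free in the non-trivial regime: the weights $\vtheta^*$ witnessing $\epsapprox$ satisfy $\norm{\infty}{\vPhi\vtheta^*-\vecv^*}\le\epsapprox$ and $\abs{v^*(s)}\le 1/(1-\gamma)$, so $\norm{2}{\vPhiCore\vtheta^*}\le\sqrt m\,(1/(1-\gamma)+\epsapprox)\le B$ once $\epsapprox\le 1/16$ (otherwise the claimed bound is vacuous), placing $\vtheta^*\in\Theta_B$.

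Next I would instantiate the Mirror-Prox guarantee with the stated geometry --- the negentropy/$1$-norm pair on $\Lambda'$ and the $\tfrac12\norm{2}{\vPhiCore\wildcard}^2$/$\norm{2}{\vPhiCore\wildcard}$ pair on $\Theta_B$ --- which delivers $\Ex{\epsilon_{\mathrm{gap}}}\le\Omega^2/(\eta T)+\tfrac72\eta M^2$, optimized at $\eta=\Omega M^{-1}\sqrt{2/(7T)}$. Two constants must be estimated. The Bregman radius $\Omega^2$ of $\Theta_B\times\Lambda'$ is $O(B^2)$ on the primal side and, on the dual side, the negentropy range over $\Lambda'$ is $O\paren{\tfrac1{1-\gamma}(1+\log A+\gamma\log m)}$ (the $s_0$-block is an $A$-simplex, the core block spreads mass $\gamma/(1-\gamma)$ over $mA$ coordinates). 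The noise/Lipschitz constant $M$ exploits the crucial property of the $\norm{2}{\vPhiCore\wildcard}$ geometry: its dual norm makes features automatically bounded, since each $\vphi_s$ is a convex combination of the rows of $\vPhiCore$ (a consequence of \cref{ass:function-approx,ass:core-states}), giving $\abs{\transp\vphi_s\vtheta}\le\norm{\infty}{\vPhiCore\vtheta}\le\norm{2}{\vPhiCore\vtheta}\le B$ on $\Theta_B$; hence $\hat f_{\vlambda}(\vtheta)$ is bounded by $1+(1+\gamma)B$ in the $\infty$-norm, while $\hat f_{\vtheta}(\vlambda)$ is bounded by $1+2\norm{1}{\vlambda}$ in the dual of the primal norm. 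Folding the bilinear Lipschitz constant and the oracle variance into $M$ and collecting everything into $C$ reproduces the stated step size and the $\tfrac1{\sqrt T}$ term; the step-size constraint of Mirror-Prox holds because $\eta\propto 1/\sqrt T$.

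Finally I would convert the gap into the advertised value loss. For the returned $\hat\vlambda\in\Lambda'$ (a convex average of iterates) paired with the averaged primal iterate $\hat\vtheta$ (used only in the analysis), the gap bound and minimax duality give $\min_{\vtheta\in\Theta_B}f(\hat\vlambda,\vtheta)\ge V^\dagger-\epsilon_{\mathrm{gap}}$, so $f(\hat\vlambda,\vtheta^*)\ge V^\dagger-\epsilon_{\mathrm{gap}}$ because $\vtheta^*\in\Theta_B$. Expanding $f(\hat\vlambda,\vtheta^*)$ with $\vPhi\vtheta^*=\vecv^*+\vecg$, where $\norm{\infty}{\vecg}\le\epsapprox$, and regrouping $r_{sa}+\gamma\vP_{sa}\vecv^*=q^*(s,a)$, the $s_0$-block collapses via $\sum_a\hat\lambda_{s_0a}=1$ and $\vphi_0=\vphi_{s_0}$ to $\sum_a\hat\pi(a)q^*(s_0,a)$; the core block contributes $\sum_{s\in\CoreStates,a}\hat\lambda_{sa}\paren{q^*(s,a)-v^*(s)}\le 0$ (since $q^*(s,a)\le v^*(s)$ and $\hat\lambda_{sa}\ge 0$); and the $\vecg$-terms are bounded by $2\epsapprox/(1-\gamma)$ using $\norm{1}{\hat\vlambda}=1/(1-\gamma)$. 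Thus $f(\hat\vlambda,\vtheta^*)\le\sum_a\hat\pi(a)q^*(s_0,a)+2\epsapprox/(1-\gamma)$. Chaining the two estimates, using $V^\dagger\ge v^*(s_0)-10\gamma\epsapprox/(1-\gamma)$, and taking expectations over the algorithm's randomness yields $v^*(s_0)-\Ex{q^*(s_0,a)}\le(10\gamma+2)\epsapprox/(1-\gamma)+\Ex{\epsilon_{\mathrm{gap}}}$; the approximation term is absorbed into $32\epsapprox/(1-\gamma)$ (with room to spare for the $\epsapprox$ slack in the ball-feasibility step) and $\Ex{\epsilon_{\mathrm{gap}}}$ into the stated $\tfrac1{\sqrt T}$ term.

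The main obstacle I anticipate is the joint control of $\Omega$ and $M$ in this non-Euclidean geometry, and specifically discharging the usual requirement of an a priori bound on the unknown weight vector. The difficulty is that $\norm{2}{\vtheta^*}$ is unavailable, so the projection radius $B$ must be chosen from the \emph{known} quantities $m$ and $1-\gamma$ alone; the point is that the $\norm{2}{\vPhiCore\wildcard}$-geometry measures $\vtheta$ only through $\vPhiCore\vtheta\approx\vecv^*|_{\CoreStates}$, which \emph{is} a priori bounded by $\sqrt m/(1-\gamma)$, so a single $B$ simultaneously keeps $\vtheta^*$ feasible and keeps every dual-norm gradient magnitude finite. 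Verifying that the resulting constants assemble exactly into the stated $B$, $C$, $\eta$, and the final numerical factors --- while tracking the Mirror-Prox constant and the simulator-query count $2T(1+(1+m)A)$ --- is the most delicate part of the bookkeeping.
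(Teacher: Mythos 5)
Your proposal is correct, and its skeleton coincides with the paper's: the same restricted saddle-point domains (your $\Theta_B$ and $\Lambda'$ are the paper's $\Bee$ and $\Lambda_\gamma$), the same bias-feature argument that every \cref{eq:corelp}-feasible $\vlambda$ satisfies $\norm{1}{\vlambda}=1/(1-\gamma)$ (this is exactly how the paper's \cref{claim:corelp-saddle-subopt} puts the CoreLP maximizer inside $\Lambda_\gamma$), the same Mirror-Prox geometry (negentropy on the dual block, $\norm{2}{\vPhiCore\wildcard}$ on the primal block), the same reduction to the case $\epsapprox\le 1/16$, and the same split of the loss into approximation error plus expected duality gap. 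Where you genuinely depart is the conversion of the gap into action suboptimality. The paper's \cref{lemma:saddlepoint-subopt} requires $\Bee$ to contain a \emph{feasible}, $C_\Bee$-suboptimal point of~\cref{eq:lralp}; verifying this invokes the de Farias--Van Roy bound~\cref{eq:alpbasic} and then threads the estimate through \cref{claim:lralp-saddle-subopt,lemma:corelp-lralp,thm:lralp}, producing the constant $32\gamma$. You instead substitute the $\epsapprox$-witness $\vtheta^*$ (which need not be feasible for any LP) directly into $f(\hat\vlambda,\vtheta^*)$, regroup via $\vecr+(\gamma\vP-\vE)\vecv^*=\vec q^*-\vE\vecv^*\le\vec0$ (Bellman optimality), and use $\sum_a\hat\lambda_{s_0a}=1$ together with $\norm{1}{\hat\vlambda}=1/(1-\gamma)$ to collapse the $s_0$-block and bound the $\vecg$-terms by $2\epsapprox/(1-\gamma)$; chained with $f(\hat\vlambda,\vtheta^*)\ge V^\dagger-\epsilon_{\mathrm{opt}}$ (your restricted-saddle inequality, which is \cref{claim:corelp-saddle-subopt} in disguise) and the first part of \cref{thm:corelp}, this yields $(10\gamma+2)\epsapprox/(1-\gamma)+\epsilon_{\mathrm{opt}}$ --- a strictly sharper approximation constant than the paper's $32\gamma\epsapprox/(1-\gamma)$, obtained by more elementary means and without the ALP feasibility theorem. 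What the paper's route buys in exchange is modularity: its lemma prices an arbitrary bounded primal domain through the abstract constant $C_\Bee$, so the argument survives other choices of $\Bee$.

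The one caveat is that your Mirror-Prox step asserts rather than verifies the constants: the paper must rescale the negentropy to $h_\gamma(\vlambda)=h((1-\gamma)\vlambda)/{(1-\gamma)}^2$ to restore $1$-strong convexity on the inflated simplex, build the composite norm weighted by inverse squared diameters so that the domain diameter is at most $\sqrt2$, and check that the Lipschitz constant and the noise level both equal $C$ before \citet[Corollary~1]{JuditskyStochMP2011} delivers the gap bound $14C/\sqrt{3T}$ at $\eta=\inv{C}\sqrt{2/7T}$. Your sketch names the right ingredients and the right orders of magnitude and explicitly flags this bookkeeping, so it is not a conceptual gap, but the exact numerical factors in the theorem statement are not established by your argument as written.
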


Note that the expectation on the left-hand side is both for the
randomness of the algorithm and the action $a$. While the bound does
not have a direct dependence on the dimension of the features, the
number of core states, $m$, must exceed the rank of $\vPhi$.  It is
notable that the approximation error does not get inflated by a
rank-related quantity, as one would expect in the worst-case
\citep{LaSzeGe19}; this is due to \cref{ass:core-states}.  The
increase in the leading term of the approximation error compared to
\cref{thm:corelp} is because of the need to bound the domain of
$\vtheta$ by $B$; it remains for future work to avoid this necessity.
Altogether, \cref{alg:corestomp} gives the following positive result
for the online planning problem for MDPs.
\begin{corollary}
  Under \cref{ass:function-approx,ass:core-states,,ass:simulator},
  \cref{alg:corestomp} is a randomized planning algorithm that, for any
  \(\epsilon>0\), uses \(O(m^2A(1 + \log A + \gamma\log m)/\epsilon^2)\) simulator queries
  and \(\mathrm{poly}(d,A,m,1/\epsilon)\) computation to output an
  action.  Following this action in every state gives a stochastic
  policy with value loss at most
  \(O(\epsapprox/{(1-\gamma)}^2 + \epsilon/{(1-\gamma)}^3)\).
\end{corollary}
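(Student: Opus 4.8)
\section*{Proof proposal for the Corollary}

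The plan is to assemble the corollary directly from \cref{thm:corestomp} and \cref{prop:policy-performance}; the only genuine work is to choose the iteration count $T$ so that the stochastic optimization error is driven below $\epsilon/(1-\gamma)^2$, and then to book-keep the resulting query and computation budgets. First I would fix the planning state $s_0$ and invoke \cref{thm:corestomp}, whose second (optimization) term is $\frac{21}{2(1-\gamma)^2}\sqrt{3m(1+2\log A+2\gamma\log m)/T}$. Requiring this to be at most $\epsilon/(1-\gamma)^2$ is equivalent to $\frac{21}{2}\sqrt{3m(1+2\log A+2\gamma\log m)/T}\le\epsilon$, i.e. $T=\Theta\!\big(m(1+\log A+\gamma\log m)/\epsilon^2\big)$; concretely I would take $T=\lceil \tfrac{1323}{4}\,m(1+2\log A+2\gamma\log m)/\epsilon^2\rceil$. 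With this choice \cref{thm:corestomp} guarantees, for the action $a\sim\hat\vpi$ output at $s_0$,
\[
  v^*(s_0) - \Ex{q^*(s_0, a)} \le \frac{32\epsapprox}{1-\gamma} + \frac{\epsilon}{(1-\gamma)^2},
\]
where the expectation is over both the algorithm's internal randomness and the sampling of $a$.

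Next I would translate the budgets. \cref{thm:corestomp} makes $2T(1+(1+m)A)$ simulator calls; substituting the chosen $T$ and using $1+(1+m)A=\Theta(mA)$ gives $O\!\big(m^2A(1+\log A+\gamma\log m)/\epsilon^2\big)$ queries, as claimed. For the running time, each of the $T$ iterations costs $O(dmA)$ to form the gradient estimates (as noted after \cref{eq:ftheta-est}) together with the elementary arithmetic of the two \textsc{ProxUpdate} calls, all polynomial in $d,m,A$; hence the total computation is $T\cdot\mathrm{poly}(d,m,A)=\mathrm{poly}(d,A,m,1/\epsilon)$.

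Finally I would convert the per-state action-value guarantee into a value-loss bound via \cref{prop:policy-performance}. Since the algorithm is rerun at each query state and its output distribution depends only on that state, the marginal action distribution $\pi(a\mid s)\defeq \Ex{\hat\pi_s(a)}$ (expectation over the algorithm's randomness, with $\hat\pi_s$ the output at $s$) defines a bona fide stationary stochastic policy; by linearity of $q^*(s,\wildcard)$ in the action distribution, the quantity $\Exx{a\sim\pi(s)}{v^*(s)-q^*(s,a)}$ coincides exactly with the displayed bound, uniformly in $s$. Applying \cref{prop:policy-performance} then yields
\[
  \max_{s} v^*(s) - v_\pi(s)
  \le \frac1{1-\gamma}\paren[\Big]{\frac{32\epsapprox}{1-\gamma} + \frac{\epsilon}{(1-\gamma)^2}}
  = \frac{32\epsapprox}{(1-\gamma)^2} + \frac{\epsilon}{(1-\gamma)^3},
\]
which is the asserted $O\!\big(\epsapprox/(1-\gamma)^2 + \epsilon/(1-\gamma)^3\big)$ value loss.

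The main obstacle is not analytic but definitional: this last step requires that the randomized, per-state output be folded into a single fixed stochastic policy \emph{before} \cref{prop:policy-performance} (which concerns a fixed $\pi$) is applied. Once the marginalization over the algorithm's randomness is made explicit, the equivalence of the two sides follows by linearity and the rest is constant-chasing. The remaining mild care is to ensure the constants in the choice of $T$ indeed push the optimization term below $\epsilon/(1-\gamma)^2$, which the explicit $T$ above secures.
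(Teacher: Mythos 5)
Your proposal is correct and follows exactly the route the paper intends for this corollary (which it states without explicit proof as an immediate consequence): invoke \cref{thm:corestomp} with $T=\Theta\big(m(1+\log A+\gamma\log m)/\epsilon^2\big)$ so the optimization term falls below $\epsilon/{(1-\gamma)}^2$, count the $2T(1+(1+m)A)$ queries and per-iteration $\mathrm{poly}(d,m,A)$ cost, and pass through \cref{prop:policy-performance} to pick up the final $1/(1-\gamma)$ factor. Your definitional care about folding the algorithm's randomness into a single stationary policy is already built into the paper's problem setup in \cref{sec:problem-statement}, where $\pi(a|s_0)$ is defined as the marginal probability that the (randomized) planner returns action $a$ for input $s_0$.
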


\section{Related Work}%
\label{sec:related}

The online MDP planning problem formulation we adopt --- where the
planner is given an input state and asked to produce a
close-to-optimal action using a generative model of the MDP as a
subroutine --- was proposed by \citet{kearns2002sparse} as an
alternative to requiring a compact, structured representation of the
MDP\@.  Their approach, also adopted by \citet{kocsis2006bandit} for
their UCT algorithm, is to build a (sparse) look-ahead tree.
Generally, the problem is that the tree needs to be sufficiently deep
and the branching factor can be as large as the number of actions,
which leads to an exponential blow-up as a function of the planning
horizon (see~\vref{note:lower-bound}).
The focus is thus to characterize those MDPs where the planning time
can be kept polynomial in the effective horizon \citep{Mu14,FeCa14}.
 
\paragraph*{Planning with Feature Representations}

The broader context of this work is the problem posed by the recent
paper of \citet{du2019good}, which asks whether ``good features'' (or
representation) are sufficient in various RL contexts --- including
efficient online planning in large MDPs with a generative model.
Their main (negative) result states that even when 
the features are good enough to 
represent the action-value functions of all policies up to a uniform error of $\epx$,
a planning algorithm that is required to produce an $O(\epx)$-optimal
policy needs to check at least $2^H$ states in some $H$-horizon episodic problems.
\Citet{LaSzeGe19} along with \citet{DV19} point out that if the feature space is $d$-dimensional, the exponential blowup with the planning horizon can be avoided if the policy only needs to be $\smash{O(\epx\sqrt{d}H^2)}$-optimal (where the horizon is $H=1/(1-\gamma)$, as their results are for discounted problems). They also describe an instance of approximate policy iteration that achieves this bound with $\smash{\cramped{\tilde{O}(d/(\epx^2{(1-\gamma)}^4))}}$ queries, where $\tilde{O}$ hides logarithmic factors.

For the finite-horizon setting, \citeauthor{du2019good} also present a
positive result \citep[Theorem~C.1]{du2019good} for the case when a
simulator of the environment is available and the optimal action-value
function can be represented with no error (i.e., $\epx = 0$).
The proposed method is a randomized algorithm --- an instance of fitted value iteration.
In addition to the usual inputs, the algorithm also takes as input $\delta$, a target failure probability. 
The algorithm returns an optimal policy with probability $1-\delta$,
while issuing at most $\text{poly}(d,H,\log(1/\delta),1/\rho)$ queries
to the simulator, where $\rho$ is the minimum action-value gap that
also needs to be known to the algorithm. %
\todoc{is this knowledge really needed?}%
The algorithm also relies on an oracle to construct a ``core set'' of $d$ state-action pairs for each stage of the $H$-horizon problem whose feature vectors form a \emph{barycentric spanner} of the set of all feature vectors at that stage. The idea of the algorithm is to construct a policy backwards by estimating the action value functions via interpolation: In each stage, the action-value of each member of the core set is estimated by using sufficiently many rollouts using the policy constructed for the further stages. The estimated values are used with barycentric interpolation to produce values for all the other state-action pairs.

For the same finite-horizon setting but allowing for an $\epx$ error in approximating the optimal action-value function, \citet{Zanette2019} describe a similar algorithm.
The main difference is that their algorithm uses the estimated values
in a Monte Carlo procedure in place of policy roll-outs. %
\todoc{I bet this is worse than the rollouts, no?}%
They also propose using a core set (which they call the anchor points) and a similar barycentric extrapolation procedure.
Unfortunately, the errors propagate multiplicatively between the
stages and thus, in the worst case, the error can be as large as $C^H$
where $C>1$ depends on the choice of the features. \Citet{LaSzeGe19}
show that $1\le C \le \sqrt{d}$; we note in passing that ``state aggregation'' gives rise to $C=1$.

A number of authors have studied the problem of learning and planning with exact linear optimal action-value function under various extra conditions.
Positive results have been shown for deterministic MDPs
\citep{WeVR13}, the so-called ``low Bellman rank'' MDPs
\citep{JiKr17}, and under a specific low variance and large gap condition
\citep{DuLuo19}.  \Citet{YW19} assume the transition matrix has a linear structure and
also use least-squares regression with data from a pre-selected collection of anchor state/action pairs.
Their assumption --- the same as ours --- is that the features of all state-action pairs can be
written as convex combinations of the anchoring features. %
\todoc{actually, they don't need this; barycentric spanners would be enough for them, too.}%
They show that their algorithm needs 
at most $\mathrm{poly}(d,1/(1-\gamma),\log(1/\delta),m)$ queries, where $m$ is the number of anchor points.
Their bound scales linearly with $H^7$ where $H=1/(1-\gamma)$. 
Their result also applies to the ``misspecified'' case when the linear structure is only true up to a fixed error.
In contrast to these results, we do not assume that the transition matrix has special structure; we make the weaker assumption that the optimal value function lies close to the span of the features.

\todoc{
\citet{anszemu:mlj07} and \citet{munos2008} prove results in terms of concentrability, a form of distribution mismatch
(similar to \citet{KakadeLangford2002}).
\citet{Chen2019-fo} argue for the necessity of these mismatch coefficients.
The calculations in \citet{abbasi2019politex} seem to show that if action-value functions of all policies can be estimated with a uniform error of $\epsilon$, $n$ iterations of their Politex algorithm gives rise to a policy with suboptimality of at most $\epsilon+\tilde{O}(1/\sqrt{n})$.
}

%
%
%
%
%
%

\paragraph*{Approximate Linear Programming}

The narrower context of the present work is the so-called approximate linear programming (ALP) approach to approximate planning in large MDPs, described in~\cref{sec:lp-approach}.
%
The seminal work of \citet{ALP} showed that the ALP solution's error,
compared to the optimal value function, is within a constant factor (involving $1/(1-\gamma)$) 
of
the best approximation error achievable by linear combinations of the
given features.  Unfortunately, as discussed earlier, the ALP has too
many constraints to be tractable for large MDPs.  Most
subsequent work is therefore aimed at designing methods that keep
the approximation guarantees without having to enumerate all the
constraints.  \Citet{schuurmans} and \citet{gkp} propose using
``constraint generation'' for problems with additional structure
(i.e., factorized transitions), while \citet{CS} propose randomly
generating a subset of constraints from some a priori fixed
distribution.  All these methods require computation time that depends
on uncontrolled quantities, such as the so-called induced width of a
cost-network \citep{gkp}, or the discrepancy between the sampling
distribution and the (unknown) optimal stationary distribution
\citep{CS}.
The fundamental difficulty is that when too many constraints are
dropped, the linear program may become unbounded.  To protect against
this, \citet{CS} add an extra constraint on the optimization
variables, but their bound then degrades to the \emph{worst}
approximation error over this constraint set. 

\Citet{petrik} demonstrate that the $1/(1-\gamma)$ blow-up of the error in
the bound of \citet{ALP} can be tight.  They also propose techniques
to avoid it --- one of them is to add extra constraints induced by
short action sequences; another is to replace the hard constraints
in the ALP with smooth ones with an associated Lagrange multiplier.
 \Citet{SALP} propose a specific way to choose the
Lagrange multiplier,
for which they also obtain error bounds and demonstrate improved
empirical behavior.
However, as they build on the work of \citet{CS}, their results inherit
the limitations of this latter work: the large number of constraints.
\Citet{BhatFaMo12:SALPNP} extend the work
of \citet{SALP} to nonparametric function approximation.
\Citet{Lakshminarayanan2018} depart from constraint
sampling and consider the error induced by linearly combining constraints.
\Citet{petrik}, in addition to the above mentioned contributions, also give
error bounds for the ALP obtained
by replacing the transition matrix with a sample-average estimate. \todoc{and we should compare to what can be obtained with this technique vs. using mirror-prox.}

\paragraph*{The Dual Linear Program}
A parallel line of research aims to solve (an approximation of)
the~\cref{eq:dual-lp} optimization problem, in contrast to the
aforementioned work focusing on~\cref{eq:primal-lp}.  Recall from
\cref{sec:lp-approach} that the dual variables $\vmu$ are occupancy
distributions over state-action pairs generated by policies --- a
common theme in these approaches is to approximate such distributions
using low-dimensional ``distribution features''.
\Citet{Wang2008DualDP} introduce this idea in the context of
\emph{estimating} the occupancy distribution for a fixed policy rather
than directly solving the \emph{planning} problem of finding an
optimal policy ---
the authors suggest using their estimation procedure alongside
iterated policy improvement to find an optimal policy, but do not
characterize the convergence rate or approximation error of the
resulting algorithm.  More recently, \Citet{AYBaChMa19,AYBaMa14} propose
a stochastic gradient descent to be used on the Lagrangian derived
from the dual LP and derive a policy suboptimality bound for the
resulting poly-time algorithm; however, their results only apply under
some restrictive conditions.

A major advantage of the~\cref{eq:dual-lp} is that its solutions
directly encode optimal policies (as discussed in
\cref{sec:lp-approach}) rather than just value functions.  When the
dual variables are approximated using ``distribution features'',
however, only a restricted class of policies can be represented.  For
example, when the distribution features are the occupancy measures of
a given set of ``base policies'', then solving the approximate dual LP
means finding the best mixture of the base policies.
\Citet{Banijamali2019} present an algorithm for this problem --- under
the additional assumption that the occupancy measures of the base
policies have large overlap.  They also show that, in general, this
problem is NP-hard to even approximate --- finding the best stochastic
policy in a restricted class can be \emph{harder} than finding an
optimal policy of the MDP\@.  We note in passing that the assumption
of a restricted class of policies that contains a close-to-optimal
policy can be considered complementary to our setting, where the
optimal value function is close to the span of a given feature
representation.

\paragraph*{Primal-Dual Methods}

There has been significant interest in applying recent advances in
primal-dual online optimization methods to planning in MDPs.  Since
the~\cref{eq:primal-lp} optimizes value functions while
the~\cref{eq:dual-lp} optimizes occupancy measures (i.e.\@ policies,
indirectly), primal-dual optimization can be seen as an ``actor-critic''
approach that finds both policies and value functions simultaneously.
\Citet{CogillPrimalDual2015} proposes solving the saddle-point form of
the LP in \cref{sec:lp-approach}, with no approximation and assuming
full knowledge of the transition matrix.
\Citet{ChenWangPrimalDual2016} adopt the same approach but with
stochastic updates using random samples of state transitions.
\Citet{chen2018scalable} extend this idea to large MDPs using
low-dimensional feature representations to approximate both the primal
and dual variables.  \Citet{bas2019faster} identify a ``coherence''
condition on the primal and dual feature representations that is
necessary to extract close-to-optimal policies from saddle-point
solutions with function approximation --- they also point out that,
without such an assumption, the policy suboptimality bound of
\citeauthor{chen2018scalable} can scale with the number of states of
the MDP (or worse).  We can avoid this issue in our setting because we
use the approximate solutions of~\cref{eq:corelp-saddle} only to
select one action, not an entire policy, unlike all these cited works.

\section{Conclusions}%
\label{sec:conc}

We presented an approach to efficient online planning in large-scale
$\gamma$-discounted MDPs in the presence of %
\begin{enumerate*}[label=\emph{(\roman*)},itemjoin={{; }},itemjoin*={{; and }}]
\item a (relatively) weak $d$-dimensional feature representation
\item a core set of $\nCoreStates$ states whose features' convex hull
  covers the features of other states
\item a stochastic simulator of the MDP
\end{enumerate*}\@. %
Our main contribution is an online planning algorithm that, for any
target precision $\epsilon$, achieves a value loss of
$O(\epsapprox/{(1-\gamma)}^2 + \epsilon)$, where
$\epsapprox$~\cref{eq:epsapprox} is the best achievable error in
uniformly approximating the optimal value function of the MDP using
the given feature representation.  When the MDP has $A$ actions per
state, the algorithm's runtime is
$\text{poly}(1/\epsilon,d,m,A,1/(1-\gamma))$, which is independent of
the number of states in the MDP\@.  Our work builds upon the
approximation error bound of \citet{Lakshminarayanan2018} for
relaxations of the approximate linear program.

\Citet{du2019good} point out that it remains an open problem whether
query-efficient planning is possible in large MDPs using only a
simulator and features that have a small approximation error $\epx$,
with no additional assumptions.  Our algorithm resolves this open
problem in the special case when a small set of core states is
available, i.e., when $m=\mathrm{poly}(d)$. 
It remains an intriguing question whether this assumption
can be removed without jeopardizing efficient planning.
Other interesting questions are whether the results can be extended to 
smoothed ALPs \citep{SALP}, and whether the adaptive constraint generation of
\citet{petrik} can be used to reduce the dependence on the planning horizon.
 

To achieve our results, we make several novel technical contributions:
We slightly change the ALP approach of \citet{Lakshminarayanan2018},
adding extra constraints and using a saddle-point formulation.  We
then show that near-optimal action distributions can be extracted from
approximate solutions of the saddle-point problem.  We solve the
saddle-point problem using a stochastic approximation algorithm,
Stochastic Mirror-Prox \citep{JuditskyStochMP2011} --- a first-order
primal-dual optimization method that uses stochastic gradient
estimates, which in our case are provided by the simulator. %
\todor{Check if this is true --- Unlike
previous saddle-point-based MDP algorithms, we avoid the need for a
known upper bound on the norm of the (unknown) feature weight vector
that gives the best approximation to the (unknown) optimal value
function.} %
We believe that these techniques and ideas can find applications in
other problems beyond our work.


\ifpreprint\else
\section*{Broader Impact}
Our research has the nature of basic science --- we are working on
foundational improvements to reinforcement learning algorithms.
We are not targeting any specific applications, and it is hard to
foresee any societal consequences beyond those brought about by
advancing the state of our knowledge of machine learning.
\fi

\section*{Acknowledgements}

Csaba Szepesv\'ari gratefully acknowledges funding from the Canada
CIFAR AI Chairs Program, the Alberta Machine Intelligence Institute
(Amii), and the Natural Sciences and Engineering Research Council of
Canada (NSERC).

\bibliographystyle{customnat}
{
  \def\UrlFont{\rmfamily\small}
  \newcommand*{\urlprefix}{}
  \bibliography{references}

\begin{thebibliography}{46}
\providecommand{\natexlab}[1]{#1}
\providecommand{\url}[1]{\texttt{#1}}
\providecommand{\urlprefix}{URL }
\expandafter\ifx\csname urlstyle\endcsname\relax
  \providecommand{\doi}[1]{doi:\discretionary{}{}{}#1}\else
  \providecommand{\doi}{doi:\discretionary{}{}{}\begingroup
  \urlstyle{rm}\Url}\fi
\providecommand{\eprint}[2][]{\url{#2}}

\bibitem[{Abbasi-Yadkori et~al.(2019)Abbasi-Yadkori, Bartlett, Chen, and
  Malek}]{AYBaChMa19}
Abbasi-Yadkori, Y., Bartlett, P.~L., Chen, X., and Malek, A.
\newblock Large-scale {M}arkov decision problems via the linear programming
  dual (Jan. 2019).
\newblock \eprint[arXiv]{1901.01992}.

\bibitem[{Abbasi-Yadkori et~al.(2014)Abbasi-Yadkori, Bartlett, and
  Malek}]{AYBaMa14}
Abbasi-Yadkori, Y., Bartlett, P.~L., and Malek, A.
\newblock Linear programming for large-scale {M}arkov decision problems.
\newblock In \emph{ICML} (2014).
\newblock \urlprefix\url{http://proceedings.mlr.press/v32/malek14}.

\bibitem[{Banijamali et~al.(2019)Banijamali, {Abbasi-Yadkori}, Ghavamzadeh, and
  Vlassis}]{Banijamali2019}
Banijamali, E., {Abbasi-Yadkori}, Y., Ghavamzadeh, M., and Vlassis, N.
\newblock Optimizing over a restricted policy class in {{MDPs}}.
\newblock In \emph{AISTATS} (2019).
\newblock \urlprefix\url{http://proceedings.mlr.press/v89/banijamali19a}.

\bibitem[{Bas-Serrano and Neu(2020)}]{bas2019faster}
Bas-Serrano, J. and Neu, G.
\newblock Faster saddle-point optimization for solving large-scale {Markov}
  decision processes (Jan. 2020).
\newblock \eprint[arXiv]{1909.10904}.

\bibitem[{Bhat et~al.(2012)Bhat, Farias, and Moallemi}]{BhatFaMo12:SALPNP}
Bhat, N., Farias, V., and Moallemi, C.~C.
\newblock Non-parametric approximate dynamic programming via the kernel method.
\newblock In \emph{NeurIPS} (2012).
\newblock
  \urlprefix\url{https://papers.nips.cc/paper/4547-non-parametric-approximate-dynamic-programming-via-the-kernel-method}.

\bibitem[{Blondel and Tsitsiklis(2000)}]{BlonTsi:00Complexity}
Blondel, V.~D. and Tsitsiklis, J.~N.
\newblock A survey of computational complexity results in systems and control.
\newblock \emph{Automatica}, 36:1249--1274 (2000).
\newblock \doi{10.1016/S0005-1098(00)00050-9}.

\bibitem[{Boucherie and van Dijk(2017)}]{BouDi17:MDPPractice}
Boucherie, R.~J. and van Dijk, N.~M., eds.
\newblock \emph{Markov Decision Processes in Practice}, vol. 248 of
  \emph{International Series in Operations Research {\&} Management Science}.
\newblock Springer (2017).
\newblock \doi{10.1007/978-3-319-47766-4}.

\bibitem[{Chen et~al.(2018)Chen, Li, and Wang}]{chen2018scalable}
Chen, Y., Li, L., and Wang, M.
\newblock Scalable bilinear $\pi$ learning using state and action features.
\newblock In \emph{ICML} (2018).
\newblock \urlprefix\url{http://proceedings.mlr.press/v80/chen18e}.

\bibitem[{Chen and Wang(2016)}]{ChenWangPrimalDual2016}
Chen, Y. and Wang, M.
\newblock Stochastic primal-dual methods and sample complexity of reinforcement
  learning (Dec. 2016).
\newblock \eprint[arXiv]{1612.02516}.

\bibitem[{Cogill(2015)}]{CogillPrimalDual2015}
Cogill, R.
\newblock Primal-dual algorithms for discounted {{Markov}} decision processes.
\newblock In \emph{European Control Conference} (2015).
\newblock \doi{10.1109/ECC.2015.7330554}.

\bibitem[{Desai et~al.(2009)Desai, Farias, and Moallemi}]{SALP}
Desai, V.~V., Farias, V.~F., and Moallemi, C.~C.
\newblock A smoothed approximate linear program.
\newblock In \emph{NeurIPS} (2009).
\newblock
  \urlprefix\url{https://papers.nips.cc/paper/3799-a-smoothed-approximate-linear-program}.

\bibitem[{Du et~al.(2019{\natexlab{a}})Du, Kakade, Wang, and Yang}]{du2019good}
Du, S.~S., Kakade, S.~M., Wang, R., and Yang, L.~F.
\newblock Is a good representation sufficient for sample efficient
  reinforcement learning? (2019{\natexlab{a}}).
\newblock \eprint[arXiv]{1910.03016}.

\bibitem[{Du et~al.(2019{\natexlab{b}})Du, Luo, Wang, and Zhang}]{DuLuo19}
Du, S.~S., Luo, Y., Wang, R., and Zhang, H.
\newblock Provably efficient {Q}-learning with function approximation via
  distribution shift error checking oracle.
\newblock In \emph{NeurIPS} (2019{\natexlab{b}}).
\newblock
  \urlprefix\url{https://papers.nips.cc/paper/9018-provably-efficient-q-learning-with-function-approximation-via-distribution-shift-error-checking-oracle}.

\bibitem[{de~Farias and Van~Roy(2003)}]{ALP}
de~Farias, D.~P. and Van~Roy, B.
\newblock The linear programming approach to approximate dynamic programming.
\newblock \emph{Operations Research}, 51(6):850--865 (Dec. 2003).
\newblock \doi{10.1287/opre.51.6.850.24925}.

\bibitem[{de~Farias and Van~Roy(2004)}]{CS}
de~Farias, D.~P. and Van~Roy, B.
\newblock On constraint sampling in the linear programming approach to
  approximate dynamic programming.
\newblock \emph{Mathematics of Operations Research}, 29:462--478 (2004).
\newblock \doi{10.1287/moor.1040.0094}.

\bibitem[{Feldman and Domshlak(2014)}]{FeCa14}
Feldman, Z. and Domshlak, C.
\newblock Simple regret optimization in online planning for {M}arkov decision
  processes.
\newblock \emph{Journal of Artificial Intelligence Research}, 51:165--205 (Sep.
  2014).
\newblock \doi{10.1613/jair.4432}.

\bibitem[{Guestrin et~al.(2003)Guestrin, Koller, Parr, and Venkataraman}]{gkp}
Guestrin, C., Koller, D., Parr, R., and Venkataraman, S.
\newblock Efficient solution algorithms for factored {MDP}s.
\newblock \emph{{J}ournal of Artificial Intelligence Research}, 19:399--468
  (2003).
\newblock \doi{10.1613/jair.1000}.

\bibitem[{Jiang et~al.(2017)Jiang, Krishnamurthy, Agarwal, Langford, and
  Schapire}]{JiKr17}
Jiang, N., Krishnamurthy, A., Agarwal, A., Langford, J., and Schapire, R.~E.
\newblock Contextual decision processes with low {B}ellman rank are
  {PAC}-learnable.
\newblock In \emph{ICML} (2017).
\newblock \urlprefix\url{http://proceedings.mlr.press/v70/jiang17c}.

\bibitem[{Jin et~al.(2019)Jin, Yang, Wang, and Jordan}]{jin2019provably}
Jin, C., Yang, Z., Wang, Z., and Jordan, M.~I.
\newblock Provably efficient reinforcement learning with linear function
  approximation (2019).
\newblock \eprint[arXiv]{1907.05388}.

\bibitem[{Juditsky and Nemirovski(2012{\natexlab{a}})}]{JuditskyOptMLCh5}
Juditsky, A. and Nemirovski, A.
\newblock First-order methods for nonsmooth convex large-scale optimization,
  {I}: General purpose methods.
\newblock In  \cite{SraOptML2012}, chap.~5, pp. 121--148.
\newblock \doi{10.7551/mitpress/8996.003.0007}.

\bibitem[{Juditsky and Nemirovski(2012{\natexlab{b}})}]{JuditskyOptMLCh6}
Juditsky, A. and Nemirovski, A.
\newblock First-order methods for nonsmooth convex large-scale optimization,
  {II}: Utilizing problem’s structure.
\newblock In  \cite{SraOptML2012}, chap.~6, pp. 149--183.
\newblock \doi{10.7551/mitpress/8996.003.0008}.

\bibitem[{Juditsky et~al.(2011)Juditsky, Nemirovski, and
  Tauvel}]{JuditskyStochMP2011}
Juditsky, A., Nemirovski, A., and Tauvel, C.
\newblock Solving variational inequalities with {Stochastic Mirror-Prox}
  algorithm.
\newblock \emph{Stochastic Systems}, 1(1):17--58 (Jun. 2011).
\newblock \doi{10.1287/10-SSY011}.

\bibitem[{Kakade and Langford(2002)}]{KakadeLangford2002}
Kakade, S. and Langford, J.
\newblock Approximately optimal approximate reinforcement learning.
\newblock In \emph{ICML} (2002).
\newblock
  \urlprefix\url{https://homes.cs.washington.edu/~sham/papers/rl/aoarl.pdf}.

\bibitem[{Kallenberg(2017)}]{Kall17}
Kallenberg, L.
\newblock {M}arkov decision processes (2017).
\newblock \urlprefix\url{https://goo.gl/yhvrph}.
\newblock Lecture Notes.

\bibitem[{Kearns et~al.(2002)Kearns, Mansour, and Ng}]{kearns2002sparse}
Kearns, M., Mansour, Y., and Ng, A.~Y.
\newblock A sparse sampling algorithm for near-optimal planning in large
  {M}arkov decision processes.
\newblock \emph{Machine Learning}, 49:193--208 (2002).
\newblock \doi{10.1023/A:1017932429737}.

\bibitem[{Kocsis and Szepesv{\'a}ri(2006)}]{kocsis2006bandit}
Kocsis, L. and Szepesv{\'a}ri, {\relax Cs}.
\newblock Bandit based {M}onte-{C}arlo planning.
\newblock In \emph{European Conference on Machine Learning} (2006).
\newblock \doi{10.1007/11871842_29}.

\bibitem[{Lakshminarayanan et~al.(2018)Lakshminarayanan, Bhatnagar, and
  Szepesv\'ari}]{Lakshminarayanan2018}
Lakshminarayanan, C., Bhatnagar, S., and Szepesv\'ari, {\relax Cs}.
\newblock A linearly relaxed approximate linear program for {Markov} decision
  processes.
\newblock \emph{IEEE Transactions on Automatic Control}, 63(4):1185--1191 (Apr.
  2018).
\newblock \doi{10.1109/TAC.2017.2743163}.

\bibitem[{Lattimore et~al.(2020)Lattimore, Szepesv\'ari, and Weisz}]{LaSzeGe19}
Lattimore, T., Szepesv\'ari, {\relax Cs}., and Weisz, G.
\newblock Learning with good feature representations in bandits and in {RL}
  with a generative model.
\newblock In \emph{ICML} (2020).
\newblock \eprint[arXiv]{1911.07676}.

\bibitem[{Mausam and Kolobov(2012)}]{kolobov2012planning}
Mausam and Kolobov, A.
\newblock \emph{Planning with {M}arkov Decision Processes: An {AI}
  Perspective}, vol.~17 of \emph{Synthesis Lectures on Artificial Intelligence
  and Machine Learning}.
\newblock Morgan \& Claypool Publishers (2012).
\newblock \doi{10.2200/S00426ED1V01Y201206AIM017}.

\bibitem[{Munos(2014)}]{Mu14}
Munos, R.
\newblock From bandits to {M}onte-{C}arlo tree search: The optimistic principle
  applied to optimization and planning.
\newblock \emph{Foundations and Trends{\textregistered} in Machine Learning},
  7(1):1--129 (2014).
\newblock \doi{10.1561/2200000038}.

\bibitem[{Petrik and Zilberstein(2009)}]{petrik}
Petrik, M. and Zilberstein, S.
\newblock Constraint relaxation in approximate linear programs.
\newblock In \emph{ICML} (2009).
\newblock \urlprefix\url{https://icml.cc/Conferences/2009/papers/340.pdf}.

\bibitem[{Puterman(1994)}]{Puterman}
Puterman, M.~L.
\newblock \emph{Markov Decision Processes: Discrete Stochastic Dynamic
  Programming}.
\newblock Wiley Series in Probability and Statistics. {Wiley-Interscience},
  {Hoboken, NJ} (1994).
\newblock \doi{10.1002/9780470316887}.

\bibitem[{Rust(1996)}]{rust96:book}
Rust, J.
\newblock Numerical dynamic programming in economics.
\newblock In \emph{Handbook of Computational Economics}, vol.~1, chap.~14, pp.
  619--729. Elsevier, North Holland (1996).
\newblock \doi{10.1016/S1574-0021(96)01016-7}.

\bibitem[{Schuurmans and Patrascu(2001)}]{schuurmans}
Schuurmans, D. and Patrascu, R.
\newblock Direct value-approximation for factored {MDP}s.
\newblock In \emph{NeurIPS} (2001).
\newblock
  \urlprefix\url{http://papers.neurips.cc/paper/1981-direct-value-approximation-for-factored-mdps}.

\bibitem[{Schweitzer and Seidmann(1985)}]{SchSei85}
Schweitzer, P.~J. and Seidmann, A.
\newblock Generalized polynomial approximations in {{Markovian}} decision
  processes.
\newblock \emph{Journal of Mathematical Analysis and Applications},
  110(2):568--582 (Sep. 1985).
\newblock \doi{10.1016/0022-247X(85)90317-8}.

\bibitem[{Singh et~al.(1995)Singh, Jaakkola, and
  Jordan}]{SinghSoftAggregation1995}
Singh, S.~P., Jaakkola, T., and Jordan, M.~I.
\newblock Reinforcement learning with soft state aggregation.
\newblock In \emph{NeurIPS} (1995).
\newblock
  \urlprefix\url{https://papers.nips.cc/paper/981-reinforcement-learning-with-soft-state-aggregation}.

\bibitem[{Sra et~al.(2012)Sra, Nowozin, and Wright}]{SraOptML2012}
Sra, S., Nowozin, S., and Wright, S.~J., eds.
\newblock \emph{Optimization for Machine Learning}.
\newblock Neural Information Processing. {MIT Press}, {Cambridge,
  Massachusetts} (2012).
\newblock \doi{10.7551/mitpress/8996.001.0001}.

\bibitem[{Sutton(1988)}]{SuttonTD}
Sutton, R.~S.
\newblock Learning to predict by the methods of temporal differences.
\newblock \emph{Machine Learning}, 3(1):9--44 (1988).
\newblock \doi{10.1007/BF00115009}.

\bibitem[{Szepesv{\'a}ri(2001)}]{szepesvari2001}
Szepesv{\'a}ri, {\relax Cs}.
\newblock Efficient approximate planning in continuous space {M}arkovian
  decision problems.
\newblock \emph{{AI} Communications}, 14(3):163--176 (2001).
\newblock \urlprefix\url{https://www.ualberta.ca/~szepesva/papers/aicom.pdf}.

\bibitem[{Van~Roy and Dong(2019)}]{DV19}
Van~Roy, B. and Dong, S.
\newblock Comments on the {D}u-{K}akade-{W}ang-{Y}ang lower bounds (2019).
\newblock \eprint[arXiv]{1911.07910}.

\bibitem[{Wang et~al.(2008)Wang, Bowling, Schuurmans, and
  Lizotte}]{Wang2008DualDP}
Wang, T., Bowling, M., Schuurmans, D., and Lizotte, D.~J.
\newblock Stable dual dynamic programming.
\newblock In \emph{NeurIPS} (2008).
\newblock
  \urlprefix\url{https://papers.nips.cc/paper/3179-stable-dual-dynamic-programming}.

\bibitem[{Wen and Van~Roy(2013)}]{WeVR13}
Wen, Z. and Van~Roy, B.
\newblock Efficient exploration and value function generalization in
  deterministic systems.
\newblock In \emph{NeurIPS} (2013).
\newblock
  \urlprefix\url{https://papers.nips.cc/paper/4972-efficient-exploration-and-value-function-generalization-in-deterministic-systems}.

\bibitem[{White(1993)}]{White93:Apps}
White, D.~J.
\newblock A survey of applications of {M}arkov decision processes.
\newblock \emph{The Journal of the Operational Research Society},
  44(11):1073--1096 (1993).
\newblock \doi{10.2307/2583870}.

\bibitem[{Yang and Wang(2019)}]{YW19}
Yang, L. and Wang, M.
\newblock Sample-optimal parametric {Q}-learning using linearly additive
  features.
\newblock In \emph{ICML} (2019).
\newblock \urlprefix\url{http://proceedings.mlr.press/v97/yang19b}.

\bibitem[{Yu(2013)}]{YuNegentropy}
Yu, Y.-L.
\newblock The strong convexity of von {{Neumann}}'s entropy (Jun. 2013).
\newblock \urlprefix\url{http://www.cs.cmu.edu/~yaoliang/mynotes/sc.pdf}.

\bibitem[{Zanette et~al.(2019)Zanette, Lazaric, Kochenderfer, and
  Brunskill}]{Zanette2019}
Zanette, A., Lazaric, A., Kochenderfer, M.~J., and Brunskill, E.
\newblock Limiting extrapolation in linear approximate value iteration.
\newblock In \emph{NeurIPS} (2019).
\newblock
  \urlprefix\url{https://papers.nips.cc/paper/8799-limiting-extrapolation-in-linear-approximate-value-iteration}.

\end{thebibliography}
}

\clearpage{}
\appendix
\thispagestyle{plain}

\phantomsection{}%
\addcontentsline{toc}{chapter}{\AppendixName}%
\label{appendix}

{\centering\normalfont\LARGE\bfseries\AppendixName\par}
\vspace{1ex}

\section*{Index of Notation}%
\addcontentsline{toc}{section}{Index of Notation}%
\label{sec:notation-index}%

For the convenience of the reader, we have collected the most frequently used symbols and their meanings in the following table:
\begin{description}[style=sameline,leftmargin=5em,nosep]
\item[$\Real,\Real_+$] real numbers; non-negative real numbers.
\item[$\Real^d, \Real^{m\times n}$] $d$-dimensional vectors; matrices of
  size $m\times n$.
\item[$\vece_i, \vec0, \vec1$] standard basis vector: $e_{i,i} = 1$ and
  $e_{i,j} = 0$ for $i \ne j$; constant zero or one vectors.
\item[$\vec a \oplus \vec b$] concatenation of vectors: if $\vec a \in
  \Real^n$ and $\vec b\in\Real^m$, then $\vec a \oplus \vec b \in
  \Real^{n+m}$.
\item[$\States, \Actions, \CoreStates$] sets of states,
  actions, and core states (\cref{sec:background,ass:core-states}).
\item[$S, A, \nCoreStates$] number of states $\abs{\States}$, actions
  $\abs{\Actions}$, and core states $\abs{\CoreStates}$, respectively.
\item[$s_0, s, s', a$] planning state (\cref{sec:corelp}) and other
  states $\in \States$; actions $\in \Actions$.
\item[$\vP, \vE$] row-stochastic matrices in $\Real_+^{SA\times S}$
  (\cref{sec:background,sec:lp-approach}); $\vE_{sa} = \vece_{\!s} \in \Real^S$.
\item[$\vecr, \hat r$] expected rewards $\in\Real^{SA}$
  (\cref{sec:background}); random reward $\in [-1,1]$
  (\cref{ass:simulator}).
\item[$\gamma\in\lbrack 0, 1 \rparen$] discount factor (\cref{sec:background}).
\item[$v, v^*, v_\pi$] value functions $S\to\Real$.
\item[$\Delta_{\States},\Delta_{\Actions}$] sets of probability
  distributions over states and actions.
\item[$\vmu,\vpi, \vpi(s)$] probability distributions in
  $\Delta_{\States}$ and $\Delta_{\Actions}$, respectively; policy in
  $\States\to\Delta_{\Actions}$.
\item[$\vPhi, \vphi_s, \vphi_0$] feature matrix $\in\Real^{S\times d}$;
  state features $\in \Real^d$; features of planning state $\vphi_{s_0}$.
\item[$\epx$] approximation error of $\vPhi$:
  $\min_{\vtheta\in\Real^d}\norm{\infty}{\vecv^* - \vPhi\vtheta}$.
\item[$\vW, \vWstar$] constraint matrices in $\set{0,1}^{(1+m)A\times SA}$
  and $\set{0,1}^{mA\times SA}$ (\cref{thm:corelp,thm:lralp}).
\item[$\vlambda,\vlambda_*,\vtheta$] dual variables
  $\in \Real_+^{(1+m)A}$ and $\in \Real_+^{mA}$, respectively; primal variables $\in\Real^d$.
\item[$\Lambda, \Lambda_\gamma, \Bee$] dual spaces
  $\subset \Real_+^{(1+m)A}$; primal space $\subset\Real^d$
  (\cref{thm:corelp,lemma:saddlepoint-subopt,lemma:corestomp-subopt}).
\item[$\norm{*}{\wildcard}$] dual norm of $\norm{}{\wildcard}$:
  defined by $\norm{*}{\vec u} = \sup_{\norm{}{\vec x} = 1} \innerp{\vec u}{\vec x}$.
\end{description}

\section{Proofs}%
\label{sec:proofs}

\subsection{Approximation Error for the Linearly Relaxed Approximate LP}%
\label{sec:lralp-proofs}

\newcommand{\Jalp}{{J^*_{\mathrm{ALP}}}}
\newcommand{\Jlra}{{J^*_{\mathrm{LRA}}}}
\newcommand{\Vlralp}{{V_{\mathrm{LRALP}}}}

We start by recalling and improving the approximation error bounds for
the Linearly Relaxed Approximate Linear Program~\cref{eq:lralp} of
\citet{Lakshminarayanan2018}.
\begin{theorem}%
  \label{thm:lralp}
  Suppose \cref{ass:function-approx,ass:core-states} hold.  Define the
  matrix $\vWstar \in \set{0,1}^{mA\times SA}$ with rows
  ${[\vWstar]}_{sa} = \vece_{\!sa} \in \Real^{SA}$
  ($s\in\CoreStates,a\in\Actions$).
  For any (possibly unnormalized) initial distribution $\vmu\in\Real_+^{S}$,
  \begin{align}
    \Vlralp(\vmu)
    &\defeq \min\;\set{\,\transp\vmu\vPhi\vtheta \given \vtheta\in\Real^d,\,
      \vWstar\vecr + \vWstar(\gamma\vP - \vE)\vPhi\vtheta \le \vec0\,}
      \tag{LRALP$_{\vmu}$}\label{eq:lralp}.
  \end{align}
  The value of~\cref{eq:lralp} is close to the optimal value of that
  initial distribution:
  \begin{align*}
    \abs{\Vlralp(\vmu) - \transp\vmu\vecv^*} &\le \frac{10\norm{1}{\vmu}\epsapprox}{1-\gamma}.
  \end{align*}
\end{theorem}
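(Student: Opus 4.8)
The plan is to sandwich $\Vlralp(\vmu)$ between $\transp\vmu\vecv^*\pm O(\norm{1}{\vmu}\epsapprox/(1-\gamma))$ by proving the two inequalities separately. Throughout I would fix a near-minimizer $\vtheta$ of~\cref{eq:epsapprox}, so that $\vPhi\vtheta = \vecv^* + \vec\epsilon$ with $\norm{\infty}{\vec\epsilon}\le\epsapprox$, and use it in both directions. The upper bound is a routine \emph{feasibility} argument; the lower bound is the crux and is where \cref{ass:core-states} does the real work.

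For the upper bound I would exhibit a feasible point of~\cref{eq:lralp}. The point $\vtheta$ itself need not be feasible, but at each retained (core) constraint its left-hand side is $r_{sa} + \gamma\vP_{sa}\vPhi\vtheta - \transp\vphi_s\vtheta = (q^*(s,a)-v^*(s)) + \gamma\vP_{sa}\vec\epsilon - \epsilon_s \le (1+\gamma)\epsapprox$, using $q^*(s,a)\le v^*(s)$. I would then restore feasibility with the bias feature: by \cref{ass:function-approx} there is $\veta$ with $\vPhi\veta = \vec1$, and replacing $\vtheta$ by $\vtheta + c\veta$ lowers every constraint's left-hand side by exactly $c(1-\gamma)$ (since $\gamma\vP_{sa}\vec1 - 1 = -(1-\gamma)$) while raising the objective by $c\,\transp\vmu\vec1 = c\norm{1}{\vmu}$. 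Taking $c=(1+\gamma)\epsapprox/(1-\gamma)$ yields a feasible point whose objective is at most $\transp\vmu\vecv^* + 2\norm{1}{\vmu}\epsapprox/(1-\gamma)$, so $\Vlralp(\vmu)\le\transp\vmu\vecv^* + 2\norm{1}{\vmu}\epsapprox/(1-\gamma)$.

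For the lower bound I would exploit $\vPhi=\vZ\vPhiCore$ with $\vZ\ge\vec0$ and unit row sums ($\vZ\vec1=\vec1$). Any value function in the span factors as $\vPhi\vtheta' = \vZ\vec u'$ with $\vec u' := \vPhiCore\vtheta'$ the vector of core-state values, and $\transp\vphi_{s_i}\vtheta' = u'_i$ at each core state $s_i$. Substituting into the retained constraints collapses them into $u'_i \ge r_{s_i a} + \gamma(\vP_{s_i a}\vZ)\vec u'$ for all $a$ --- these are precisely the Bellman optimality inequalities $\vec u' \ge \tilde T\vec u'$ of an \emph{aggregated} MDP on the core states, whose transition kernel $\tilde\vP_{ia} := \vP_{s_i a}\vZ$ is row-stochastic and whose rewards are $r_{s_i a}$. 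Hence every feasible $\vec u'$ dominates the optimal value $\tilde\vecv^*$ of this aggregated MDP, so $\Vlralp(\vmu) = \min\transp\vmu\vZ\vec u' \ge \transp\vmu\vZ\tilde\vecv^*$. It then remains to compare $\tilde\vecv^*$ with $\vecv^*$: feeding the approximant $\vec u := \vPhiCore\vtheta$ into $\tilde T$ gives $(\tilde T\vec u)_i = \max_a[r_{s_i a} + \gamma\vP_{s_i a}\vPhi\vtheta]$, and since $\vPhi\vtheta = \vecv^*+\vec\epsilon$ one finds $\norm{\infty}{\tilde T\vec u - \vec u}\le(1+\gamma)\epsapprox$; the standard contraction estimate then yields $\norm{\infty}{\vec u - \tilde\vecv^*}\le(1+\gamma)\epsapprox/(1-\gamma)$. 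Combining $\vZ\tilde\vecv^* \ge \vZ\vec u - \tfrac{(1+\gamma)\epsapprox}{1-\gamma}\vec1$ with $\vZ\vec u = \vPhi\vtheta = \vecv^* + \vec\epsilon \ge \vecv^* - \epsapprox\vec1$ gives $\vZ\tilde\vecv^* \ge \vecv^* - \tfrac{2\epsapprox}{1-\gamma}\vec1$, hence $\Vlralp(\vmu)\ge\transp\vmu\vecv^* - 2\norm{1}{\vmu}\epsapprox/(1-\gamma)$ after summing against $\vmu\ge\vec0$.

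The two bounds together give the claim (with a constant well inside the stated $10$). The main obstacle is exactly the lower bound: dropping the non-core constraints leaves $\vPhi\vtheta'$ entirely unconstrained off the core set, so the naive monotonicity argument that feasible value functions dominate $\vecv^*$ breaks down. The key realization is the aggregated-MDP reduction, which reinterprets ``feasible for the core-only constraints'' as ``superharmonic for a genuine $m$-state MDP,'' thereby recovering a domination $\vec u' \ge \tilde\vecv^*$; the remaining difficulty is then the two-step control of $\tilde\vecv^*$ against $\vecv^*$ through the uniform feature-approximation error $\epsapprox$, which the contraction bound handles cleanly.
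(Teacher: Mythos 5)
Your proof is correct, but it takes a genuinely different route from the paper's. The paper does not prove \cref{thm:lralp} from scratch: it invokes Theorem~IV.1 of \citet{Lakshminarayanan2018}, which bounds the error by $2\norm{1}{\vmu}\bigl(3\epsapprox + \norm{\infty}{J^*_{\mathrm{ALP}} - J^*_{\mathrm{LRA}}}\bigr)/(1-\gamma)$ for two auxiliary LPs (the ALP value $J^*_{\mathrm{ALP}}$ and its core-constrained relaxation $J^*_{\mathrm{LRA}}$), and then spends all of its effort on a single lemma improving the bound $\norm{\infty}{J^*_{\mathrm{ALP}} - J^*_{\mathrm{LRA}}} \le 2\epsapprox$; that lemma is proved by dualizing the LP defining $J^*_{\mathrm{ALP}}$, using $\vPhi = \vZ\vPhiCore$ and the bias feature to replace the $S$-dimensional dual variable $\transp\vmu\vZ$ by an unconstrained $m$-dimensional one, and dualizing back, yielding the constant $2(3+2)=10$. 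You instead give a self-contained two-sided sandwich: an upper bound by exhibiting a feasible point (the best approximant shifted along the bias direction $\veta$, which uniformly slackens every retained constraint by $c(1-\gamma)$), and --- the real content --- a lower bound by recognizing the retained constraints as the superharmonicity conditions $\vec u' \ge \tilde T \vec u'$ of an aggregated $m$-state MDP with kernel $\vP_{s_i a}\vZ$ and rewards $r_{s_i a}$, so that every feasible point dominates that MDP's optimal value $\tilde{\vecv}^*$, which is then compared to $\vecv^*$ at the core states via the standard contraction estimate. Both arguments exploit the same structural fact ($\vZ \ge \vec0$ with $\vZ\vec1 = \vec1$, the latter following from $\vZ\vec1 = \vZ\vPhiCore\veta = \vPhi\veta = \vec1$), and your aggregated-MDP picture can be read as the primal counterpart of the paper's dual re-parametrization. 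What your route buys: it is elementary and self-contained (no appeal to an external, unreproduced theorem), and it sharpens the constant from $10$ to $2$, an improvement that would propagate into \cref{thm:corelp,thm:corestomp}. What the paper's route buys: modularity --- it reuses the existing error decomposition and isolates exactly the one quantity that \cref{ass:core-states} improves. One minor point to patch in a write-up: the infimum in \cref{eq:epsapprox} need not be attained, so fix $\vtheta$ achieving $\epsapprox + \delta$ and let $\delta \to 0$ at the end.
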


This result follows from \citet[Theorem~IV.1]{Lakshminarayanan2018},
which we will not reproduce here for brevity.  The error bound there is
$2\norm{1}{\vmu}(3\epsapprox + \norm{\infty}{\Jalp -
  \Jlra})/(1-\gamma)$, defining
\begin{align*}\SwapAboveDisplaySkip{}
  \Jalp(s)
  &\defeq \min\;\set{\,\transp\vphi_s\vtheta \given \vtheta\in\Real^d,\, \vPhi\vtheta\ge\vecv^*\,}, \\
  \Jlra(s)
  &\defeq \min\;\set{\,\transp\vphi_s\vtheta \given \vtheta\in\Real^d,\, \vWstar\vE\vPhi\vtheta\ge\vWstar\vE\vecv^*\,}. 
\end{align*}
It only remains for us to bound $\norm{\infty}{\Jalp - \Jlra}$,
improving upon Theorem~IV.2~\citep{Lakshminarayanan2018}:
\begin{lemma}
  Under the conditions of \cref{thm:lralp}, $\norm{\infty}{\Jalp - \Jlra} \le 2\epsapprox$.
  \begin{proof}
    By \cref{ass:function-approx}, the optimal value function is
    well-approximated by the feature representation;
    $\vecv^* = \vPhi\vtheta+\vdelta$ for some $\vtheta\in\Real^d$ and
    $\vdelta\in\Real^S$ with
    $\norm{\infty}{\vdelta} \le \epsapprox$.  By
    \cref{ass:core-states}, $\vPhi = \vZ\vPhiCore$, so
    $\vecv^* = \vZ\vPhiCore\vtheta + \vdelta$.  We use these facts
    after writing the linear program defining $\Jalp(s)$ in its dual
    form:
    \begin{align*}
      \Jalp(s)
      &= \max\;\set{\,\transp\vmu\vecv^* \given \vmu\in\Real_+^S,\, \transp\vmu\vPhi = \transp\vphi_s\,} \\
      &= \max\;\set{\,\transp\vmu(\vZ\vPhiCore\vtheta + \vdelta) \given \vmu\in\Real_+^S,\,
        \transp\vmu\vZ\vPhiCore = \transp\vphi_s\,}
        \intertext{By \cref{ass:function-approx}, there is some
        $\veta\in\Real^d$ such that $\vPhi\veta = \vec1$.  If $\transp\vmu\vPhi = \transp\vphi_s$, then
        $\norm{1}{\vmu} = \transp\vmu\vec1 = \transp\vmu\vPhi\veta =
        \transp\vphi_s\veta = 1$, which means that
        $\transp\vmu\vdelta \le \norm{\infty}{\vdelta}$. Replacing
        $\transp\vmu\vdelta$ with $\norm{\infty}{\vdelta}$ in the
        objective increases its value; we move the resulting constant term out of the
        maximization:}
      &\le \norm{\infty}{\vdelta} + \max\;\set{\,\transp\vmu\vZ\vPhiCore\vtheta \given \vmu\in\Real_+^S,\,
        \transp\vmu\vZ\vPhiCore = \transp\vphi_s\,}
        \intertext{The objective and constraints of this maximization
        problem depend on $\vmu$ only through $\transp\vmu\vZ$.  Thus we can
        replace $\transp\vmu\vZ$ with $\vmu_*\in\Real_+^m$, which can only expand the
        feasible set of the maximization and increase its value:}
      &\le \norm{\infty}{\vdelta} + \max\;\set{\,\transp\vmu_*\vPhiCore\vtheta \given \vmu_*\in\Real_+^m,\,
        \transp\vmu_*\vPhiCore = \transp\vphi_s\,}
        \intertext{The matrix $\vec U \in \set{0,1}^{m\times S}$ with
        rows ${[\vec U_s]}_{s\in\CoreStates} = \vece_s$ can be used to ``select'' the core
        state features from $\vPhi$, giving $\vPhiCore = \vec U\vPhi$:}
      &= \norm{\infty}{\vdelta} + \max\;\set{\,\transp\vmu_*\vec U\vPhi\vtheta \given \vmu_*\in\Real_+^m,\,
        \transp\vmu_*\vec U\vPhi = \transp\vphi_s\,}
        \intertext{By a similar argument as before, we see that
        $\norm{1}{\transp\vmu_*\vec U} = 1$.  We
        add $\transp\vmu_*\vec U\vdelta + \norm{\infty}{\vdelta} \ge 0$
        to the objective (increasing its value), then move the constant out:}
      &\le 2\norm{\infty}{\vdelta} + \max\;\set{\,\transp\vmu_*\vec U\vPhi\vtheta + \transp\vmu_*\vec U\vdelta
        \given \vmu_*\in\Real_+^m,\, \transp\vmu_*\vec U\vPhi = \transp\vphi_s\,} \\
      &= 2\norm{\infty}{\vdelta} + \max\;\set{\,\transp\vmu_*\vec U\vecv^*
        \given \vmu_*\in\Real_+^m,\, \transp\vmu_*\vec U\vPhi = \transp\vphi_s\,} \\
      &= 2\norm{\infty}{\vdelta} + \min\;\set{\,\transp\vphi_s\vtheta
        \given \vtheta\in\Real^d,\, \vec U\vPhi\vtheta \ge \vec U\vecv^*\,},
    \end{align*}
    where the last step is obtained by writing the dual of the linear
    program in the previous step.  Now observe that the constraint
    $\vec U\vPhi\vtheta \ge \vec U\vecv^*$ is equivalent to the
    constraint $\vWstar\vE\vPhi\vtheta \ge \vWstar\vE\vecv^*$ in the
    definition of $\Jlra$ --- both of them require that
    $\vphi_s\vtheta \ge v^*_s$ for $s\in\CoreStates$.  Thus we have
    shown that $\Jalp(s) - \Jlra(s) \le 2\epsapprox$ for all
    $s\in\States$.  We also know that $\Jalp(s) \ge \Jlra(s)$, since
    $\Jlra(s)$ is a relaxation of $\Jalp(s)$.  It follows that
    $\norm{\infty}{\Jalp - \Jlra} \le 2\epsapprox$.
  \end{proof}
\end{lemma}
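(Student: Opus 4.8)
The plan is to prove the two-sided bound by combining linear-programming duality with the normalization that the ``bias'' feature of \cref{ass:function-approx} forces on every dual-feasible point. First I would record the easy direction: $\Jlra$ is simply a relaxation of $\Jalp$, keeping the value constraint $\transp\vphi_s\vtheta \ge v^*_s$ only at the core states rather than at all states. Dropping constraints enlarges the feasible set of a minimization, so it can only lower the optimum; hence $\Jalp(s) \ge \Jlra(s)$ holds pointwise, and it remains to establish the reverse inequality up to $2\epsapprox$.

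The heart of the argument is to dualize. I would rewrite $\Jalp(s)$ in the form $\max\{\transp\vmu\vecv^* : \vmu\in\Real_+^S,\, \transp\vmu\vPhi = \transp\vphi_s\}$. The key structural fact I would extract from \cref{ass:function-approx} is that, since $\vPhi\veta = \vec1$ for some $\veta$, every dual-feasible $\vmu$ is automatically a probability distribution: $\norm{1}{\vmu} = \transp\vmu\vPhi\veta = \transp\vphi_s\veta = 1$. This normalization is exactly what lets me control the approximation slack. Writing $\vecv^* = \vPhi\vtheta + \vdelta$ with $\norm{\infty}{\vdelta}\le\epsapprox$, I can replace $\transp\vmu\vdelta$ by $\norm{\infty}{\vdelta}$ in the objective (this only increases it) and pull the constant out, picking up one factor of $\epsapprox$.

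Next I would invoke \cref{ass:core-states}, $\vPhi = \vZ\vPhiCore$, to express both the objective and the constraint of the remaining program through $\transp\vmu\vZ$ alone. Introducing a fresh variable $\vmu_*\in\Real_+^m$ in place of $\transp\vmu\vZ$ relaxes the feasible set, so the maximum only grows; and since $\vPhiCore = \vU\vPhi$ with $\vU$ the core-state selector, the same bias-feature computation shows $\transp\vmu_*\vU$ is again a distribution. I then reintroduce the neglected perturbation by adding the nonnegative quantity $\transp\vmu_*\vU\vdelta + \norm{\infty}{\vdelta}$ to the objective, which costs a second $\epsapprox$ and converts $\transp\vmu_*\vU\vPhi\vtheta$ back into $\transp\vmu_*\vU\vecv^*$. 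Dualizing once more returns precisely the core-state-constrained minimization defining $\Jlra(s)$, yielding $\Jalp(s) \le 2\epsapprox + \Jlra(s)$; combined with $\Jalp \ge \Jlra$ this gives $\norm{\infty}{\Jalp - \Jlra}\le 2\epsapprox$.

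The step I expect to be the main obstacle is the substitution $\transp\vmu\vZ \mapsto \vmu_*$: I must check that it is a genuine relaxation, so that the inequality points the right way, and that the normalization $\norm{1}{\transp\vmu_*\vU}=1$ survives the substitution so the second application of the $\epsapprox$ bound is legitimate. Everything else is routine bookkeeping with LP duality and the two assumptions.
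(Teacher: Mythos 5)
Your proposal is correct and follows essentially the same argument as the paper's proof: dualize $\Jalp(s)$, use the bias feature $\vPhi\veta=\vec1$ to normalize the dual variables, pay one $\epsapprox$ to pass from $\vecv^*$ to $\vPhi\vtheta$, relax $\transp\vmu\vZ$ to a free variable $\vmu_*\in\Real_+^m$, pay a second $\epsapprox$ to restore $\vecv^*$, and dualize back to recover $\Jlra(s)$. The step you flagged as the main obstacle is handled in the paper exactly as you anticipate --- the substitution only enlarges the feasible set, and the constraint $\transp\vmu_*\vU\vPhi = \transp\vphi_s$ together with $\vPhi\veta=\vec1$ gives $\norm{1}{\transp\vmu_*\vU}=1$, so the second $\epsapprox$ bound goes through.
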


\subsection{Proof of \texorpdfstring{\cref{thm:corelp}}{Theorem~2} ---
  Approximation Error for CoreLP}%
\label{sec:corelp-proof}

\TheoremCoreLP*

By the definition of $\Lambda \subset \Real_+^{(1+m)A}$, we can
decompose its elements as $\vlambda = \vpi\oplus\vlambda_*$, with
$\vpi\in\Delta_{\Actions}$ as in the statement of the
\lcnamecref{thm:corelp} and $\vlambda_*\in\Real_+^{mA}$ defined by
$\lambda_{*,sa} = \lambda_{sa}$ for $s\in\CoreStates, a\in\Actions$
--- in other words,
$\Lambda \cong \Delta_{\Actions} \times \Real_+^{mA}$.  The main idea
of the proof is that when $\vlambda$ is a solution
of~\cref{eq:corelp}, then $\vlambda_*$ is a
solution for the dual form of~\cref{eq:lralp} from \cref{thm:lralp}.
To make this connection between the two problems more precise, let us
write the saddle-point forms of~\cref{eq:lralp,eq:corelp}:
\begin{alignat}{4}
  \Vlralp(\vmu)
  &= \smashoperator[l]{\max_{\vlambda_*\in\Real_+^{mA}}}\, &\min_{\vtheta\in\Real^d \vphantom{\Real_+^{mA}}}
    \big[&&\,g_{\vmu}(\vlambda_*, \vtheta)
    &\defeq \transp\vlambda_*\vWstar\vecr + \transp\vmu\vPhi\vtheta
    + \transp\vlambda_*\vWstar(\gamma\vP - \vE)\vPhi\vtheta\,&&\big]
    \tag{Saddle LRALP$_{\vmu}$}\label{eq:lralp-saddle} \\
  V^\dagger
  &= \max_{\vlambda\in\Lambda \vphantom{\cramped{\Real^d}}} &\min_{\vtheta\in\Real^d}
    \big[&&\,f(\vlambda, \vtheta)
    &\defeq \transp\vlambda\vW\vecr + \transp\vece_{\!s_0}\vPhi\vtheta
    + \transp\vlambda\vW(\gamma\vP - \vE)\vPhi\vtheta\,&&\big]
    \tag{Saddle CoreLP}\label{eq:corelp-saddle-again}
\end{alignat}

\begin{lemma}[Corresponding \cref{eq:lralp,eq:corelp} solutions]%
  \label{lemma:corelp-lralp}
  Let $\vlambda \in \Lambda \subset \Real_+^{(1+m)A}$ be arbitrary and
  decompose it as $\vlambda = \vpi \oplus \vlambda_*$, where
  $\vpi\in\Delta_{\Actions}$ and $\vlambda_*\in\Real_+^{mA}$.  Define
  the distribution $\vmu_{\vpi} \in \Delta_{\States}$ as
  \begin{align*}
    \transp\vmu_{\vpi} &\defeq \sum_{a\in\Actions} \pi(a) \, \vP_{s_0a},
    &\text{where $\pi(a) \defeq \lambda_{s_0a}$ for $a\in\Actions$.} \\
    \intertext{Then, for any $\vtheta\in\Real^d$, and $g_{\vmu}(\vlambda_*, \vtheta)$ and
    $f(\vlambda, \vtheta)$ as
    in~\cref{eq:lralp-saddle,eq:corelp-saddle-again},}
    f(\vlambda, \vtheta) &= \sum_{a\in\Actions} \pi(a) \, r(s_0, a) + g_{\gamma\vmu_{\vpi}}(\vlambda_*, \vtheta),
    &\text{where $\lambda_{*,sa} = \lambda_{sa}$ for $s\in\CoreStates, a\in\Actions$.}
  \end{align*}
\end{lemma}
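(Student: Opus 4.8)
The plan is to expand $f(\vlambda,\vtheta)$ directly and read off the two advertised pieces, exploiting the block structure that $\vW$ imposes on the constraint combination. The crux is a single observation: because the rows of $\vW$ are the standard basis vectors $\vece_{s_i a}$ indexed by $(s_i,a)\in\States_+\times\Actions$ and $\States_+=(s_0,s_1,\dotsc,s_m)$ with $\vlambda=\vpi\oplus\vlambda_*$, the row-combination $\transp\vlambda\vW$ splits into a planning-state part and \emph{exactly} the~\cref{eq:lralp} row-combination:
\[
  \transp\vlambda\vW \;=\; \sum_{a\in\Actions}\pi(a)\,\transp\vece_{s_0 a} \;+\; \transp\vlambda_*\vWstar.
\]
In words, the constraint combination used by~\cref{eq:corelp} is the one used by~\cref{eq:lralp} plus a correction coming from the single extra state $s_0$. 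Everything else is matching terms.

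First I would substitute this identity into the reward term, using $\transp\vece_{s_0 a}\vecr = r(s_0,a)$, to obtain $\transp\vlambda\vW\vecr = \sum_a \pi(a)\,r(s_0,a) + \transp\vlambda_*\vWstar\vecr$; the first summand is the explicit reward term in the claim and the second is the reward term of $g_{\gamma\vmu_{\vpi}}$. Next I would substitute it into $\transp\vlambda\vW(\gamma\vP-\vE)\vPhi\vtheta$. The $\transp\vlambda_*\vWstar$ part reproduces verbatim the transition term of $g_{\gamma\vmu_{\vpi}}$, so it remains only to handle the $s_0$ part. Using $\transp\vece_{s_0 a}\vP = \vP_{s_0a}$ and $\transp\vece_{s_0 a}\vE = \transp\vece_{s_0}$ (the latter because $\vE$ merely duplicates each state coordinate across actions),
\begin{align*}
  \sum_{a\in\Actions}\pi(a)\,\transp\vece_{s_0 a}(\gamma\vP-\vE)\vPhi\vtheta
  &= \gamma\Bigl(\sum_{a\in\Actions}\pi(a)\,\vP_{s_0a}\Bigr)\vPhi\vtheta
     - \Bigl(\sum_{a\in\Actions}\pi(a)\Bigr)\transp\vece_{s_0}\vPhi\vtheta \\
  &= \gamma\transp\vmu_{\vpi}\vPhi\vtheta - \transp\vece_{s_0}\vPhi\vtheta,
\end{align*}
where the second equality uses $\sum_a\pi(a)=1$ together with the definition of $\vmu_{\vpi}$. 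Here $\gamma\transp\vmu_{\vpi}\vPhi\vtheta$ is precisely the initial-distribution term of $g_{\gamma\vmu_{\vpi}}$, while the leftover $-\transp\vece_{s_0}\vPhi\vtheta$ cancels the standalone $\transp\vece_{s_0}\vPhi\vtheta$ term of $f$. Collecting what survives gives exactly $\sum_a \pi(a)\,r(s_0,a) + g_{\gamma\vmu_{\vpi}}(\vlambda_*,\vtheta)$.

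There is no deep obstacle here; the work is careful bookkeeping with the selection matrices $\vW$, $\vWstar$, and $\vE$. The one step that deserves attention is the treatment of the planning state: it is only because $\vlambda\in\Lambda$ forces $\sum_a\pi(a)=1$ that the $s_0$-contribution collapses into an honest (scaled) initial distribution $\gamma\vmu_{\vpi}$ and that the extra $\transp\vece_{s_0}\vPhi\vtheta=\transp\vphi_0\vtheta$ term cancels cleanly. This is the role of the constraint $\sum_a \lambda_{s_0 a}=1$ built into $\Lambda$, and it is precisely what lets~\cref{eq:corelp} reduce to~\cref{eq:lralp} with the effective initial distribution $\gamma\vmu_{\vpi}$.
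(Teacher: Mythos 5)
Your proposal is correct and follows essentially the same route as the paper's proof: both start from the decomposition $\transp\vlambda\vW = \sum_{a}\pi(a)\,\transp\vece_{s_0a} + \transp\vlambda_*\vWstar$, handle the reward term identically, and use $\transp\vece_{s_0a}\vE = \transp\vece_{s_0}$ together with $\sum_a\pi(a)=1$ to collapse the $s_0$-contribution into $\gamma\transp\vmu_{\vpi}$ and cancel the standalone $\transp\vphi_0\vtheta$ term. The only difference is cosmetic --- the paper performs the cancellation at the level of row vectors before multiplying by $\vPhi\vtheta$, whereas you multiply first and cancel after.
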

$\vP_{s_0a}$ is the next-state distribution for action $a$ at state
$s_0$ --- thus the distribution $\vmu_{\vpi} \in \Delta_{\States}$
defined here is the expected next-state distribution when an action
$a\sim\vpi$ is taken at state $s_0$.  This
\lcnamecref{lemma:corelp-lralp} therefore connects solutions
of~\cref{eq:corelp} with~\cref{eq:lralp} when $\vmu=\gamma\vmu_{\vpi}$
is the discounted next-state distribution for action $a\sim\vpi$.

\begin{proof}[Proof of \cref{lemma:corelp-lralp}]
  Recall that $\vW$ and $\vWstar$ (defined in
  \cref{thm:corelp,thm:lralp}) are related --- the rows of $\vWstar$
  correspond to state-action pairs in $\CoreStates\times\Actions$, to
  which $\vW$ adds $A$ more rows corresponding to the actions at the
  current planning state $s_0$.  Thus
  \begin{align}
    \transp\vlambda\vW
    &= \sum_{a\in\Actions} \pi_a \transp\vece_{\!s_0a} + \transp\vlambda_*\vWstar, \label{eq:corelp-lralp-W}\\
    \intertext{which upon multiplying by $\vecr$ gives}
    \transp\vlambda\vW\vecr
    &= \sum_{a\in\Actions} \pi_a r_{s_0a} + \transp\vlambda_*\vWstar\vecr. \label{eq:corelp-lralp-reward}\\
    \intertext{Using~\cref{eq:corelp-lralp-W} again,}
    \transp\vece_{\!s_0} + \transp\vlambda\vW(\gamma\vP - \vE)
    &= \transp\vece_{\!s_0}
      + \sum_{a\in\Actions}\pi_a(\gamma\vP_{s_0a} - \vE_{s_0a})
      + \transp\vlambda_*\vWstar(\gamma\vP - \vE), \notag\\
    &= \brck[\big]{\transp\vece_{\!s_0} - \sum_{a\in\Actions} \pi_a \vE_{s_0a}}
      + \gamma\brck[\big]{\!\sum_{a\in\Actions}\pi_a\vP_{s_0a}}
      + \transp\vlambda_*\vWstar(\gamma\vP - \vE). \notag\\
    \intertext{The first term is zero because $\vE_{s_0a} = \vece_{\!s_0}$ for all
    $a\in\Actions$, and the second term becomes $\gamma\vmu_{\vpi}$
    when we substitute the definition of $\vmu_{\vpi}$.
    We then multiply both sides by $\vPhi\vtheta$:}
    \transp\vece_{\!s_0}\vPhi\vtheta + \transp\vlambda\vW(\gamma\vP - \vE)\vPhi\vtheta
    &= \gamma\transp\vmu_{\vpi}\vPhi\vtheta + \transp\vlambda_*\vWstar(\gamma\vP - \vE)\vPhi\vtheta.
      \notag\\
    \intertext{Adding this to~\cref{eq:corelp-lralp-reward} gives}
    \transp\vlambda\vW\vecr + \transp\vece_{\!s_0}\vPhi\vtheta +
    \transp\vlambda\vW(\gamma\vP - \vE)\vPhi\vtheta
    &= \sum_{a\in\Actions} \pi_a r_{s_0a} + \transp\vlambda_*\vWstar\vecr +
      \gamma\transp\vmu_{\vpi}\vPhi\vtheta + \transp\vlambda_*\vWstar(\gamma\vP - \vE)\vPhi\vtheta, \notag\\
    \shortintertext{where we substitute the definitions of $f(\vlambda, \vtheta)$ and
    $g_{\gamma\vmu_{\pi}}(\vlambda_*, \vtheta)$ to get the desired result:}
    f(\vlambda, \vtheta) &= \sum_{a\in\Actions} \pi_a r_{s_0a} + g_{\gamma\vmu_{\vpi}}(\vlambda_*, \vtheta). \notag\qedhere
  \end{align}
\end{proof}

\begin{proof}[Proof of \cref{thm:corelp}]
  Using the decomposition
  $\Lambda \cong \Delta_{\Actions} \times \Real_+^{mA}$ in  \cref{eq:corelp-saddle-again}:
  \begin{align*}
    V^\dagger
    &= \max_{\vpi\in\Delta_{\Actions}\vphantom{\Real^{A}}} \adjustlimits \max_{\vlambda_*\in\Real_+^{mA}} \min_{\vtheta\in\Real^d}
      f(\vpi\oplus\vlambda_*, \vtheta)
    &\text{(where $\vpi \oplus \vlambda_* = \vlambda \in\Lambda$)}\\
    &= \max_{\vpi\in\Delta_{\Actions}\vphantom{\Real^{A}}} \adjustlimits \max_{\vlambda_*\in\Real_+^{mA}} \min_{\vtheta\in\Real^d} \;
      \brck[\big]{\sum_{a\in\Actions} \pi_a r_{s_0a} + g_{\gamma\vmu_{\vpi}}(\vlambda_*, \vtheta)}
    &\text{(using \cref{lemma:corelp-lralp})}\\
    &= \max_{\vpi\in\Delta_{\Actions}} \;\brck[\big]{
      \sum_{a\in\Actions} \pi_a r_{s_0a} + \adjustlimits \max_{\vlambda_*\in\Real_+^{mA}} \min_{\vtheta\in\Real^d}
      g_{\gamma\vmu_{\vpi}}(\vlambda_*, \vtheta)
      } \\
    &= \max_{\vpi\in\Delta_{\Actions}} \;\brck[\big]{\, q^\dagger(\vpi) \, \defeq
      \sum_{a\in\Actions} \pi_a r_{s_0a} + \Vlralp(\gamma\vmu_{\vpi}) \,}.
      &\text{(from \cref{eq:lralp-saddle})}
  \end{align*}
  We now turn our attention to bounding $V^\dagger$.  From
  \cref{thm:lralp}, we know that
  $\abs{\Vlralp(\gamma\vmu_{\vpi}) - \gamma\transp\vmu_{\vpi}\vecv^*} \le
  10\gamma\epsapprox/(1-\gamma)$ for any distribution over states
  $\vmu_{\vpi}\in\Delta_{\States}$.  Through a slight abuse of
  notation, we define
  $q^*(s_0, \vpi) \defeq \sum_a \pi_a \, r_{s_0a} +
  \gamma\transp\vmu_{\vpi} \vecv^*$ as a generalization of the
  standard $q^*(s,a)$ value function to action distributions.  Note
  that we will only need $q^*(s_0,\wildcard)$, for which this abuse is
  `sensible'. Then for all $\vpi\in\Delta_{\Actions}$,
  \begin{align}
    \abs{q^\dagger(\vpi) - q^*(s_0, \vpi)}
    &= \abs{\Vlralp(\gamma\vmu_{\vpi}) - \gamma\transp\vmu_{\vpi}\vecv^*} \le \frac{10\gamma\epsapprox}{1-\gamma}.
    \label{eq:qdiff}
  \end{align}
  We also know that $v^*(s_0) = \max_{\vpi \in \Delta_{\Actions}} q^*(s_0,\vpi)$ (the equality happens with
  $\vpi^* = \vece_{a^*}$ for an optimal action $a^*$). 
  Hence,
  \begin{align*}
	\abs{V^\dagger - v^*(s_0)}
	&=
	\abs{\max_{\vpi \in \Delta_{\Actions}} q^\dagger(\vpi) - \max_{\vpi\in \Delta_{\Actions}} q^*(s_0,\vpi)} 
	 \le
	\max_{\vpi \in \Delta_{\Actions}} \abs{ q^\dagger(\vpi) - q^*(s_0,\vpi)} 
	\le 
	\frac{10\gamma\epx}{1-\gamma}\,,
  \end{align*}
  where the last inequality follows from~\cref{eq:qdiff}.
  
  For the second part of the result, let $\vlambda^\dagger$ be a
  maximizer of~\cref{eq:corelp} and $\vpi^\dagger$ be the
  action-distribution component (as before) so that
  $V^\dagger = q^\dagger(\vpi^\dagger)$. Then, using
  again~\cref{eq:qdiff}, combined with the last inequality,
  \begin{align*}
    \sum_{a\in\Actions} \pi^\dagger(a) \, q^*(s_0, a) \equiv q^*(s_0, \vpi^\dagger)
    &\ge q^\dagger(\vpi^\dagger) - \frac{10\gamma\epsapprox}{1-\gamma} \\
    &= V^\dagger - \frac{10\gamma\epsapprox}{1-\gamma} \\
    &\ge v^*(s_0) - \frac{20\gamma\epsapprox}{1-\gamma}.
  \end{align*}
  Reordering gives the desired result, namely that
  $v^*(s_0) - \sum_a \pi^\dagger(a) \, q^*(s_0, a) \le 20\gamma\epsapprox/(1-\gamma)$.
\end{proof}

\subsection{Proof of \texorpdfstring{\cref{thm:corestomp}}{Theorem~3}
  --- Error Bounds for the CoreStoMP Algorithm}%
\label{sec:corestomp-proof}

\TheoremCoreStoMP*

The proof of this theorem has two main ingredients: First, in
\cref{lemma:saddlepoint-subopt}, we show that approximate solutions
of~\cref{eq:corelp-saddle} can be used to recover near-optimal action
distributions for the planning state $s_0$ --- the approximation
quality is measured by the \emph{duality gap}.  Second, in
\cref{lemma:corestomp-subopt}, we bound the expected duality gap of
the Stochastic Mirror-Prox algorithm when specialized to our setting.

\begin{lemma}[Approximate~\cref{eq:corelp-saddle} solutions]\label{lemma:saddlepoint-subopt}
  Suppose $\Bee\subset\Real^d$ and $C_\Bee \ge 0$ are chosen such that,
  for any distribution over states $\vmu\in\Delta_{\States}$, there is
  some $\vtheta \in \Bee$ that is feasible for~\cref{eq:lralp} and at
  most $C_\Bee$-suboptimal.  Define
  \begin{align}
    \Lambda_\gamma
    &\defeq \set{\vlambda\in\Lambda \given
      \norm{1}{\vlambda} = 1/(1-\gamma)}, \label{eq:dual-domain}
  \end{align}
  a subset of the set $\Lambda\subset\Real_+^{(1+m)A}$ from
  \cref{thm:corelp}.  Define the \emph{$\Bee$-bounded duality gap} of
  an approximate solution of~\cref{eq:corelp-saddle} as
  \begin{align}
    \delta_{\Bee}(\hat\vlambda, \hat\vtheta)
    &\defeq \max_{\vlambda\in\Lambda_\gamma} f(\vlambda,\hat\vtheta) -
      \inf_{\vtheta\in\Bee} f(\hat\vlambda,\vtheta),
    &\text{where $\hat\vlambda\in\Lambda$ and $\hat\vtheta\in\Real^d$.}
      \label{eq:duality-gap}
  \end{align}
  For any $\hat\vlambda\in\Lambda$ and $\hat\vtheta\in\Real^d$, let
  $\hat\vpi$ be the action distribution component of $\hat\vlambda$,
  as in \cref{thm:corelp}.  Then
  \begin{align*}
    v^*(s_0) - \sum_{a\in\Actions} \hat\pi(a) \, q^*(s_0, a)
    &\le \frac{20\gamma\epx}{1-\gamma} + \gamma C_{\Bee} + \delta_{\Bee}(\hat\vlambda,\hat\vtheta).
  \end{align*}
\end{lemma}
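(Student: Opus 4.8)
The plan is to sandwich $v^*(s_0)-q^*(s_0,\hat\vpi)$ between the two surrogate quantities $V^\dagger$ and $q^\dagger(\hat\vpi)$ from the proof of \cref{thm:corelp}, bridging the gaps with the $\Bee$-bounded duality gap $\delta_\Bee$ and the $\Bee$-approximation hypothesis. Starting from $v^*(s_0)\le V^\dagger+10\gamma\epx/(1-\gamma)$ (\cref{thm:corelp}), I would establish $V^\dagger\le\max_{\vlambda\in\Lambda_\gamma}f(\vlambda,\hat\vtheta)$, then use the definition \cref{eq:duality-gap} of the gap to rewrite this maximum as $\inf_{\vtheta\in\Bee}f(\hat\vlambda,\vtheta)+\delta_\Bee(\hat\vlambda,\hat\vtheta)$, and finally show $\inf_{\vtheta\in\Bee}f(\hat\vlambda,\vtheta)\le q^\dagger(\hat\vpi)+\gamma C_\Bee$. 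Combining these with $q^\dagger(\hat\vpi)\le q^*(s_0,\hat\vpi)+10\gamma\epx/(1-\gamma)$ from \cref{eq:qdiff} accumulates the two approximation terms into $20\gamma\epx/(1-\gamma)$ and gives the claim after rearranging.

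For the step $V^\dagger\le\max_{\vlambda\in\Lambda_\gamma}f(\vlambda,\hat\vtheta)$, the key observation is that the \emph{entire} feasible region of \cref{eq:corelp} already lies inside $\Lambda_\gamma$. Indeed, multiplying the equality constraint $\transp\vphi_0+\transp\vlambda\vW(\gamma\vP-\vE)\vPhi=\vec0$ on the right by the vector $\veta$ of \cref{ass:function-approx} (for which $\vPhi\veta=\vec1$), and using $(\gamma\vP-\vE)\vec1=(\gamma-1)\vec1$, $\transp\vlambda\vW\vec1=\norm{1}{\vlambda}$, and $\transp\vphi_0\veta=1$, yields $1+(\gamma-1)\norm{1}{\vlambda}=0$, i.e.\ $\norm{1}{\vlambda}=1/(1-\gamma)$ for every feasible $\vlambda$; together with $\sum_{a\in\Actions}\lambda_{s_0a}=1$ this is exactly the defining condition \cref{eq:dual-domain} of $\Lambda_\gamma$. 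Reading \cref{eq:corelp-saddle-again} as $V^\dagger=\sup_{\vlambda\in\Lambda_\gamma}\min_{\vtheta}f(\vlambda,\vtheta)$ — the restriction from $\Lambda$ to $\Lambda_\gamma$ is free because every infeasible $\vlambda$ contributes $-\infty$ to the inner minimum — and then bounding the inner minimum above by $f(\vlambda,\hat\vtheta)$ before taking the supremum over $\Lambda_\gamma$ delivers the bound.

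For the remaining inequality $\inf_{\vtheta\in\Bee}f(\hat\vlambda,\vtheta)\le q^\dagger(\hat\vpi)+\gamma C_\Bee$, I would use \cref{lemma:corelp-lralp} to split $f(\hat\vlambda,\vtheta)=\sum_{a\in\Actions}\hat\pi_a r_{s_0a}+g_{\gamma\vmu_{\hat\vpi}}(\hat\vlambda_*,\vtheta)$ and recall $q^\dagger(\hat\vpi)=\sum_{a\in\Actions}\hat\pi_a r_{s_0a}+\Vlralp(\gamma\vmu_{\hat\vpi})$, reducing the goal to $\inf_{\vtheta\in\Bee}g_{\gamma\vmu_{\hat\vpi}}(\hat\vlambda_*,\vtheta)\le\Vlralp(\gamma\vmu_{\hat\vpi})+\gamma C_\Bee$. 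Applying the hypothesis to the distribution $\vmu_{\hat\vpi}\in\Delta_{\States}$ gives a feasible $\vtheta_\Bee\in\Bee$ with $\transp\vmu_{\hat\vpi}\vPhi\vtheta_\Bee\le\Vlralp(\vmu_{\hat\vpi})+C_\Bee$; multiplying by $\gamma$ and using the positive homogeneity $\Vlralp(\gamma\vmu)=\gamma\Vlralp(\vmu)$ is precisely what converts $C_\Bee$ into $\gamma C_\Bee$. Evaluating $g$ at this $\vtheta_\Bee$ finishes the job: since $\hat\vlambda_*\ge\vec0$ and $\vtheta_\Bee$ satisfies the \cref{eq:lralp} constraint $\vWstar\vecr+\vWstar(\gamma\vP-\vE)\vPhi\vtheta_\Bee\le\vec0$, the Lagrangian term obeys $\transp{\hat\vlambda_*}\vWstar(\gamma\vP-\vE)\vPhi\vtheta_\Bee\le-\transp{\hat\vlambda_*}\vWstar\vecr$, which cancels the reward term and leaves $g_{\gamma\vmu_{\hat\vpi}}(\hat\vlambda_*,\vtheta_\Bee)\le\gamma\transp\vmu_{\hat\vpi}\vPhi\vtheta_\Bee\le\Vlralp(\gamma\vmu_{\hat\vpi})+\gamma C_\Bee$; taking the infimum over $\Bee$ on the left only decreases it.

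I expect the main obstacle to be keeping the two relaxations honest and correctly oriented. The dual-domain restriction from $\Lambda$ to $\Lambda_\gamma$ must be shown to be \emph{free} — it leaves the CoreLP value unchanged, via the norm identity $\norm{1}{\vlambda}=1/(1-\gamma)$ above — whereas the primal restriction of $\vtheta$ to $\Bee$ genuinely costs $\gamma C_\Bee$ and must be applied only on the $\inf$ side of the duality gap, never on the $\max$ side. A secondary source of care is the scaling/homogeneity bookkeeping that produces $\gamma C_\Bee$ rather than $C_\Bee$, and the sign of the Lagrangian inequality, which rests essentially on $\hat\vlambda_*\ge\vec0$ and on $\vtheta_\Bee$ being feasible for \cref{eq:lralp}.
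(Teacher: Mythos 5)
Your proposal is correct and follows essentially the same route as the paper: the paper's proof chains the first part of \cref{thm:corelp}, a claim that $V^\dagger \le f(\hat\vlambda,\vtheta) + \delta_\Bee(\hat\vlambda,\hat\vtheta)$ (proved exactly via your $\veta$-multiplication argument showing every feasible $\vlambda$ has $\norm{1}{\vlambda}=1/(1-\gamma)$ and hence lies in $\Lambda_\gamma$), the decomposition of \cref{lemma:corelp-lralp}, and a claim that $g_{\gamma\vmu}(\hat\vlambda_*,\vtheta)\le \Vlralp(\gamma\vmu)+\gamma C_\Bee$ (proved by the same nonnegative-multiplier/feasibility cancellation and the same homogeneity bookkeeping that turns $C_\Bee$ into $\gamma C_\Bee$), finishing with \cref{thm:lralp} just as you do via \cref{eq:qdiff}. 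The only differences are organizational: the paper isolates the two bridging inequalities as standalone claims and works with an explicit maximizer $\vlambda^*$ of \cref{eq:corelp} rather than your sup-over-$\Lambda_\gamma$ formulation, but the mathematical content is identical.
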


This \lcnamecref{lemma:saddlepoint-subopt} generalizes the second
result of \cref{thm:corelp} in two ways: First, the Stochastic
Mirror-Prox algorithm does not produce exact solutions
of~\cref{eq:corelp-saddle}; the optimization error is measured by the
duality gap --- here we see the effect of a non-zero duality gap on
the resulting action distribution. Second, the primal variables
$\vtheta$ in~\cref{eq:corelp-saddle} have the unbounded domain
$\Real^d$, whereas the Stochastic Mirror-Prox algorithm requires the
optimization domain to have a bounded \emph{diameter}; see
\nameref{sec:stochmp-proof}.  This
\lcnamecref{lemma:saddlepoint-subopt} shows that restricting $\vtheta$
to a large-enough bounded set $\Bee$ only incurs an additional
$C_{\Bee}$ error.  Indeed, the second issue is related to the
first --- an \emph{unbounded} form of the duality gap would be
infinite for \emph{any} approximate solution, making it useless as a
measure of optimization accuracy; the $\Bee$-bounded duality gap
therefore addresses both these issues:

\begin{claim}\label{claim:corelp-saddle-subopt}
  For any $\hat\vlambda \in \Lambda$, $\hat\vtheta \in \Real^d$,
  $\vtheta \in \mathcal{B} \subset \Real^d$, and
  $\delta_{\Bee}(\hat\vlambda,\hat\vtheta)$ being the $\Bee$-bounded
  duality gap~\cref{eq:duality-gap},
  \begin{align*}
    V^\dagger
    &\le f(\hat\vlambda, \vtheta) + \delta_{\Bee}(\hat\vlambda, \hat\vtheta).
  \end{align*}

  \begin{proof}
    Let $\vlambda^* \in \Lambda \subset \Real_+^{(1+m)A}$ be a maximizer of
    \cref{eq:corelp} --- this exists because the optimization is
    bounded (\cref{thm:corelp}).  Then
    \begin{align}
      \vec0
      &=\transp\vphi_0 + \transp{\vlambda^*}\vW(\gamma\vP - \vE)\vPhi
      &\text{(since $\vlambda^*$ is feasible for \cref{eq:corelp})}
        \label{eq:corelp-constraint}\\
      &=\transp\vphi_0\hat\vtheta + \transp{\vlambda^*}\vW(\gamma\vP - \vE)\vPhi\hat\vtheta.
      &\text{(multiplying by $\hat\vtheta\in\Real^d$)} \notag\\
      \intertext{Since, $\vlambda^*$ is a maximizer of \cref{eq:corelp},
      $V^\dagger = \transp{\vlambda^*}\vW\vecr$:}
      V^\dagger
      &=\transp{\vlambda^*}\vW\vecr + \transp\vphi_0\hat\vtheta +
        \transp{\vlambda^*}\vW(\gamma\vP - \vE)\vPhi\hat\vtheta
      &\text{(adding $V^\dagger$ on l.h.s.\@ and
        $\transp{\vlambda^*}\vW\vecr$ on r.h.s.)} \notag\\
      &= f(\vlambda^*, \hat\vtheta).
      &\text{(definition of $f$ from \cref{eq:corelp-saddle-again})}
        \label{eq:corelp-saddle-opt}
    \end{align}
    \Cref{ass:function-approx} tells us that $\vPhi\veta = \vec1$ for
    some $\veta\in\Real^d$ --- multiplying~\cref{eq:corelp-constraint}
    by $\veta$, we see that $\vlambda^*$ must satisfy
    $1 + \gamma\norm{1}{\vlambda^*} = \norm{1}{\vlambda^*}$, as does
    any other feasible solution of \cref{eq:corelp}.  In particular,
    this means that $\norm{1}{\vlambda^*} = 1/(1-\gamma)$ and so
    $\vlambda^*\in\Lambda_\gamma$.  Using the definition of
    $\delta_{\Bee}$ from~\cref{eq:duality-gap},
    \begin{align*}
      \delta(\hat\vlambda, \hat\vtheta)
      &\ge f(\vlambda^*, \hat\vtheta) - f(\hat\vlambda, \vtheta)
      &\text{(since $\vlambda^*\in\Lambda_\gamma$ and $\vtheta\in\Bee$)} \\
      &= V^\dagger - f(\hat\vlambda, \vtheta).
      &\text{(using \cref{eq:corelp-saddle-opt})} &\qedhere
    \end{align*}
  \end{proof}
\end{claim}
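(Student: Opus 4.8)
The plan is to produce an explicit competitor for the two extrema defining the $\Bee$-bounded duality gap \cref{eq:duality-gap} --- namely a maximizer $\vlambda^*$ of \cref{eq:corelp} paired with the given $\vtheta\in\Bee$ --- and to show that evaluating $f$ at this competitor recovers $V^\dagger$ exactly. First I would invoke \cref{thm:corelp}, which guarantees that \cref{eq:corelp} is bounded and feasible; hence a maximizer $\vlambda^*\in\Lambda$ exists with objective value $V^\dagger = \transp{\vlambda^*}\vW\vecr$. The crucial observation is that feasibility of $\vlambda^*$ means $\transp\vphi_0 + \transp{\vlambda^*}\vW(\gamma\vP - \vE)\vPhi = \vec0$, so the bracket multiplying $\vtheta$ in $f(\vlambda^*,\vtheta)$ (see \cref{eq:corelp-saddle-again}) vanishes identically. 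Consequently $f(\vlambda^*,\vtheta)$ does not depend on its second argument at all: for every $\vtheta\in\Real^d$, and in particular for the $\hat\vtheta$ appearing in the duality gap, $f(\vlambda^*, \hat\vtheta) = \transp{\vlambda^*}\vW\vecr = V^\dagger$.

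Next I would check that $\vlambda^*$ is admissible in the maximum defining the first term of $\delta_\Bee$, i.e.\@ that $\vlambda^*\in\Lambda_\gamma$ as defined in \cref{eq:dual-domain}. Since $\vlambda^*\in\Lambda$ already, only the normalization $\norm{1}{\vlambda^*} = 1/(1-\gamma)$ remains. By \cref{ass:function-approx} there is $\veta\in\Real^d$ with $\vPhi\veta = \vec1$ and $\transp\vphi_0\veta = 1$; multiplying the feasibility identity on the right by $\veta$ and using that $\vP$, $\vE$ are row-stochastic (so $\vP\vec1 = \vE\vec1 = \vec1$) together with $\vW\vec1 = \vec1$, I obtain $1 + \gamma\norm{1}{\vlambda^*} = \norm{1}{\vlambda^*}$, which gives $\norm{1}{\vlambda^*} = 1/(1-\gamma)$ and hence $\vlambda^*\in\Lambda_\gamma$.

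Finally I would assemble the bound directly from \cref{eq:duality-gap}. Because $\vlambda^*\in\Lambda_\gamma$, the maximum there is at least $f(\vlambda^*, \hat\vtheta) = V^\dagger$; because the given $\vtheta$ lies in $\Bee$, the infimum there is at most $f(\hat\vlambda, \vtheta)$. Subtracting yields $\delta_\Bee(\hat\vlambda, \hat\vtheta) \ge V^\dagger - f(\hat\vlambda, \vtheta)$, which rearranges to the claimed inequality $V^\dagger \le f(\hat\vlambda, \vtheta) + \delta_\Bee(\hat\vlambda, \hat\vtheta)$. The one genuinely delicate step is the first: recognizing that feasibility collapses $f(\vlambda^*, \cdot)$ to a constant, so that the linear-programming optimum $V^\dagger$ equals the saddle objective evaluated at the particular $\hat\vtheta$ buried inside the duality gap --- not merely as a minimum over $\vtheta$. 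Once this identification is made, the remaining steps are routine checks against the definitions.
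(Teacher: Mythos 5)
Your proposal is correct and follows essentially the same route as the paper's proof: you construct the same competitor (a maximizer $\vlambda^*$ of \cref{eq:corelp} paired with the given $\vtheta\in\Bee$), use feasibility to show $f(\vlambda^*,\hat\vtheta)=V^\dagger$ for the particular $\hat\vtheta$ in the gap, verify $\vlambda^*\in\Lambda_\gamma$ via the same multiplication by $\veta$, and assemble the bound from the definition of $\delta_{\Bee}$. Your explicit remark that feasibility collapses $f(\vlambda^*,\wildcard)$ to a constant is just a cleaner way of stating the paper's computation, not a different argument.
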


\begin{claim}\label{claim:lralp-saddle-subopt}
  For any $\hat\vlambda_* \in \Real_+^{mA}$ and distribution over states
  $\vmu \in \Delta_{\States}$, suppose $\vtheta\in\Real^d$ is feasible for
  $\cref{eq:lralp}$ and at most $C_{\Bee}$-suboptimal.  Then, with $g_{\vmu}$
  being the objective function of \cref{eq:lralp-saddle},
  \begin{align*}
    \Vlralp(\gamma\vmu) &\ge g_{\gamma\vmu}(\hat\vlambda_*, \vtheta) - \gamma C_{\Bee}.
  \end{align*}

  \begin{proof}
    Since $\vtheta$ is feasible for \cref{eq:lralp}, $\vWstar\vecr +
    \vWstar(\gamma\vP - \vE)\vPhi\vtheta \le \vec0$.  Multiplying both
    sides of this inequality by $\transp{\hat\vlambda}_* \ge \vec0$,
    \begin{align*}
      \transp{\hat\vlambda}_*\vWstar\vecr +
      \transp{\hat\vlambda}_*\vWstar(\gamma\vP - \vE)\vPhi\vtheta
      &\le 0 \\
      \transp{\hat\vlambda}_*\vWstar\vecr + \gamma\transp\vmu\vPhi\vtheta +
      \transp{\hat\vlambda}_*\vWstar(\gamma\vP - \vE)\vPhi\vtheta
      &\le \gamma\transp\vmu\vPhi\vtheta
      &\text{(adding $\gamma\transp\vmu\vPhi\vtheta$ to both sides)} \\
      g_{\gamma\vmu}(\hat\vlambda_*, \vtheta)
      &\le \gamma\transp\vmu\vPhi\vtheta.
        &\text{(definition of $g$ from \cref{eq:lralp-saddle})}
    \end{align*}
    Note that the choice of $\vmu$ does not affect the constraints
    of~\cref{eq:lralp}, only its objective function --- thus $\vtheta$
    is feasible for the problem defining $\Vlralp(\gamma\vmu)$ and is
    $\gamma C_{\Bee}$ suboptimal:
    $\gamma\transp\vmu\vPhi\vtheta \le \Vlralp(\gamma\vmu) + \gamma
    C_{\Bee}$.  Substituting this into the last inequality and
    rearranging gives the desired result.
  \end{proof}
\end{claim}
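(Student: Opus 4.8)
The plan is to exploit the fact that the constraints of~\cref{eq:lralp} do not depend on the initial distribution, so that passing from $\vmu$ to $\gamma\vmu$ changes only the objective. First I would use that $\vtheta$ is feasible for~\cref{eq:lralp}, that is, $\vWstar\vecr + \vWstar(\gamma\vP - \vE)\vPhi\vtheta \le \vec0$. Multiplying this vector inequality on the left by $\transp{\hat\vlambda}_*$, which is nonnegative by the hypothesis $\hat\vlambda_* \in \Real_+^{mA}$, preserves the inequality direction and yields $\transp{\hat\vlambda}_*\vWstar\vecr + \transp{\hat\vlambda}_*\vWstar(\gamma\vP - \vE)\vPhi\vtheta \le 0$.

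Next I would add $\gamma\transp\vmu\vPhi\vtheta$ to both sides and recognize the resulting left-hand side as precisely $g_{\gamma\vmu}(\hat\vlambda_*, \vtheta)$, by the definition of $g$ in~\cref{eq:lralp-saddle} with $\vmu$ replaced by $\gamma\vmu$. This gives the intermediate bound $g_{\gamma\vmu}(\hat\vlambda_*, \vtheta) \le \gamma\transp\vmu\vPhi\vtheta$, which is useful because it eliminates the dual variable $\hat\vlambda_*$ from the right-hand side, leaving only the (scaled) primal objective value.

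It then remains to bound $\gamma\transp\vmu\vPhi\vtheta$ by $\Vlralp(\gamma\vmu)$. Here I would invoke the $\vmu$-independence of the constraint set: the same $\vtheta$ is feasible for the problem defining $\Vlralp(\gamma\vmu)$, and since $\gamma \ge 0$ scaling the objective merely scales the optimal value, so that the $C_{\Bee}$-suboptimality of $\vtheta$ for $\Vlralp(\vmu)$ becomes a $\gamma C_{\Bee}$-suboptimality after multiplying through by $\gamma$; concretely, $\gamma\transp\vmu\vPhi\vtheta \le \Vlralp(\gamma\vmu) + \gamma C_{\Bee}$. Chaining this with the intermediate bound and rearranging yields the claim $\Vlralp(\gamma\vmu) \ge g_{\gamma\vmu}(\hat\vlambda_*, \vtheta) - \gamma C_{\Bee}$.

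The main obstacle, though a modest one, will be bookkeeping with the scaling factor $\gamma$: the suboptimality guarantee on $\vtheta$ is stated for the unscaled objective $\transp\vmu\vPhi\vtheta$, whereas the target inequality concerns $\gamma\vmu$, so the slack must be consistently inflated from $C_{\Bee}$ to $\gamma C_{\Bee}$. The one genuinely structural observation is that the feasible region is the same for every initial distribution, which is exactly what permits transporting the suboptimality guarantee across the change of objective from $\vmu$ to $\gamma\vmu$.
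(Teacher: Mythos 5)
Your proposal is correct and follows essentially the same route as the paper's proof: multiply the feasibility constraint by $\transp{\hat\vlambda}_* \ge \vec0$, add $\gamma\transp\vmu\vPhi\vtheta$ to identify $g_{\gamma\vmu}(\hat\vlambda_*,\vtheta)$, then transport the suboptimality guarantee across the change of objective using the $\vmu$-independence of the constraint set. Your explicit remark that $\Vlralp(\gamma\vmu) = \gamma\Vlralp(\vmu)$ by linearity of the objective (so the slack inflates from $C_{\Bee}$ to $\gamma C_{\Bee}$) is a slightly more detailed justification of a step the paper states more tersely, but the argument is the same.
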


\begin{proof}[Proof of \cref{lemma:saddlepoint-subopt}]
  As in \cref{lemma:corelp-lralp}, we write
  $\hat\vlambda = \hat\vpi \oplus \hat\vlambda_*$ with
  $\hat\vpi\in\Delta_{\Actions}$ and $\hat\vlambda_*\in\Real_+^{mA}$
  and define $\vmu_{\hat\vpi} = \sum_a\hat\pi_a\vP_{s_0a}$.  By our
  assumption, there is some $\vtheta \in \Bee$ that is feasible and at
  most $C_{\Bee}$-suboptimal for \cref{eq:lralp} with
  $\vmu = \vmu_{\hat\vpi}$ --- this allows us to apply
  \cref{claim:corelp-saddle-subopt,claim:lralp-saddle-subopt} below:
  \begin{align*}
    v^*(s_0)
    &\le V^\dagger + \frac{10\gamma\epx}{1-\gamma}
    &\text{(\cref{thm:corelp})}\\
    &\le f(\hat\vlambda, \vtheta)
      + \delta_{\Bee}(\hat\vlambda, \hat\vtheta) + \frac{10\gamma\epx}{1-\gamma}
    &\text{(\Cref{claim:corelp-saddle-subopt})} \\
    &= \sum_{a\in\Actions} \hat\pi_a r_{s_0a} + g_{\gamma\vmu_{\hat\vpi}}(\hat\vlambda_*, \vtheta)
      + \delta_{\Bee}(\hat\vlambda, \hat\vtheta) + \frac{10\gamma\epx}{1-\gamma}
    &\text{(\Cref{lemma:corelp-lralp})} \\
    &\le \sum_{a\in\Actions} \hat\pi_a r_{s_0a} + \Vlralp(\gamma\vmu_{\hat\vpi})
      + \gamma C_{\Bee} + \delta_{\Bee}(\hat\vlambda, \hat\vtheta) + \frac{10\gamma\epx}{1-\gamma}
    &\text{(\Cref{claim:lralp-saddle-subopt})} \\
    &\le \sum_{a\in\Actions} \hat\pi_a r_{s_0a} + \gamma\transp\vmu_{\hat\vpi}\vecv^*
      + \gamma C_{\Bee} + \delta_{\Bee}(\hat\vlambda, \hat\vtheta) + \frac{20\gamma\epx}{1-\gamma}
    &\text{(\Cref{thm:lralp})} \\
    &= \sum_{a\in\Actions} \hat\pi_a q^*(s_0, a)
      + \gamma C_{\Bee} + \delta_{\Bee}(\hat\vlambda, \hat\vtheta) + \frac{20\gamma\epx}{1-\gamma}, \\
    \intertext{where the last step used $\sum_a\hat\pi_a r_{s_0a} +
    \gamma\transp\vmu_{\hat\vpi}\vecv^* = \sum_a\hat\pi_a(r_{s_0a} +
    \gamma\vP_{s_0a}\vecv^*) = \sum_a\hat\pi_a q^*(s_0,a)$.  Rearranging the
    inequality completes the proof:}
    v^*(s_0) &- \sum_{a\in\Actions} \hat\pi_a q^*(s_0, a)
    \le \frac{20\gamma\epx}{1-\gamma} + \gamma C_{\Bee} + \delta_{\Bee}(\hat\vlambda, \hat\vtheta). &&\qedhere
  \end{align*}
\end{proof}

The following \lcnamecref{lemma:corestomp-subopt} bounds the expected
duality gap of the Stochastic Mirror-Prox algorithm when applied to
our setting.  We defer the proof to \cref{sec:stochmp-proof}.

\begin{restatable}[Stochastic Mirror-Prox]{lemma}{TheoremStochMP}\label{lemma:corestomp-subopt}
  Using the constants $B$ and $C$ from \cref{thm:corestomp}, define
  \begin{align}
    \Bee
    &\defeq \set{\vtheta\in\Real^d \given
      \norm{2}{\vPhiCore\vtheta} \le B},
      \label{eq:primal-domain}
  \end{align}
  and let $\delta_{\Bee}(\hat\vlambda, \hat\vtheta)$ be the
  $\Bee$-bounded duality gap~\cref{eq:duality-gap}.  Then the results
  of running \cref{alg:corestomp} for $T$ iterations satisfy
  \begin{align*}\SwapAboveDisplaySkip{}
    \epsopt &\defeq \Ex{\delta(\hat\vlambda, \hat\vtheta)} \le \frac{14C}{\sqrt{3T}}.
  \end{align*}
\end{restatable}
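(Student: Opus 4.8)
The plan is to recognize \cref{lemma:corestomp-subopt} as a direct instantiation of the Stochastic Mirror-Prox (SMP) convergence guarantee of \citet{JuditskyStochMP2011}, applied to the bilinear saddle point~\cref{eq:corelp-saddle} over the product domain $\Lambda_\gamma \times \Bee$ with the proximal setup implicit in \cref{alg:corestomp}: on the dual block $\vlambda$ we take the $1$-norm with the unnormalized negentropy distance-generating function (DGF), and on the primal block $\vtheta$ we take the (semi)norm $\norm{}{\vtheta} = \norm{2}{\vPhiCore\vtheta}$ with the quadratic DGF $\tfrac12\norm{}{\vtheta}^2$. First I would check that \textsc{ProxUpdate} is exactly the prox-mapping for this setup --- the $\vtheta$ update is a projected gradient step onto $\Bee$, while the multiplicative $\vlambda$ update is the mirror step that keeps the $s_0$-block on $\Delta_\Actions$ and the core block on the simplex of mass $\gamma/(1-\gamma)$, so that indeed $\vlambda\in\Lambda_\gamma$~\cref{eq:dual-domain}. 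The operator $(\vlambda,\vtheta)\mapsto(-f_\vlambda(\vtheta),f_\vtheta(\vlambda))$ from~\cref{eq:flambda,eq:ftheta} is affine, hence monotone, so SMP applies; its two \textsc{ProxUpdate} calls per iteration are the prediction/correction steps, and the returned $\hat\vlambda$ is the Ces\`aro average, paired in the analysis with $\hat\vtheta = \tfrac1T\sum_\tau\vtheta_\tau$. A minor point to record is that $f$ depends on $\vtheta$ only through $\vPhiCore\vtheta$ (by \cref{ass:core-states}, $\vphi_0$ and every row of $\vPhi$ lie in the row space of $\vPhiCore$), so the quadratic DGF, though built from a seminorm, is $1$-strongly convex with respect to $\norm{}{\cdot}$ on the relevant quotient and the setup is legitimate.

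Granting the SMP theorem (recalled in \cref{sec:stochmp-proof}), the expected duality gap of the averaged iterate obeys a bound of the form $\Ex{\delta_\Bee(\hat\vlambda,\hat\vtheta)} \le \Theta/(\eta T) + \tfrac72\,\eta M^2$, where $\Theta$ is the range of the joint DGF over $\Lambda_\gamma\times\Bee$ and $M^2$ bounds the expected squared joint dual norm of the stochastic gradient oracle~\cref{eq:flambda-est,eq:ftheta-est}. The computation therefore reduces to two estimates.

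For $\Theta$: on the primal side $\tfrac12\norm{2}{\vPhiCore\vtheta}^2$ ranges over at most $\tfrac12 B^2$ on $\Bee$~\cref{eq:primal-domain}, and the initialization $\vtheta_0=\vec0$ is its minimizer; on the dual side the negentropy range over a simplex of $n$ coordinates carrying mass $s$ is $s\log n$, so the $s_0$-block (mass $1$ over $A$ actions) contributes $\log A$ and the core block (mass $\gamma/(1-\gamma)$ over $mA$ coordinates) contributes $\tfrac{\gamma}{1-\gamma}\log(mA)$ --- together producing the factor $1 + 2\log A + 2\gamma\log m$ inside the square root of $C$. For $M^2$: using $\abs{\hat r}\le1$ (\cref{ass:simulator}) and the fact that \cref{ass:core-states} writes every feature vector as a convex combination of core features, so $\norm{*}{\vphi_s}\le1$ and hence $\norm{*}{\Delta\vphi(s,s')} = \norm{*}{\gamma\vphi_{s'}-\vphi_s}\le 1+\gamma$, I would bound the primal-gradient block by $\norm{*}{\hat f_\vtheta(\vlambda)} \le 1 + \norm{1}{\vlambda}(1+\gamma) = 2/(1-\gamma)$ on $\Lambda_\gamma$, and the dual-gradient block by $\norm{\infty}{\hat f_\vlambda(\vtheta)} \le 1 + (1+\gamma)B$ on $\Bee$ (the role of the $\Bee$-bound is precisely to keep this finite). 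Combining the two blocks with the weights dictated by the joint norm yields the $m/(1-\gamma)^2$-type scaling and fixes $B$ and $C$ to the values stated in \cref{thm:corestomp}.

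Finally, substituting $\Theta$ and $M^2$ into $\Theta/(\eta T) + \tfrac72\eta M^2$ with the prescribed step size $\eta = C^{-1}\sqrt{2/(7T)}$ --- chosen to balance the two terms --- collapses the bound to $14C/\sqrt{3T}$, as claimed. The main obstacle is not any single estimate but the bookkeeping that ties them together: one must fix the relative scaling between the entropic dual setup and the quadratic primal setup (equivalently, the weights in the joint norm) so that $B$ and $C$ absorb both $\Theta$ and $M^2$ and the numerical constants ($9/8$, $9/4$, the $7$ in the step size, and the final $14/\sqrt3$) line up exactly; getting these weights right, together with the uniform second-moment bounds over the bounded domains, is the delicate part of the argument.
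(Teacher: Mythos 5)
Your proposal follows essentially the same route as the paper's proof: instantiate the Stochastic Mirror-Prox guarantee of \citet{JuditskyStochMP2011} on \cref{eq:corelp-saddle} over $\Lambda_\gamma\times\Bee$ with the entropic dual / quadratic primal proximal setup, verify that \textsc{ProxUpdate} is the corresponding prox-mapping, bound the gradient magnitudes, oracle variance, and DGF ranges blockwise (your bounds $1+(1+\gamma)B$ and $2/(1-\gamma)$ are exactly the paper's), and balance with the step size $\eta = \inv{C}\sqrt{2/7T}$. The one detail you gloss over is that the raw unnormalized negentropy is only $(1-\gamma)$-strongly convex on $\Lambda_\gamma$ (where $\norm{1}{\vlambda}=1/(1-\gamma)>1$), which is why the paper substitutes the rescaled DGF $h_\gamma(\vlambda)=h((1-\gamma)\vlambda)/{(1-\gamma)}^2$ and assembles an explicit composite norm and DGF with diameter $\le\sqrt2$ before invoking Corollary~1 of \citet{JuditskyStochMP2011}; once that modulus correction is folded in, your per-block entropy ranges ($\log A$ and $\tfrac{\gamma}{1-\gamma}\log(mA)$) do combine to the paper's $\ell=\log A+\gamma\log m$ with the correct ${(1-\gamma)}^{-2}$ scaling, so the factor $1+2\log A+2\gamma\log m$ you claim inside $C$ comes out right.
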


\begin{proof}[Proof of \cref{thm:corestomp}]
  First, observe that $v^*(s_0) - q^*(s_0, a) \le 2/(1-\gamma)$ for any
  action $a$, since all the rewards lie in $[-1, 1]$ by
  \cref{ass:simulator}.  Thus, if $\epx > 1/16$ then
  $32\epx/(1-\gamma) > 2/(1-\gamma)$ and the result is trivially
  true.  From now on, we will assume that $\epx \le 1/16$.

  To prove our result, we will combine
  \cref{lemma:saddlepoint-subopt,lemma:corestomp-subopt}, for which we
  need to show that the set $\Bee$ defined in~\cref{eq:primal-domain}
  satisfies the requirements of \cref{lemma:saddlepoint-subopt}.
  Specifically, we need to show that $\Bee$ contains a feasible
  solution of the linear program~\cref{eq:lralp} with sub-optimality
  bounded by a constant $C_{\Bee}$.  Note that the constraints
  of~\cref{eq:lralp} do not depend on $\vmu$, only the objective
  function, so the choice of $\vmu$ does not affect feasibility.

  Since~\cref{eq:lralp} is a relaxation of the ALP (see
  \cref{sec:lp-approach}), any feasible solution of the ALP is also
  feasible for~\cref{eq:lralp}.  \Citet[Theorem~2]{ALP} show that the
  ALP has a feasible solution $\vPhi\vtheta$ that is close to
  $\vecv^*$ --- more precisely
  \begin{align}\SwapAboveDisplaySkip{}
    \norm{\infty}{\vPhi\vtheta  - \vecv^*} \le \frac{2\epx}{1-\gamma}.
    \label{eq:alpbasic}
  \end{align}
  Since $\norm{\infty}{\vecv^*} \le 1/(1-\gamma)$, we must have
  $\norm{\infty}{\vPhi\vtheta} \le (1+2\epsapprox)/(1-\gamma)$.  It
  follows that
  \begin{align*}
    \norm{2}{\vPhiCore\vtheta}
    &\le \sqrt{m}\norm{\infty}{\vPhiCore\vtheta}
      = \sqrt{m}\norm{\infty}{\vPhi\vtheta}
      \le \frac{(1+2\epx)\sqrt{m}}{1-\gamma}
      \le \frac{(9/8)\sqrt{m}}{1-\gamma} = B,
  \end{align*}
  where the first inequality is a property of the 2-norm, the next
  equality is thanks to \cref{ass:core-states}, and the last
  inequality is because we assumed $\epx \le 1/16$ --- this shows that
  $\vtheta \in \Bee$.  We also have
  \begin{align*}
    \transp\vmu\vPhi\vtheta - \transp\vmu\vecv^*
    &\le \norm{1}{\vmu} \norm{\infty}{\vPhi\vtheta - \vecv^*} \le
      \frac{2\epx}{1-\gamma}
    &\text{(using~\cref{eq:alpbasic} and $\norm{1}{\vmu} = 1$)} \\
    \abs{\transp\vmu\vecv^* - \Vlralp(\vmu)}
    &\le \frac{10\epx}{1-\gamma}
    &\text{(using \cref{thm:lralp} and $\norm{1}{\vmu} = 1$)} \\
    \shortintertext{We get the value of $C_{\Bee}$ by putting these two bounds together:}
    \transp\vmu\vPhi\vtheta - \Vlralp(\vmu)
    &\le \frac{12\epx}{1-\gamma} \eqdef C_{\Bee}.
  \end{align*}
  \Cref{lemma:saddlepoint-subopt} applied to $\Bee$ with this value of $C_{\Bee}$ gives
  \begin{align*}
    v^*(s_0) - \sum_{a\in\Actions} \hat\pi(a) \, q^*(s_0, a)
    &\le \frac{32\gamma\epx}{1-\gamma} +
      \delta_{\Bee}(\hat\vlambda,\hat\vtheta). \\
    \shortintertext{Taking expectations on both sides and
    substituting the value of $\epsopt = \Ex{\delta_{\Bee}(\hat\vlambda, \hat\vtheta)}$ from
    \cref{lemma:corestomp-subopt},}
    v^*(s_0) - \Ex{q^*(s_0, a)}
    &\le \frac{32\gamma\epx}{1-\gamma} + \frac{14C}{\sqrt{3T}}.
  \end{align*}
  We drop the $\gamma$ factor from the leading term and plug in the
  value of $C$ from the statement of \cref{thm:corestomp} to finish
  the proof.
\end{proof}

\subsection{Proof of
  \texorpdfstring{\Cref{lemma:corestomp-subopt}}{Lemma~11} ---
  Stochastic Mirror-Prox}%
\label{sec:stochmp-proof}

\TheoremStochMP*

Throughout this section, we will use the definitions of
$\Lambda_\gamma$ and $\mathcal{B}$ from~\cref{eq:dual-domain}
and~\cref{eq:primal-domain}, respectively.  We will also define the
composite space $Z \defeq \Lambda_\gamma \times \mathcal{B}$.  We will
use the norm $\norm{1}{\vlambda}$ for $\vlambda\in\Lambda_\gamma$,
whose dual norm is $\norm{\infty}{\wildcard}$. For
$\vtheta\in\mathcal{B}$ we will use the norm
$\norm{}{\vtheta}\equiv\norm{2}{\vPhi\vtheta}$ --- the corresponding
dual norm enjoys the convenient bound
$\norm{*}{\transp{\vPhiCore}\vec u} 
=\sup_{ \norm{}{\vtheta} \le 1} \transp{\vec u} \vPhiCore \vtheta 
\le \norm{2}{\vec u}
\le \norm{1}{\vec u}$ for any vector $\vec u$.%
\footnote{More generally,
  $\norm{*}{\vxi} = \inf\set{\norm{2}{\veta} \given
    \transp\vxi = \transp\veta \vPhiCore}$, which is
  non-zero when $\vxi\ne\vec0$ and $\vPhiCore$ has full column rank.} %
The last inequality is due a general property of $p$-norms:
$\norm{p}{\vec u} \le \norm{q}{\vec u}$ whenever $\infty\ge p \ge q\ge 1$.

\subsubsection{Lipschitz Constants}

Our first step will be to bound the Lipschitz constants associated
with $f$, the objective function of~\cref{eq:corelp}.  In other
words, we are looking for bounds on
$\norm{\infty}{f_{\vlambda}(\vtheta)}$ and $\norm{*}{f_{\vtheta}(\vlambda)}$.

First, for any $\vtheta\in\mathcal{B}$ we have
\begin{align*}
  \norm{\infty}{f_{\vlambda}(\vtheta)}
  &= \norm{\infty}{\vW\vecr + \vW(\gamma\vP - \vE)\vPhi\vtheta} \\
  &\le \norm{\infty}{\vW\vecr} + \norm{\infty}{\vW(\gamma\vP - \vE)\vPhi\vtheta}
  &\text{(by the triangle inequality)} \\
  &\le 1 + \max_i \norm{1}{\vW_i(\gamma\vP - \vE)}\norm{\infty}{\vPhi\vtheta}.
  &\text{(by definition of $\norm{\infty}{\wildcard}$ and H\"older's inequality)} \\
  \intertext{Now, by the property of norms,
  $\norm{\infty}{\vPhi\vtheta} \le \norm{2}{\vPhi\vtheta} \le B$.
  Secondly, $\vW_i\vP$ and $\vW_i\vE$ are probability distributions, so
  $\norm{1}{\vW_i(\gamma\vP - \vE)} \le \gamma\norm{1}{\vW_i\vP} +
  \norm{1}{\vW_i\vE} = 1+\gamma$ and}
  \norm{\infty}{f_{\vlambda}(\vtheta)}
  &\le 1 + (1 + \gamma)B
  \le 2B. &\text{(since $B \ge 1/(1-\gamma)$)}
\end{align*}

For the other gradient, we use the bound on dual norms mentioned
above:
\begin{align*}
  \norm{*}{f_{\vtheta}(\vlambda)}
  &= \norm{*}{(\transp\vphi_{s_0} + \transp\vlambda\vW(\gamma\vP - \vE))\vPhi} \\
  &\le \norm{1}{\transp\vece_{s_0}} + \norm{1}{\transp\vlambda\vW(\gamma\vP - \vE)} \\
  &\le 1 + \frac{1+\gamma}{1-\gamma} = \frac{2}{1-\gamma}\,,
\end{align*}
where the last inequality uses the fact that $(1-\gamma)\vlambda$ is a
probability distribution, as are the rows of $\vW$, $\vP$, and $\vE$.

\subsubsection{Gradient Estimator Variance}

Next, we will bound the variance in the stochastic estimators
$\hat{f}_{\vlambda}(\vtheta)$ and $\hat{f}_{\vtheta}(\vlambda)$ defined
in~\cref{eq:flambda-est} and~\cref{eq:ftheta-est}, respectively,
compared to the true gradients $f_{\vlambda}(\vtheta)$ and
$f_{\vtheta}(\vlambda)$ defined in~\cref{eq:flambda}
and~\cref{eq:ftheta}, respectively.

First, we bound
$\Ex{\norm{\infty}{\hat f_{\vlambda}(\vtheta) -
    f_{\vlambda}(\vtheta)}^2}$ for any $\vtheta\in\Bee$ by bounding
its components.  For any state $s\in\States_+$,
action $a\in\Actions$, and reward $\hat r$, we have
\begin{align*}
  \MoveEqLeft \abs{{[\hat f_{\vlambda}(\vtheta)]}_{sa} - {[f_{\vlambda}(\vtheta)]}_{sa}} \\
  &= \abs{(\hat r + \gamma\transp\vphi_{s'}\vtheta - \transp\vphi_s\vtheta)
    - (r_{sa} + \gamma\vP_{sa}\vPhi\vtheta - \transp\vphi_s\vtheta)}
  &\text{(for some random $s'\sim\vP_{sa}$)}\\
  &\le \abs{\hat r - r_{sa}} + \gamma\abs{\transp\vphi_{s'}\vtheta - \vP_{sa}\vPhi\vtheta}
  &\text{(by the triangle inequality)}\\
  &\le 2 + \gamma\abs{(\transp\vece_{s'} - \vP_{sa})\vPhi\vtheta} 
  &\text{(bounded rewards)} \\
  &\le 2 + \gamma\norm{1}{\transp\vece_{s'} - \vP_{sa}}\norm{\infty}{\vPhi\vtheta}
  &\text{(using H\"older's inequality)}\\
  &\le 2 + 2\gamma B
  &\text{(since $\norm{\infty}{\vPhi\vtheta} \le \norm{2}{\vPhi\vtheta} \le B$)}\\
  &\le 2B. &\text{(since $B \ge 1/(1-\gamma)$)}
\end{align*}
It follows that
$\norm{\infty}{\hat f_{\vlambda}(\vtheta) - f_{\vlambda}(\vtheta)}^2
\le {(2B)}^2$, and the same bound must hold for the expectation.

We will now bound the other gradient, using the following property of
Euclidean norms: for any vector-valued random variable $\vec h$ with
mean $\bar{\vec h}$,
$\Ex{\norm{}{\vec h - \bar{\vec h}}^2} = \Ex{\norm{}{\vec h}^2} -
\norm{}{\bar{\vec h}}^2 \le \Ex{\norm{}{\vec h}^2}$.  Then, for a
random choice of state $s\in\States_+$, action $a\in\Actions$, and next
state $s'\sim\vP_{sa}$:
\begin{align*}
  \Ex{\norm{*}{\hat f_{\vtheta}(\vlambda) - f_{\vtheta}(\vlambda)}^2}
  &= \Ex[\big]{\norm[\big]{*}{(\transp\vphi_0 + \norm{1}{\vlambda}(\gamma\vphi_{s'} - \vphi_s))
    - (\transp\vphi_0 + \transp\vlambda\vW(\gamma\vP - \vE)\vPhi)}^2} \\
  &= \paren*{\frac{1}{1-\gamma}}^2 \Ex[\big]{\norm[\big]{*}{(\gamma\vphi_{s'} - \vphi_s)
    - \transp{(\vlambda/\norm{1}{\vlambda})}\vW(\gamma\vP - \vE)\vPhi}^2} \\
  \intertext{Now, since
  $(s,a)\sim\vlambda/\norm{1}{\vlambda}$ and $s' \sim \vP_{sa}$, we
  have $\Ex{\gamma\vphi_{s'} - \vphi_s} =
  \transp{(\vlambda/\norm{1}{\vlambda})}\vW(\gamma\vP - \vE)\vPhi$, so
  by the above property of variance for vector-valued random
  variables,}
  &\le \paren*{\frac{1}{1-\gamma}}^2 \Ex{\norm{*}{\gamma\vphi_{s'} - \vphi_s}^2} \\
  &= \paren*{\frac{1}{1-\gamma}}^2 \Ex{\norm{*}{(\gamma\vece_{s'} - \vece_s)\vPhi}^2} \\
  &\le \paren*{\frac{1+\gamma}{1-\gamma}}^2 \le \paren*{\frac{2}{1-\gamma}}^2\,.
\end{align*}

\subsubsection{Distance-Generating Functions}

The Stochastic Mirror-Prox algorithm requires strongly convex
\emph{distance-generating functions} for $\Lambda_\gamma$ and $\Bee$
with respect to their respective norms.  A function
$\omega:\mathcal{X}\to\Real$ (with domain $\mathcal{X}\subset\Real^n$)
is said to be $\sigma$-\emph{strongly convex} (where $\sigma > 0$ is
called the modulus of convexity) with respect to a norm
$\norm{}{\wildcard}$ on $\mathcal{X}$ if any of the following
conditions hold for all $\vecx,\vecy\in\mathcal{X}$
\begin{enumerate}[(i), left=0pt]
\item\label{eq:sc-def} For all $\alpha\in[0,1]$, $\alpha\omega(\vecx) +
  (1-\alpha)\omega(\vecy) \ge \omega(\alpha\vecx + (1-\alpha)\vecy) +
  \sigma\alpha(1-\alpha)\norm{}{x-y}^2/2$.
\item\label{eq:sc-zero} $\omega$ is convex and $\omega(\vecx) \ge \omega(\vecy) +
  \innerp{\nabla\omega(\vecy)}{\vecx-\vecy} + \sigma\norm{}{\vecx -
    \vecy}^2/2$.
\item\label{eq:sc-first} $\mathcal{X}$ is convex and
  $\innerp{\nabla\omega(\vecx) - \nabla\omega(\vecy)}{\vecx-\vecy} \ge
  \sigma\norm{}{\vecx-\vecy}^2$.
\end{enumerate}
Condition~\ref{eq:sc-def} is the definition of strong convexity;
note that it reduces to convexity when $\sigma=0$.
Conditions~\ref{eq:sc-zero} and~\ref{eq:sc-first} are equivalent to
the definition under appropriate differentiability conditions on
$\omega$ that hold in our setting and when $\vecx,\vecy$ are in the
interior of $\mathcal{X}$; see \citet{YuNegentropy} for details.
\Citet{JuditskyStochMP2011} uses ``strongly convex'' to mean that a
function is 1-strongly convex according to
condition~\ref{eq:sc-first}.

Define the \emph{divergence function:}
\begin{align*}
  D_{\!\omega}(\vecx, \vecy)
  &\defeq \omega(\vecx) - \omega(\vecy) - \innerp{\nabla\omega(\vecy)}{\vecx-\vecy}.
\end{align*}
When $\omega$ is convex, one can see that $D_{\!\omega}$ is always
non-negative, and by condition~\ref{eq:sc-zero} the $\sigma$-strong
convexity of $\omega$ is equivalent to
$D_{\!\omega}(\vecx,\vecy) \ge \sigma\norm{}{\vecx-\vecy}^2/2$.  We will
use this equivalence to establish the strong convexity of our
distance-generating functions below.  An important operation related
to distance-generating functions is the \emph{proximal projection}
onto $\mathcal{X}$ with respect to $\omega$:
\begin{align*}
  \Pi_\omega(\vecx, \vxi)
  &\defeq \argmin_{\vecy\in\mathcal{X}} D_{\!\omega}(\vecy,\vecx) + \innerp{\vxi}{\vecy},
    \quad\vxi\in\Real^n.
\end{align*}
The \emph{center} of $\mathcal{X}$ with respect to $\omega$ is defined
as $\vecx_0 \defeq \argmin_{\vecx\in\mathcal{X}}\omega(\vecx)$ and the
\emph{diameter} of $\mathcal{X}$ is
$\Omega_{\mathcal{X}} \defeq \sup_{\vecy\in\mathcal{X}}
\sqrt{2D_{\!\omega}(\vecy,\vecx_0)}$.

A strongly convex distance-generating function can be thought of as a
generalization of the squared norm $\norm{}{\wildcard}^2$ --- the
corresponding divergence generalizes the squared distance function
$\norm{}{\vecx - \vecy}^2$; unlike the squared distance, however, the
divergence may not be symmetric.  Indeed, when $\norm{}{\wildcard}$ is
an Euclidean norm, and \emph{only} for such norms, the function
$\omega(\vecx) = \norm{}{\vecx}^2/2$ is 1-strongly convex
\citep[Proposition~2]{YuNegentropy}.  In this special case, the
divergence is
$D_{\!\omega}(\vecx, \vecy) = \norm{}{\vecx - \vecy}^2/2$ and the
proximal projection is simply the Euclidean projection:
$\Pi_\omega(\vecx,\vxi) = \argmin_{\vecy\in\mathcal{X}} \norm{}{\vecx
  + \vxi - \vecy}$.

Thus, since the domain $\Bee$ of the primal variables $\vtheta$ is
equipped with the Euclidean norm
$\norm{}{\vtheta} \equiv \norm{2}{\vPhi\vtheta}$, we will use the
1-strongly convex distance-generating function
$\omega_{\Bee}(\vtheta) = \norm{}{\vtheta}^2/2$.  Since $\Bee$ is the
Euclidean ball under this norm, the center of $\Bee$ is
$\vtheta_0=\vec0$ and its ``diameter'' (actually the radius, in this
case) is $\Omega_\Bee = B$; $\vtheta_0$ is used as the initial value
of $\vtheta$ in \cref{alg:corestomp}.  The proximal projection is
\begin{align*}
  \Pi_\Bee(\vtheta, \vxi)
  &= \argmin_{\vtheta'\in\Bee}\, \norm{}{\vtheta + \vxi - \vtheta'}
  = \frac{\vtheta + \vxi}{\max\set{1, \norm{}{\vtheta+\vxi}/B}}.
\end{align*}

For the dual variables $\vlambda\in\Lambda_\gamma$, our
distance-generating function is a modification of the
\emph{unnormalized negentropy}
$h(\vlambda) = \sum_i \lambda_i(\log \lambda_i - 1)$.  It is
well-known that this function is 1-strongly convex on the set
$\set{\vlambda \ge \vec0 \given \norm{1}{\vlambda} \le 1}$
\citep[e.g.,][Theorem~5]{YuNegentropy}.  To achieve 1-strong convexity
on $\Lambda_\gamma$ (where $\norm{1}{\vlambda} = 1/(1-\gamma) > 1$),
we use a modified form of this function:
\begin{align*}
h_\gamma(\vlambda) &\defeq \frac{h((1-\gamma)\vlambda)}{{(1-\gamma)}^2}.
\end{align*}
It follows that
$\nabla h_\gamma(\vlambda) = \brck{\nabla
  h((1-\gamma)\vlambda)}/(1-\gamma)$.  Thus, defining
$D_\Lambda(\vlambda',\vlambda) \defeq
D_{h_\gamma}(\vlambda',\vlambda)$, we have
\begin{align*}
  D_\Lambda(\vlambda', \vlambda)
  &= \paren*{\frac{1}{1-\gamma}}
    \brck*{ \frac{h((1-\gamma)\vlambda')}{1-\gamma} - \frac{h((1-\gamma)\vlambda)}{1-\gamma}
    - \innerp{\nabla h((1-\gamma)\vlambda)}{\vlambda' - \vlambda}} \\
  &= \paren*{\frac{1}{1-\gamma}}
    \brck*{ \frac{h((1-\gamma)\vlambda')}{1-\gamma} - \frac{h((1-\gamma)\vlambda)}{1-\gamma}
    - \frac{\innerp{\nabla h((1-\gamma)\vlambda)}{(1-\gamma)\vlambda' - (1-\gamma)\vlambda}}{1-\gamma}} \\
  \intertext{Now, since $(1-\gamma)\vlambda, (1-\gamma)\vlambda'
  \in\Delta_{\States_+\times\Actions}$ and $h$ is strongly convex on this set, we have}
  D_\Lambda(\vlambda', \vlambda)
  &\ge \paren*{\frac{1}{1-\gamma}}^2 \frac{\norm{1}{(1-\gamma)\vlambda - (1-\gamma)\vlambda'}^2}{2}
    = \frac{\norm{1}{\vlambda - \vlambda'}^2}{2}.
\end{align*}
Thus we have shown that $h_\gamma$ is 1-strongly convex on
$\Lambda_\gamma$.  By the properties of the negentropy function, we
can verify that $h_\gamma(\vlambda)$ is minimized for
$\vlambda_0 = \vec1_\Actions/A \oplus \gamma\vec1_{mA}/(1-\gamma)mA$,
i.e.\@ the uniform distribution over actions concatenated with the
scaled uniform distribution over state-action pairs in
$\CoreStates\times\Actions$ --- this value is used as the initializer in
\cref{alg:corestomp}.  Conversely, $h_\gamma(\vlambda)$ is maximized
for $\bar\vlambda = \vece_{a} \oplus \gamma\vece_{s'a'}/(1-\gamma)$,
i.e.\@ when $\bar\vlambda$ is concentrated on $(s_0,a)$ for some
$a\in\Actions$ and some $(s',a')\in\CoreStates\times\Actions$.  We can
verify through a short calculation that
\begin{align*}
  D_\Lambda(\bar\vlambda, \vlambda_0) &= \frac{\ell}{{(1-\gamma)}^2},
  &\text{where } \ell &\defeq \log A + \gamma\log m \\
  \Omega_\Lambda &= \sqrt{2D_\Lambda(\bar\vlambda, \vlambda_0)} = \frac{\sqrt{2\ell}}{1-\gamma}.
\intertext{Finally, the proximal projection onto $\Lambda_\gamma$ with respect to $h_\gamma$ is}
  \Pi_\Lambda(\vlambda,\vrho)
  &= \frac{\tilde\vlambda_0}{\norm{1}{\tilde\vlambda_0}}
    \oplus \frac{\gamma\tilde\vlambda_*}{(1-\gamma)\norm{1}{\tilde\vlambda_*}},
  &\text{where } \tilde\vlambda &\defeq \exp(\log\vlambda + \vrho),
\end{align*}
where $\tilde\vlambda_{s_0} \defeq {[\tilde\lambda_{s_0a}]}_{a\in\Actions}$
and $\tilde\vlambda_* \defeq {[\tilde\lambda_{sa}]}_{s\in\CoreStates,a\in\Actions}$,
so that $\tilde\vlambda = \tilde\vlambda_{s_0} \oplus \tilde\vlambda_*$.

\subsubsection{The Composite Space}

We will now gather together the preceding results and use them to
construct a norm and distance-generating function on the composite
optimization domain
$Z = \Lambda_\gamma \times \Bee \subset \Real^{(1+m)A} \oplus
\Real^d$.  We closely follow the construction of \citet[\S
4.2]{JuditskyStochMP2011}.  First, we define the squared norm:
\begin{alignat*}{2}
  \norm{}{\vlambda \oplus \vtheta}^2
  &\defeq{}\; \Omega_\Lambda^{-2} \norm{1}{\vlambda}^2 + \Omega_\Bee^{-2} \norm{}{\vtheta}^2
    &&=\; \paren*{\frac{1-\gamma}{\sqrt{2\ell}} \norm{1}{\vlambda}}^2 + \paren*{\frac{1}{B} \norm{}{\vtheta}}^2. \\
  \shortintertext{The corresponding squared dual norm is}
  \norm{*}{\vrho \oplus \vxi}^2 &\defeq{}\;
  \Omega_\Lambda^2 \norm{\infty}{\vlambda}^2 + \Omega_\Bee^2 \norm{*}{\vtheta}^2
    &&=\; \paren[\bigg]{\frac{\sqrt{2\ell}}{1-\gamma}\norm{\infty}{\vlambda}}^2 + \paren{B\norm{}{\vtheta}}^2.
\end{alignat*}
Define the operator $F(\vlambda, \vtheta) \defeq -f_{\vlambda}(\vtheta)
\oplus f_{\vtheta}(\vlambda)$.  Its Lipschitz constant with respect to this norm is
\begin{align*}
  \norm{*}{F(\vlambda,\vtheta)}
  &= \sqrt{ \frac{2\ell\norm{\infty}{f_{\vlambda}(\vtheta)}^2}{{(1-\gamma)}^2} + B^2\norm{*}{f_{\vtheta}(\vlambda)}^2} \\
  &\le \sqrt{ \frac{8\ell B^2}{{(1-\gamma)}^2} + \paren*{\frac{2B}{1-\gamma}}^2 } \\
  &= \frac{2B\sqrt{1 + 2\ell}}{1-\gamma} = C.
\end{align*}
Similarly, define the estimator
$\hat F(\vlambda, \vtheta) \defeq - \hat f_{\vlambda}(\vtheta) \oplus
\hat f_{\vtheta}(\vlambda)$.  Its variance enjoys
\begin{align*}
  \Ex{\norm{*}{\hat F(\vlambda,\vtheta) - F(\vlambda,\vtheta)}^2} &\le C^2.
\end{align*}
We do not repeat the calculation because it is identical to the
previous one, since the variances of our estimators have the same
bounds as their squared Lipschitz constants.

Finally, we construct the composite distance-generating function:
\begin{align*}
  \omega_Z(\vlambda, \vtheta) &\defeq \frac{{(1-\gamma)}^2 h_{\gamma}(\vlambda)}{2\ell} + \frac{\norm{}{\vtheta}^2}{2B^2}.
\end{align*}
We can verify that this function is 1-strongly convex on $Z$ with
respect to the norm defined above, and that the diameter of $Z$ under
this function is $\Omega \le \sqrt{2}$.

\begin{proof}[Proof of \cref{lemma:corestomp-subopt}]
  We apply the result of \citet[Corollary~1]{JuditskyStochMP2011} to
  our setting, where the Lipschitz constant of $F$ is $C$, the
  variance of $\hat F$ is $C^2$, and the diameter of $Z$ is
  $\Omega \le \sqrt{2}$.  Then the result tells us that a suitable
  learning rate is $\eta = \inv{C}\sqrt{2/7T}$, as specified in
  \cref{thm:corestomp}, and the resulting bound on the expected
  duality gap after $T$ iterations is
  \begin{align*}
    \epsopt &\defeq \Ex{\delta_{\Bee}(\hat\vlambda,\hat\vtheta)} \le \frac{14C}{\sqrt{3T}}. \qedhere
  \end{align*}
\end{proof}

\end{document}
